\newtheorem{theorem}{Theorem}
\newtheorem{lemma}{Lemma}
\newtheorem{fact}{Fact}
\newtheorem{definition}{Definition}
\newtheorem{corollary}{Corollary}
\tikzset{
  pico/.style = {
    every node/.style = {
      draw,
      circle,
      semithick,
      inner sep = 0pt,
      minimum width = 0.7ex,
      fill = white
    },
    semithick
  },
    edge/.style = {
    semithick
  },
  arc/.style = {
    ->,
    semithick,
    >={[round,sep]Stealth}
  }
}
\newcommand*{\indep}{%
  \mathbin{%
    \mathpalette{\@indep}{}%
  }%
}
\newcommand*{\nindep}{%
  \mathbin{
    \mathpalette{\@indep}{/}%
  }%
}
\newcommand*{\@indep}[2]{%
  \sbox0{$#1\perp\m@th$}
  \sbox2{$#1=$}
  \sbox4{$#1\vcenter{}$}
  \rlap{\copy0}
  \dimen@=\dimexpr\ht2-\ht4-.2pt\relax
  \kern\dimen@
  \ifx\\#2\\%
  \else
    \hbox to \wd2{\hss$#1#2\m@th$\hss}%
    \kern-\wd2 %
  \fi
  \kern\dimen@
  \copy0 
}
\newcommand{\Pa}{\textit{Pa}} 
\newcommand{\Ne}{\textit{Ne}}
\definecolor{ba.yellow}{RGB}{252,190,18}
\definecolor{ba.gray}{RGB}{153,153,156}
\definecolor{ba.blue}{RGB}{6,123,164}
\definecolor{ba.red}{RGB}{213,96,98}
\definecolor{ba.orange}{RGB}{233,116,81}
\definecolor{ba.pine}{RGB}{67,154,134}
\definecolor{ba.green}{RGB}{0, 168, 107}
\definecolor{ba.lightgreen}{RGB}{196,247,161}
\definecolor{ba.violet}{RGB}{88, 53, 94}
\newcommand{\plot}[4][0cm]{
  \draw[semithick, color=#3] (0,0)
  \foreach [count=\x] \y in {#2}{
    -- (\x,\y) node[circle, inner sep = 0pt, minimum width=1mm, draw=#3, fill=#3!50] {}
  } node[right=0.25cm, yshift=#1] {\small#4};
}
\title{Efficient Enumeration of Markov Equivalent DAGs\thanks{Extended
  version of paper accepted to the Proceedings of the 37th AAAI
Conference on Artificial Intelligence (AAAI-23).}}
\author{
  Marcel Wienöbst,\textsuperscript{\rm 1}
  Malte Luttermann,\textsuperscript{\rm 2}
  Max Bannach,\textsuperscript{\rm 1}
  Maciej Li\'{s}kiewicz\textsuperscript{\rm 1}
}
\begin{document}
\maketitle\vspace*{-7.88mm}

\begin{abstract}
Enumerating the directed acyclic graphs (DAGs) of a Markov equivalence
class (MEC) is an important primitive in causal analysis. The central
resource from the perspective of computational complexity is the
delay, that is, the time an algorithm that lists all members of the
class requires between two consecutive outputs. Commonly used
algorithms for this task utilize the rules proposed by Meek (1995) or
the transformational characterization by Chickering (1995), both
resulting in 
superlinear delay. In this paper, we present the first
linear-time delay algorithm. On the theoretical side, we show that our
algorithm can be generalized to enumerate DAGs represented by models
that incorporate background knowledge, such as MPDAGs; on the practical side, we provide an efficient
implementation and evaluate it in a series of
experiments. Complementary to the linear-time delay algorithm, we also
provide intriguing
insights into Markov equivalence itself: All members of an MEC can be enumerated such that two successive DAGs have
structural Hamming distance at most three.
\end{abstract}

\section{Introduction}
Graphical causal models endow researchers with an intuitive and
mathematically sound language to infer causal relations between random
variables from observational and interventional data.  Directed
acyclic graphs (DAGs), whose edges encode direct causal influences between
the variables, belong to the most popular models and are used in many
areas of empirical
research~\citep{spirtes2000causation,rothman2008modern,pearl2009causality,koller2009probabilistic,Elwert2013}.
However, there is usually not a unique DAG that can be learned from
observational or limited experimental data as multiple models can encode the
same statistical properties. These DAGs form a \emph{Markov
  equivalence class} (MEC) and each of them explains the data equally
well~\citep{andersson1997characterization,pearl2009causality}.

Exploring the structural and quantitative properties of MECs
are challenging tasks in graphical causal analysis and, despite
extensive research efforts, several basic issues involving MECs fundamental to causal discovery remain
open as e.\,g., calculating the number of MECs on $n$ variables~\citep{gillispie2001enumerating,steinsky2003enumeration} or
enumerating them efficiently~\citep{chen2016enumerating}.

\begin{figure}
  \begin{tikzpicture}[scale=0.7]
    \node[inner sep = 2] (a) at (-0.5,-2) {$a$};
    \node[inner sep = 2] (b) at (1.5,-2) {$b$};
    \node[inner sep = 2] (c) at (-0.5,-3) {$c$};
    \node[inner sep = 2] (d) at (.5,-3) {$d$};
    \node[inner sep = 2] (e) at (1.5,-3) {$e$};
    \node[inner sep = 2] (f) at (-.5,-4) {$f$};
    \node[inner sep = 2] (g) at (1.5,-4) {$g$};
    
    \graph[use existing nodes, edges = {edge}] {
      a -- c -- f;
      b -- e;
    };
    
    \graph[use existing nodes, edges = {arc}] {
      b -- d -- a;
      g -- d -- c;
      d -- f;
    };

    \draw (2.25,-5) -- (2.25,0);
    
    \node[inner sep = 2] (a) at (3,0) {$a$};
    \node[inner sep = 2] (b) at (5,0) {$b$};
    \node[inner sep = 2] (c) at (3,-1) {$c$};
    \node[inner sep = 2] (d) at (4,-1) {$d$};
    \node[inner sep = 2] (e) at (5,-1) {$e$};
    \node[inner sep = 2] (f) at (3,-2) {$f$};
    \node[inner sep = 2] (g) at (5,-2) {$g$};

    \graph[use existing nodes, edges = {arc}] {
      b -- d -- a;
      g -- d -- c;
      d -- f;
      a -- c -- f;
      b -- e;
    };
    \node[inner sep = 2] (a) at (6,0) {$a$};
    \node[inner sep = 2] (b) at (8,0) {$b$};
    \node[inner sep = 2] (c) at (6,-1) {$c$};
    \node[inner sep = 2] (d) at (7,-1) {$d$};
    \node[inner sep = 2] (e) at (8,-1) {$e$};
    \node[inner sep = 2] (f) at (6,-2) {$f$};
    \node[inner sep = 2] (g) at (8,-2) {$g$};

    \graph[use existing nodes, edges = {arc}] {
      b -- d -- a;
      g -- d -- c;
      d -- f;
      c -- f;
      c -- a;
      b -- e;
    };
    \node[inner sep = 2] (a) at (9,0) {$a$};
    \node[inner sep = 2] (b) at (11,0) {$b$};
    \node[inner sep = 2] (c) at (9,-1) {$c$};
    \node[inner sep = 2] (d) at (10,-1) {$d$};
    \node[inner sep = 2] (e) at (11,-1) {$e$};
    \node[inner sep = 2] (f) at (9,-2) {$f$};
    \node[inner sep = 2] (g) at (11,-2) {$g$};

    \graph[use existing nodes, edges = {arc}] {
      b -- d -- a;
      g -- d -- c;
      d -- f;
      f -- c -- a;
      b -- e;
    };
    \node[inner sep = 2] (a) at (3,-3) {$a$};
    \node[inner sep = 2] (b) at (5,-3) {$b$};
    \node[inner sep = 2] (c) at (3,-4) {$c$};
    \node[inner sep = 2] (d) at (4,-4) {$d$};
    \node[inner sep = 2] (e) at (5,-4) {$e$};
    \node[inner sep = 2] (f) at (3,-5) {$f$};
    \node[inner sep = 2] (g) at (5,-5) {$g$};

    \graph[use existing nodes, edges = {arc}] {
      b -- d -- a;
      g -- d -- c;
      d -- f;
      a -- c -- f;
      e -- b;
    };
    
    \node[inner sep = 2] (a) at (6,-3) {$a$};
    \node[inner sep = 2] (b) at (8,-3) {$b$};
    \node[inner sep = 2] (c) at (6,-4) {$c$};
    \node[inner sep = 2] (d) at (7,-4) {$d$};
    \node[inner sep = 2] (e) at (8,-4) {$e$};
    \node[inner sep = 2] (f) at (6,-5) {$f$};
    \node[inner sep = 2] (g) at (8,-5) {$g$};

    \graph[use existing nodes, edges = {arc}] {
      b -- d -- a;
      g -- d -- c;
      d -- f;
      c -- f;
      c -- a;
      e -- b;
    };

    \node[inner sep = 2] (a) at (9,-3) {$a$};
    \node[inner sep = 2] (b) at (11,-3) {$b$};
    \node[inner sep = 2] (c) at (9,-4) {$c$};
    \node[inner sep = 2] (d) at (10,-4) {$d$};
    \node[inner sep = 2] (e) at (11,-4) {$e$};
    \node[inner sep = 2] (f) at (9,-5) {$f$};
    \node[inner sep = 2] (g) at (11,-5) {$g$};

    \graph[use existing nodes, edges = {arc}] {
      b -- d -- a;
      g -- d -- c;
      d -- f;
      f -- c -- a;
      e -- b;
    };
  \end{tikzpicture}
  \caption{A Markov equivalence class (MEC) on the right, which
    consists of six DAGs. This class is represented by the left CPDAG,
    which uniquely represents the MEC by including
    undirected edges if two DAGs differ in their direction.}
  \label{fig:mecexample}
\end{figure}
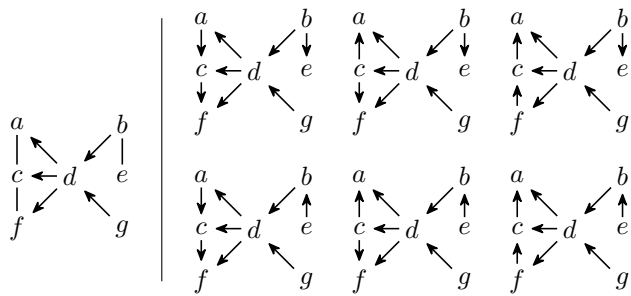

In this work, we study the properties of a
\emph{single} MEC, encoded as a \emph{completed partially directed acyclic graph} (CPDAG)
\cite{andersson1997characterization},
see Fig.~\ref{fig:mecexample} for an
example. The CPDAG
representation is often learned directly by causal discovery algorithms~\cite{spirtes2000causation,chickering2002optimal}
and, thus, it is of high practical value to offer efficient implementations
for their analysis. Recently, \citet{wienobst2021counting} have shown that computing 
the size of an MEC as well as uniformly sampling from it can be done in
polynomial time. We deal with the closely related problem of
\emph{enumerating the DAGs in an MEC} given its CPDAG, that is listing each
member of the MEC exactly once. This task is an
important primitive in causal analysis,
used as a subroutine to solve more complex problems in software
packages such as
\texttt{pcalg}~\cite{kalisch2012causal} and
\texttt{causaldag}~\cite{squires2018causaldag}. Enumeration of an MEC's members 
can be applied to solve many important downstream tasks in causal inference. 
For example, one can estimate the causal effect of the exposure variable on the outcome 
for each DAG in the equivalence class, which is learned from the observed data~\citep{maathuis2009estimating}.
One could also check for every DAG whether it conforms to additional domain
information or background knowledge in order to find the most plausible
DAG~\cite{Meek1995}, or select intervention targets to
distinguish between certain DAGs in the class~\cite{he2008active,hauser2012characterization}. 
While there are custom algorithms, which avoid the use of explicit enumeration (sometimes
by settling for approximate solutions), for many
of these cases, it remains a flexible and very general tool that can be
utilized even when other methods fail. The main drawback is, of course, its high computational cost, which we aim to address in this work.

Any method for enumerating the DAGs in an MEC
requires exponential time in the worst-case, due to the basic
fact that there are classes with exponential size.
A crucial feature from a computational perspective is the
\emph{delay:}
the algorithm's run-time between two consecutive output DAGs.
Another desirable property would be that the  subsequent DAGs smoothly 
change their structure, i.\,e., share most of their edge orientations,
which constitutes a more plausible enumeration from the causal point of view.
In the present work, we take both these aspects into account.

To the best of our knowledge, no study has been published that
performs a systematic analysis of the enumeration
problem, including its algorithmic aspects. One commonly used 
folklore algorithm utilizes the rules
proposed by~\citet{Meek1995}
to transform a causal graph (e.\,g., a CDPAG or PDAG) into its maximal 
orientation.
Applying these rules has the property that any remaining
undirected edge $a - b$ is oriented $a \rightarrow b$ in at least one and $a
\leftarrow b$ in another DAG represented by the graph. Consequently,
the DAGs can be enumerated by successively trying both possible
orientations. This yields a
polynomial delay algorithm, but the degree of the corresponding polynomial is rather
large since the Meek rules have to be applied at
\emph{every} step.
Another folklore approach is based upon the transformational characterization
of MECs given by~\citet{chickering1995transformational},
which states that two DAGs in the same MEC can be transformed into each
other by successive single-edge reversals. Hence, the MEC can be
explored through such edge reversals starting from an arbitrary DAG in
the class. The issue with this approach is
that already output DAGs need to be stored and every time a
new DAG is explored it has to be checked that it has not been
output before. This leads to a relatively large delay and memory demand. As both algorithms (which we call
\textsc{meek-enum} and \textsc{chickering-enum}) have not been
explicitly stated in a publication, we give a formal
description of both in Appendix~\ref{appendix:related} as
well as a rigorous analysis of their delay.

The main contribution of this paper is the first $O(n+m)$-delay algorithm that, for a given CPDAG representing an MEC, lists all
members of the class.\footnote{We denote the number of vertices by $n$ and the
  number of edges by $m$.} 
We also show that the algorithm can be generalized to enumerate DAGs 
represented by a PDAG or MPDAG  -- causal models incorporating background knowledge.
To achieve these results, we utilize the
Maximum Cardinality Search (MCS)~\cite{tarjan1984simple} originating from the chordal graph literature. 
In addition to the theoretical results, we give an efficient practical
implementation, which is significantly faster than implementations of
\textsc{meek-enum} and \textsc{chickering-enum}.

We also propose a complementary method
with the property that during enumeration subsequent DAGs gradually change their
structure. This method utilizes the results
by~\cite{chickering1995transformational}, but
performs the traversal of the MEC in a more refined way.
Using such an approach, it is possible to output all Markov equivalent 
DAGs in sequence with the property that two successive DAGs have \emph{structural Hamming
distance} (SHD) at most three. This result is tight in the
sense that there are MECs whose members cannot be
enumerated in a sequence with maximal distance at most two.
We  also show that our ideas can be used in the more
general setting of enumerating maximal ancestral graphs (MAGs) which 
encode conditional independence relations in DAG models with latent variables \cite{richardson2002ancestral}.

\section{Preliminaries}
A graph $G = (V, E)$ consists of a set of vertices $V$ and a set of
edges $E \subseteq V \times V$. An edge $u-v$ is undirected if
$(u, v), (v, u) \in E$ and directed $u \rightarrow v$ if $(u,v) \in E$
and $(v,u) \not\in E$.  Vertices linked by an edge (of any type) are
\emph{adjacent} and \emph{neighbors} of each other. We say that $u$ is
a \emph{parent} of $v$ if $u\rightarrow v$.  We denote by $\Pa(v)$
and $\Ne(v)$ the set of parents and neighbors of~$v$. The \emph{degree} $\delta(v)$ of
vertex $v$ is the number of its neighbors $|\Ne(v)|$.
The structural
Hamming distance (SHD) of $G_1 = (V,E_1)$ and
$G_2 = (V,E_2)$ is denoted by $\mathrm{shd}(G_1, G_2)$ and defined as
number of pairs $(a,b) \in V^2$ with differing edge relations, i.\,e.,
$E_1 \cap \{(a,b), (b,a)\} \neq E_2 \cap \{(a,b), (b,a)\}$.  Given a
graph $G = (V,E)$ and a vertex set $S$, the \emph{induced
  subgraph}~$G[S]$ contains the edges $E \cap (S \times S)$ from $G$
that are incident only to vertices in $S$. The union of a set of graphs
$\{G_1 = (V, E_1), \dots, G_k = (V, E_k)\}$ is the graph
$G = (V, \bigcup_{i=1}^k E_k)$.  A path $\pi$ between two vertices~$v_1$ and~$v_p$ is a sequence of distinct vertices
$\pi=\langle v_1,\ldots,v_p\rangle$ with $p\ge 2$ such that each
vertex $v_i$ is adjacent to $v_{i+1}$ for $i=1,\ldots,p-1$.  An \emph{undirected
connected component} is a maximal induced subgraph in which every pair of vertices is
connected by a path of undirected edges. 
A path of the form
$v_1\rightarrow v_2\rightarrow \ldots\rightarrow v_p$ is directed or
\emph{causal}. A graph is \emph{acyclic} if there is no directed path from a vertex
$u$ to $v$ with $v\rightarrow u$.  An acyclic graph with only directed
edges is called a DAG.  An undirected graph is called \emph{chordal}
if no subset of four or more vertices induces an undirected cycle.

The \emph{skeleton} of $G$, denoted by
$\mathrm{skel}(G)$, is a graph with the same
vertex set in which every edge is replaced by an undirected edge. A
\emph{$v$-structure} is
an ordered triple of vertices $(u,c,v)$ that induces the subgraph
$u \rightarrow c \leftarrow v$.
A Markov equivalence class (MEC) consists of DAGs
encoding the same set of conditional independence relations among the variables. 
\citep{verma1990equivalence,frydenberg1990chain} showed that 
two DAGs are Markov equivalent iff
they have the same skeleton and the same v-structures.
An MEC can be represented by a CPDAG (\emph{completed partially
  directed acyclic graph}), 
which is the union graph of the DAGs in the equivalence class it
represents. The set $[G]$ denotes all DAGs in the MEC represented by CPDAG $G$.

Subclasses of MECs can be represented by \emph{partially directed acyclic
  graphs} (PDAGs), which are restricted only in that they may not contain
a directed cycle, and \emph{maximally oriented PDAGs} (MPDAGs), which
consist of PDAGs closed under the four \emph{Meek rules}. Explicitly,
we only use the first Meek rule in this work, which states that an induced subgraph
$a \rightarrow b - c$ is oriented into $a \rightarrow b \rightarrow c$.
A formal description of all four Meek rules can be found in Appendix~\ref{appendix:meeksubsection}.

The input of the enumeration algorithms is the representation of an MEC
in form of its CPDAG~$G$. Hence, before approaching the enumeration task, it is
important, first and foremost, to understand how one can derive a DAG in $[G]$
from the CPDAG representation (this is also called the \emph{extension
  task} and such a DAG is called a \emph{consistent extension}
of $G$). To extend $G$ into a consistent DAG, it
is necessary to find an orientation of its undirected edges. This orientation needs to be acyclic and
contain the same v-structures as $G$. 

\begin{fact}[\citet{andersson1997characterization,He2015}] \label{fact:reduction}
  The undirected components of a CPDAGs are \emph{undirected and
    connected chordal graphs} (UCCGs). Acyclically orienting each UCCG
  independently, without introducing a v-structure, gives a DAG in $[G]$.
\end{fact}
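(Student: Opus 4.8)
The plan is to prove the two assertions in turn, using throughout that the CPDAG $G$ is by definition the union $\bigcup_{D^\ast \in [G]} D^\ast$, so that $G$ has the same skeleton and the same set of v-structures as every $D^\ast \in [G]$ (by the Verma--Pearl/Frydenberg characterization). Concretely, $u\to v$ is a directed edge of $G$ iff every $D^\ast\in[G]$ orients it $u\to v$, and a triple is a v-structure of $G$ iff it is a v-structure of every $D^\ast\in[G]$. I will write $C_1,\dots,C_p$ for the undirected components; that each $G[C_j]$ is connected is immediate from the definition, and that $G[C_j]$ contains no directed edge is part of the chain-graph structure of CPDAGs, which I would invoke from \citet{andersson1997characterization}.

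\emph{Chordality.} Suppose some $G[C_j]$ contained a chordless cycle $v_1 - v_2 - \dots - v_k - v_1$ with $k\ge 4$ (all these edges undirected in $G$). Fix any $D^\ast\in[G]$; it orients these $k$ edges, and being acyclic it induces a linear order on $\{v_1,\dots,v_k\}$, so some $v_i$ is last and both incident cycle-edges point into it: $v_{i-1}\to v_i \leftarrow v_{i+1}$ in $D^\ast$ (indices mod $k$). Since $k\ge 4$ and the cycle is chordless, $v_{i-1}$ and $v_{i+1}$ are distinct and non-adjacent, so $(v_{i-1},v_i,v_{i+1})$ is a v-structure of $D^\ast$, hence of $G$ --- contradicting that $v_{i-1}-v_i$ is undirected in $G$. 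Thus every $G[C_j]$ is chordal, i.e.\ a UCCG.

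\emph{The orientation yields a Markov equivalent graph.} Let $D_j$ be an acyclic orientation of the chordal graph $G[C_j]$ that introduces no v-structure, and let $D$ be obtained from $G$ by replacing each $G[C_j]$ with $D_j$. Then $\mathrm{skel}(D)=\mathrm{skel}(G)$ trivially, and every v-structure of $G$ is a v-structure of $D$ since $D$ keeps all directed edges of $G$. For the converse, let $a\to c\leftarrow b$ be a v-structure of $D$ with $a,b$ non-adjacent. If both $a-c$ and $b-c$ are undirected in $G$, then $a,b,c$ lie in a common component $C_j$ and $a\to c\leftarrow b$ is a v-structure inside $D_j$, contradicting the choice of $D_j$. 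If exactly one is undirected, say $a-c$ undirected and $b\to c$ directed in $G$ (the other case is symmetric), then some $D^\ast\in[G]$ orients $a\to c$, making $a\to c\leftarrow b$ a v-structure of $D^\ast$ and hence of $G$, contradicting that $a-c$ is undirected in $G$. Hence both $a\to c$ and $b\to c$ are directed in $G$, so this is a v-structure of $G$. Therefore $D$ and $G$ have identical skeletons and v-structures.

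\emph{Acyclicity and conclusion.} It remains to show $D$ is acyclic; this is the step that needs the most care and where I would rely on the chain-graph structure of CPDAGs. Since $G$ has no directed edges inside a component, every directed edge of $G$ runs between two distinct components, and the component digraph $\widehat G$ (a node per $C_j$, an arc $C_i\to C_j$ whenever $G$ has a directed edge from $C_i$ to $C_j$) is acyclic --- this is exactly the statement that a CPDAG is a chain graph \citep{andersson1997characterization}; equivalently, a directed cycle in $\widehat G$ would yield a partially directed cycle in $G$, which is impossible since $G$ is closed under the Meek rules. Take a topological order of $\widehat G$ and, inside each block $C_j$, the topological order of $D_j$; concatenating the block-orders gives a linear order of $V$ respected by every edge of $D$ (edges inside a component by $D_j$'s order, cross-component directed edges by $\widehat G$'s order). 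Hence $D$ is acyclic, so $D$ is a DAG with the skeleton and v-structures of $G$, i.e.\ $D\in[G]$. (Conversely, restricting any $D^\ast\in[G]$ to a component gives an acyclic, v-structure-free orientation, since no v-structure of $D^\ast$ can lie entirely inside a component; so the construction realizes exactly $[G]$.)
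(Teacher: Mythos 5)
Your proposal is correct, and the comparison here is somewhat one-sided: the paper does not prove this statement at all, it simply imports it from \citet{andersson1997characterization} and He et al., whereas you give an essentially self-contained derivation. Your two main steps are the standard ones and are sound: (i) a chordless cycle of length at least four inside an undirected component would force a sink on the cycle in any member DAG, hence a shared v-structure, hence two directed edges in the union graph --- contradiction, giving chordality; (ii) the case analysis on a v-structure $a\to c\leftarrow b$ of the reoriented graph $D$ (both edges undirected, exactly one undirected, both directed in $G$) correctly rules out new v-structures, and the blockwise topological sort gives acyclicity, so $D$ agrees with every member of $[G]$ in skeleton and v-structures and is therefore in $[G]$ by Verma--Pearl; your closing parenthetical even recovers the paper's follow-up remark that \emph{every} DAG in $[G]$ arises this way. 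What you do not prove from scratch is the chain-graph structure of CPDAGs --- that undirected components contain no directed edges and that the component quotient is acyclic --- which you import from the same source the paper cites; that is acceptable here, but be aware it is arguably the deepest ingredient of the Fact, so your proof is ``modulo AMP'' rather than fully elementary. One small caution: your aside that a partially directed cycle is ``impossible since $G$ is closed under the Meek rules'' is not a valid justification on its own --- a triangle with one directed and two undirected edges is Meek-closed yet contains a partially directed cycle (it is an MPDAG, not a CPDAG) --- so the chain-graph citation, not Meek closure, must carry that step.
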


The acyclic orientations without a v-structure of a chordal graph are
called \emph{AMOs} (acyclic moral orientations).
\emph{Every} DAG in $[G]$ may
be computed by finding appropriate AMOs for the UCCGs. In Fig.~\ref{fig:mecexample},
to obtain the DAG at the top left, the UCCG $a - c - f$ is oriented
as AMO $a \rightarrow c \rightarrow f$ and $b - e$ is
oriented as $b \rightarrow e$. For the former there are three AMOs,
for the latter two, corresponding to the six DAGs in the MEC (as the
UCCGs can be oriented independently). Thus, when tackling the
enumeration task for CPDAGs, it suffices to enumerate
the AMOs of a chordal graph.

\begin{fact}[Implicit in \citet{wienobst2021counting}] \label{fact:counting}
  Every AMO of a UCCG $G$ can be obtained by orienting the edges
  according to an MCS ordering ($a \rightarrow b$ if $a$ comes before $b$ in the ordering).
\end{fact}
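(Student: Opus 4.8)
The plan is to fix an arbitrary AMO $D$ of the UCCG $G$ and construct, vertex by vertex, an MCS ordering $\sigma=(v_1,\dots,v_n)$ that is simultaneously a topological ordering of $D$; here $\Pa_D(v)$ denotes the parents of $v$ in $D$. This suffices: since $D$ is an orientation of the skeleton of $G$, orienting each edge $\{a,b\}$ of $G$ from its earlier to its later endpoint in \emph{any} topological ordering of $D$ reproduces $D$ exactly. The only structural property of AMOs I will use is that $D$ has no v-structure, so $\Pa_D(v)$ induces a clique in $G$ for every $v$.

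I would run MCS greedily but restrict every choice to the \emph{available} vertices --- those whose $D$-parents all already lie in the set $S$ of previously chosen vertices (equivalently, the sources of $D[V\setminus S]$). Choosing only available vertices keeps $S$ closed under $\Pa_D$, which is exactly the invariant guaranteeing that $\sigma$ stays a topological ordering of $D$; it is trivially satisfied at the start, where $S=\emptyset$, every vertex ties at neighbor-count $0$, and the acyclic $D$ has a source. The heart of the proof is to show this restriction never conflicts with the MCS rule, i.e.\ that among the unchosen vertices of maximum neighbor-count in $S$ at least one is available.

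To show this, let $u\notin S$ attain the maximum count $c=|\Ne(u)\cap S|$. Since $S$ is closed under $\Pa_D$ and $u\notin S$, no $D$-child $w$ of $u$ can lie in $S$ (it would force $u\in\Pa_D(w)\subseteq S$), hence $\Ne(u)\cap S\subseteq\Pa_D(u)$. If $u$ is not already available, it has a $D$-parent $p\notin S$; since $\Pa_D(u)$ is a clique containing $p$ together with all $c$ vertices of $\Ne(u)\cap S$, and $p$ is distinct from each of the latter because it lies outside $S$, the vertex $p$ is adjacent to all $c$ of them, so $p$ has at least $c$ --- hence, by maximality, exactly $c$ --- neighbors in $S$. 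Repeating the argument with $p$ in place of $u$ and walking up the chain of $D$-parents, which must terminate because $D$ is acyclic, I arrive at an available vertex whose neighbor-count in $S$ is still $c$. Selecting it is a legal MCS step that also preserves closure of $S$ under $\Pa_D$.

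Iterating yields the full ordering $\sigma$, completing the proof. The one non-routine ingredient is the compatibility claim of the previous paragraph, and the hypothesis is essential exactly there: it is the v-structure-freeness of $D$ that makes $\Pa_D(u)$ a clique and lets the ``walk up the parents'' go through --- so chordality of $G$ enters only insofar as it is needed for AMOs to exist at all. (Throughout, ``MCS ordering'' is read as any ordering obtainable from MCS under some tie-breaking, since the construction picks among the maximum-count vertices.)
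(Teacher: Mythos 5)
Your proof is correct. Note that the paper itself does not prove this statement at all --- it is stated as a Fact attributed as ``implicit'' in the cited counting paper --- so your argument is a self-contained substitute rather than a variant of a proof given here. The construction is sound: you fix the AMO $D$, maintain the invariant that the chosen set $S$ is closed under $\Pa_D$, and the crucial compatibility claim (that some maximum-label vertex is a source of $D[V\setminus S]$) is established cleanly by the walk up the parent chain: since no child of an unchosen vertex $u$ can lie in $S$, all of $u$'s visited neighbors are $D$-parents of $u$; absence of v-structures makes $\Pa_D(u)$ a clique, so an unvisited parent $p$ inherits at least --- hence by maximality exactly --- the same label, and acyclicity forces the chain to end at an available vertex of maximum label. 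Choosing it is a legal MCS step and preserves the invariant, so the resulting order is simultaneously an MCS ordering and a topological ordering of $D$, which reproduces $D$ when edges are oriented from earlier to later endpoint. Your closing remark is also accurate and worth keeping: the argument uses only acyclicity and v-structure-freeness of $D$, so chordality of $G$ enters only through the existence of AMOs, and ``MCS ordering'' is correctly interpreted (as in the paper) as any ordering realizable under some tie-breaking.
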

An MCS ordering is a linear ordering of the vertices produced by
running the graph traversal algorithm \emph{Maximum Cardinality
  Search} (MCS)~\cite{tarjan1984simple}, which was originally proposed for testing
chordality of a graph in linear time.
Notably, the reverse direction holds as well, that is,
an MCS ordering will always produce an AMO of $G$.
A brief introduction to chordal graphs, AMOs and the MCS algorithm is
given in Appendix~\ref{appendix:mcssubsection}.

The output of an MCS
depends on the choices of the ``next'' vertex to visit in each
step of the graph traversal. This vertex is taken from the set of 
vertices with the largest number of
already visited neighbors (called the vertices with \emph{highest
  label}) and, from Fact~\ref{fact:counting}, we can conclude
that there are such choices, which may produce any AMO of a given
UCCG.

\section{Enumerating AMOs with Linear Delay}
The observations from the previous section yield a new approach:
Instead of producing a single AMO by choosing an arbitrary vertex in
each step of the MCS, we perform multiple choices and recur for each
of them~--~eventually listing all AMOs. There is one pitfall however:
We \emph{cannot} simply choose \emph{every} vertex from the
highest-label set one after the other as some graphs would be output
multiple times. Fig.~\ref{fig:disconnect} illustrates this issue: If
vertex $a$ has been visited, the next vertex could be $b$, $c$, $d$,
$e$, or $f$ (all have one visited neighbor, namely $a$). But choosing,
say, $b$ or $e$ may lead to the same AMO as the order of $b$ and $e$
after choosing $a$ is irrelevant.


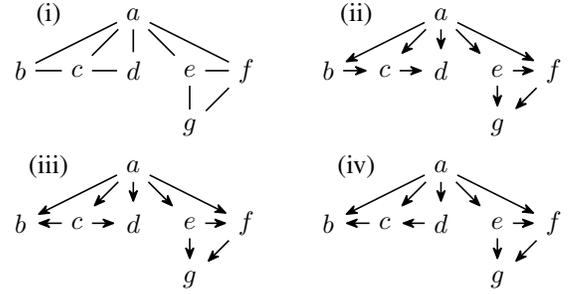
\begin{figure}
  \begin{center}
    \begin{tikzpicture}[xscale=0.75, yscale=0.75]
      \node (a) at (0,0) {$a$};
      \node (b) at (-2,-1) {$b$};
      \node (c) at (-1,-1) {$c$};
      \node (d) at (0,-1) {$d$};
      \node (e) at (1,-1) {$e$};
      \node (f) at (2,-1) {$f$};
      \node (g) at (1,-2) {$g$};
      \node (l) at (-1.5,0) {(i)};
      
      \graph[use existing nodes, edges = {edge}] {
        a -- b;
        a -- c;
        a -- d;
        a -- e;
        a -- f;
        b -- c;
        c -- d;
        e -- f;
        e -- g;
        f -- g;
      };
    \end{tikzpicture} \hspace*{0.5cm}
    \begin{tikzpicture}[xscale=0.75, yscale=0.75]
      \node (a) at (0,0) {$a$};
      \node (b) at (-2,-1) {$b$};
      \node (c) at (-1,-1) {$c$};
      \node (d) at (0,-1) {$d$};
      \node (e) at (1,-1) {$e$};
      \node (f) at (2,-1) {$f$};
      \node (g) at (1,-2) {$g$};
      \node (l) at (-1.5,0) {(ii)};
      
      \graph[use existing nodes, edges = {arc}] {
        a -- b;
        a -- c;
        a -- d;
        a -- e;
        a -- f;
        b -- c;
        c -- d;
        e -- f;
        e -- g;
        f -- g;
      };
    \end{tikzpicture} 
    \begin{tikzpicture}[xscale=0.75,yscale=0.75]
      \node (a) at (0,0) {$a$};
      \node (b) at (-2,-1) {$b$};
      \node (c) at (-1,-1) {$c$};
      \node (d) at (0,-1) {$d$};
      \node (e) at (1,-1) {$e$};
      \node (f) at (2,-1) {$f$};
      \node (g) at (1,-2) {$g$};
      \node (l) at (-1.5,0) {(iii)};
      
      \graph[use existing nodes, edges = {arc}] {
        a -- b;
        a -- c;
        a -- d;
        a -- e;
        a -- f;
        c -- b;
        c -- d;
        e -- f;
        e -- g;
        f -- g;
      };
    \end{tikzpicture} \hspace*{0.5cm}
    \begin{tikzpicture}[xscale=0.75,yscale=0.75]
      \node (a) at (0,0) {$a$};
      \node (b) at (-2,-1) {$b$};
      \node (c) at (-1,-1) {$c$};
      \node (d) at (0,-1) {$d$};
      \node (e) at (1,-1) {$e$};
      \node (f) at (2,-1) {$f$};
      \node (g) at (1,-2) {$g$};
      \node (l) at (-1.5,0) {(iv)};
      
      \graph[use existing nodes, edges = {arc}] {
        a -- b;
        a -- c;
        a -- d;
        a -- e;
        a -- f;
        c -- b;
        d -- c;
        e -- f;
        e -- g;
        f -- g;
      };
    \end{tikzpicture}
  \end{center}
  \caption{An example showing for the chordal graph (i) that, at a given step of the
  algorithm, not all vertices in the highest-label set can be
  chosen. If the MCS starts with
  vertex $a$, all neighbors $b, c, d, e, f$ have the same label,
  namely 1. While an MCS may choose any one of them, we \emph{cannot} choose \emph{all} one-after-the-other in our
  enumeration. Choosing $b$ or $e$ as the second vertex may yield the same
  AMO as $a, b, c, d, e, f, g$ and $a, e, f, g, b, c, d$ are both topological
  orderings of (ii). However, choosing highest-label
  vertices from one connected component in $G[\{b,c,d,e,f, g\}]$ such as $\{b,c,d\}$
  one-after-another will yield distinct AMOs (in (iii) and (iv) AMOs with
  $c$, $d$ chosen as second vertex are given).}
\label{fig:disconnect}
\end{figure}

This issue
can be addressed as follows: While the choice of the first vertex $v$
from the highest-label set is arbitrary, every other vertex
$x$ has to be connected to $v$ in the remaining graph
(the induced subgraph over the unvisited vertices). If they are
connected, then the order of $v$ and $x$ matters.
Otherwise, choosing $x$ instead of $v$ would lead to duplicate AMOs
being output. In our example, this means that if~$b$ is the first
considered vertex with highest label after $a$ has been visited, the
other choices we would consider are $c$ and~$d$ as these are the
vertices reachable from $b$ in the induced subgraph over the unvisited
vertices.

\begin{lemma}\label{lemma:amoconn}
  Given a chordal graph $G = (V, E)$ and the sequence of previously visited vertices
  $\tau$ with $|\tau| = k < n$ produced by an MCS with current
  highest-label set $S$.
  \begin{enumerate}
    \item\label{lemma:amoconn:a} If $x,y \in S$ are connected in $G[V
      \setminus \tau]$, the
      set of AMOs produced by choosing
      $x$ next is disjoint from the set produced by choosing $y$ next.
    \item\label{lemma:amoconn:b} If $x,y \in S$ are unconnected in
      $G[V \setminus \tau]$, any
      AMO produced by choosing $y$ as the next vertex can be
      produced by choosing a vertex in $S$ connected to $x$ in $G[V \setminus
      \tau]$ next.
  \end{enumerate}
\end{lemma}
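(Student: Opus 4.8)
The plan is to read both parts through Fact~\ref{fact:counting}: the AMOs ``produced by choosing $v$ next'' are exactly the orientations induced by MCS orderings that have $\tau$ as a prefix immediately followed by $v$. I would first record three elementary observations. (i) If an MCS ordering puts $v$ at position $k+1$, then in the induced AMO every neighbour of $v$ outside $\tau$ is a child of $v$; equivalently $v$ is a source of the induced DAG restricted to $V\setminus\tau$. (ii) Such an ordering is a topological ordering of the induced DAG, so $\tau$ is a prefix of it and every $\tau$-to-$(V\setminus\tau)$ edge points into $V\setminus\tau$. (iii) The MCS label of $v\in V\setminus\tau$ equals $|\Ne(v)\cap\tau|$ plus the number of already-visited neighbours of $v$ lying in $v$'s own connected component of $G[V\setminus\tau]$; in particular it is unaffected by the visiting order inside other components. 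Write $\ell$ for the common (maximum) label of the vertices in $S$ at step $k+1$.

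For Part~\ref{lemma:amoconn:a} I would argue by contradiction. Suppose one AMO $D$ is produced both by choosing $x$ next, via an MCS ordering $\sigma_x$, and by choosing $y$ next, via $\sigma_y$. Take a \emph{shortest} path $x=z_0-z_1-\dots-z_t=y$ in $G[V\setminus\tau]$, which is then chordless. If $t=1$, observation (i) forces $x\to y$ (from $\sigma_x$) and $y\to x$ (from $\sigma_y$) simultaneously, which is impossible. If $t\ge 2$, observation (i) gives $z_0\to z_1$ and $z_{t-1}\leftarrow z_t$ in $D$, so walking the path there must be an index $i$ with $z_{i-1}\to z_i\leftarrow z_{i+1}$; since the path is chordless, $z_{i-1}$ and $z_{i+1}$ are non-adjacent, so this is a v-structure, contradicting that $D$ is an AMO. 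Hence the two AMO sets are disjoint.

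For Part~\ref{lemma:amoconn:b}, let $D$ be produced by choosing $y$ next via $\sigma_y=\tau,y,\dots$, let $C_1,\dots,C_r$ be the connected components of $G[V\setminus\tau]$, and let $C$ be the one containing $x$. For each $j$ let $\rho_j$ be the subsequence of $\sigma_y$ on $C_j$. By observation (iii) and the fact that $\sigma_y$ always picks a globally maximum-label vertex, each $\rho_j$ is a legal MCS ordering of $G[\tau\cup C_j]$ extending $\tau$, and $\tau\cdot\rho_j$ induces $D$ restricted to $\tau\cup C_j$. From this I would extract two facts: after consuming any prefix of $\rho_j$ its next vertex is a maximum-label unvisited vertex of $C_j$, so the overall maximum label among unvisited vertices is always attained at the current front of some $\rho_j$ --- hence \emph{any} interleaving of $\rho_1,\dots,\rho_r$ that at each step advances a component whose front currently realises the overall-maximum front-label is a legal MCS ordering of $G$; and, the components being pairwise non-adjacent, every such interleaving induces exactly $D$. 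Finally, the first vertex $w$ of $\rho_C$ is a maximum-label vertex of $C$ at step $k+1$, so its label equals $\max_{v\in C}|\Ne(v)\cap\tau|$; since $x\in C$ already has the overall maximum label $\ell$, this value is $\ell$, so $w\in S$, and $w$ is connected to $x$ in $G[V\setminus\tau]$. Starting the interleaving with $w$ is a legal first MCS step because $\ell$ is maximal among all fronts, and completing it arbitrarily yields an MCS ordering producing $D$ with $w\in S\cap C$ at position $k+1$, as required.

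I expect the only real obstacle to lie in Part~\ref{lemma:amoconn:b}. The naive idea --- schedule \emph{all} of $C$ right after $\tau$ and then the remainder of $\sigma_y$ --- is incorrect: edges emanating from $\tau$ can give a vertex of a different component a strictly higher label than the deep vertices of $C$, so MCS is forbidden from staying inside $C$. The fix is to recognise, via observation (iii), that MCS on $G[V\setminus\tau]$ advances the components ``in parallel'': a vertex's label depends only on the visiting order of its own component, so the per-component runs $\rho_j$ may be interleaved in any order compatible with the ``highest current front wins'' rule, in particular letting the run of $C$ go first. Making precise that $\rho_j$ is itself a legal MCS run of $G[\tau\cup C_j]$ and that such interleavings are \emph{exactly} the legal MCS runs of $G$ producing $D$ is the part that needs care; Part~\ref{lemma:amoconn:a}, by contrast, is the short observation that a shortest such connecting path would contain a v-structure.
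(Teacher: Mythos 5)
Your proposal is correct, and for both parts it follows the paper's route: part 1 is the same shortest-(hence chordless-)path argument, with the forced cascade of orientations phrased as a collision index producing a v-structure. For part 2 the paper's own proof is a one-liner -- ``any topological ordering with prefix $\tau$ can be rewritten as $\tau, C_{\pi(1)},\dots,C_{\pi(k)}$'' -- and, read literally, that block-by-block reordering need not be a valid MCS ordering: exactly as you observe, a vertex of another component may acquire (from its $\tau$-neighbours) a strictly higher label than the yet-unvisited vertices deep inside $x$'s component, so the search cannot simply finish one component before touching the others. Your interleaving argument repairs this: since a vertex's label is unaffected by visits in other components, each per-component subsequence of $\sigma_y$ is a legal MCS continuation on $\tau$ together with that component, any interleaving that always advances a front of globally maximum label is a legal MCS run of $G$, and all such interleavings induce the same AMO because distinct components are non-adjacent; you also verify that the front of $x$'s component has label $\ell$ at step $k+1$ and hence lies in $S$, which the paper leaves implicit. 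So the decomposition idea is the same, but your version of part 2 is the more careful one -- it establishes that the reordered production is genuinely MCS-producible, which is what the lemma (and the correctness proof of \textsc{mcs-enum}) actually needs.
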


\begin{proof}
  We show item~\ref{lemma:amoconn:a} first and let $p$ be the shortest path between $x$ and $y$ in
  $G[V \setminus \tau]$. Since AMOs do not contain v-structures, any AMO
  choosing $x$ before $y$ must orient $p$ as
  \(
  x\rightarrow \dots \rightarrow y
  \) while any AMO with $y$ next yields
  \(
  x \leftarrow \dots \leftarrow y
  \).

  For item~\ref{lemma:amoconn:b} let $C_1, \dots, C_k$ be the
  connected components of $G[V \setminus \tau]$ with $x \in C_1$. Any
  topological ordering with prefix~$\tau$ can be rewritten as
  $\tau, C_{\pi(1)}, \dots, C_{\pi(k)}$ for an arbitrary permutation
  $\pi$~--~in particular for $\pi = \mathrm{id}$.
\end{proof}

The following Lemma provides a simplified way of testing whether two
vertices of highest label are connected.
\begin{lemma}\label{lemma:subgraph:over:s}
  Given a connected chordal graph $G = (V, E)$ and a sequence of
  visited vertices $\tau$ produced by an MCS with the current
  highest-label set $S$. Vertices $x,y \in S$ are connected in $G[S]$
  iff they are connected in $G[V \setminus \tau]$.
\end{lemma}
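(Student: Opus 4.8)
The plan is to prove the two directions of the equivalence separately, the forward one being essentially free. Since every vertex of $S$ is unvisited, $S \subseteq V \setminus \tau$, so $G[S]$ is an induced subgraph of $G[V\setminus\tau]$ and any path in $G[S]$ is also a path in $G[V\setminus\tau]$; hence connectivity in $G[S]$ implies connectivity in $G[V\setminus\tau]$. The whole content of the lemma is therefore the converse: if $x,y \in S$ are connected in $G[V\setminus\tau]$, they are already connected in $G[S]$.

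For the converse I would argue that a \emph{shortest} path $\pi = \langle x = w_0, w_1, \dots, w_k = y\rangle$ in $G[V\setminus\tau]$ in fact uses only vertices of $S$, i.e.\ only vertices whose number of visited neighbours equals the maximal label $\ell$; this immediately gives the claim. Two structural ingredients are needed. First, for every unvisited vertex $v$ the set $N(v)\cap\tau$ of its already-visited neighbours is a \emph{clique}: this is the standard fact that the reverse of an MCS ordering is a perfect elimination ordering (see Appendix~\ref{appendix:mcssubsection}), applied at the moment $v$ is eventually visited, together with the observation that $N(v)\cap\tau$ only shrinks when one passes to an earlier prefix. Second, along the induced path $\pi$ the sets $K_j := N(w_j)\cap\tau$ behave monotonically: consecutive ones are nested, and --- crucially, using that $\tau$ is an MCS prefix and not an arbitrary vertex set --- one cannot ``dip'' strictly below $\ell$ between two vertices of label $\ell$.

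To establish the second ingredient I would suppose, for contradiction, that a maximal block $w_a,\dots,w_b$ of $\pi$ has label $<\ell$ while $w_{a-1},w_{b+1}\in S$. One picks witnesses $z \in K_{a-1}\setminus K_a$ and $z' \in K_{b+1}\setminus K_b$ (these exist because $|K_{a-1}| = |K_{b+1}| = \ell$ strictly exceeds the labels inside the block); a short chordality argument on a triangulated cycle shows $z$ is adjacent to none of $w_a,\dots,w_{b+1}$, and symmetrically $z'$ to none of $w_{a-1},\dots,w_b$. One then forms a cycle out of the segment $w_{a-1},\dots,w_{b+1}$ of $\pi$, the edges from its ends to $z$ and $z'$, and a shortest $z$--$z'$ path inside the connected graph $G[\tau]$, and uses that a cycle in a chordal graph admits at least two ``ears'': the $\pi$-segment is induced, the $\tau$-segment is a shortest path, and the two junction vertices are excluded precisely by the choice of $z,z'$, leaving no room for ears. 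I expect the main obstacle to be exactly this last step --- pinning down where the ears of that cycle can sit, and in particular excluding ears at the endpoints of the $G[\tau]$-portion, which is where the MCS structure rather than mere chordality must be invoked (an arbitrary $\tau$ makes the statement false). A clean fallback, should the global argument become unwieldy, is to first reduce to the case that $G[V\setminus\tau]$ is connected by replacing $G$ with $G[\tau\cup C]$ for the relevant component $C$ --- this keeps $\tau$ an MCS prefix and makes $S$ the full highest-label set --- and then to run the whole argument as an induction on $|\tau|$, carefully tracking how the highest-label set changes when one more vertex is visited.
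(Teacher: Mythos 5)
Your forward direction, your reduction to the claim that a shortest $x$--$y$ path in $G[V\setminus\tau]$ cannot leave $S$, your clique property of the visited neighbourhoods (via extending the MCS to completion), and your observation that a witness $z\in K_{a-1}\setminus K_a$ is non-adjacent to all later path vertices by a chordless-cycle argument are all correct and in fact parallel parts of the paper's proof. The problem is that the heart of the argument --- your ``second ingredient'' --- is never actually established. Both the nestedness of consecutive $K_j$ and the impossibility of a label-$<\ell$ block are exactly the content of the lemma, and the cycle-plus-ears argument you sketch for the block cannot close as stated: after gluing the path segment $w_{a-1},\dots,w_{b+1}$ to a shortest $z$--$z'$ path in $G[\tau]$, chordality only forbids chords inside each of the two portions and chords from $z,z'$ to the wrong side of the $\pi$-segment, but it permits chords from \emph{interior} visited vertices of the $z$--$z'$ path to $w_{a-1},\dots,w_{b+1}$, and these alone can fully triangulate the cycle (e.g.\ a single visited vertex $q$ adjacent to $z$, $z'$ and to all of $w_{a-1},w_a,w_{a+1}$ triangulates the resulting $6$-cycle). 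So no contradiction follows from chordality, and you yourself note that ``the MCS structure must be invoked'' at precisely this point --- but you give no mechanism for invoking it; the fallback induction on $|\tau|$ is likewise only a plan. This is a genuine gap, not a presentational one.

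For comparison, the paper closes exactly this gap by switching tools in the hard case. It takes a vertex $p_i\notin S$ on the shortest path and witnesses $z\in P(x)\setminus P(p_i)$, $z'\in P(y)\setminus P(p_i)$. If one witness is a common neighbour of $x$ and $y$, a chord-counting argument on the single cycle $x-p_1-\dots-p_\ell-y-z-x$ suffices (only $\ell-1$ possible chords where $\ell$ are needed). Otherwise it abandons pure chordality: since $x\in S$, some MCS continuation starts $\tau+x$ and therefore yields an AMO; absence of v-structures forces the induced path to be ordered $x<p_1<\dots<p_\ell<y$, and then the visited vertex $z'$, adjacent to $y$ and ordered before everything unvisited, is forced step by step to be adjacent to $p_\ell,p_{\ell-1},\dots$ down to $p_i$, contradicting $z'\notin P(p_i)$. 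Some propagation argument of this kind (v-structure freeness of an MCS-generated orientation), or an equivalent use of the prefix property, is what your proof is missing and would need to supply where you currently only flag the obstacle.
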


\begin{algorithm}[t]
  \caption{Linear-time delay enumeration algorithm \textsc{mcs-enum}
    for listing all AMOs of a chordal graph.}
  \label{alg:mcs}
  \DontPrintSemicolon
  \SetKwInOut{Input}{input}\SetKwInOut{Output}{output}
  \SetKwFor{Rep}{repeat}{}{end}
  \Input{A UCCG $G = (V,E)$.}
  \Output{All AMOs of $G$.}
  \SetKwFunction{FEnum}{enumerate}
  \SetKwProg{Fn}{function}{}{end}
  \SetKwRepeat{Do}{do}{while}

  \BlankLine
  $A :=$ array of $n$ initially empty sets \; \label{line:startinit}
  $\tau :=$ empty list \;
  $A[0] := V$ \;
  $\FEnum(G, A, \tau)$ \; \label{line:endinit}
  \BlankLine
  
  \Fn{\FEnum{$G$, $A$, $\tau$}}{
    \If{$|\tau| = n$ \label{line:start}}{Output AMO of $G$ according
      to ordering
      $\tau$ \label{line:out}}
    $i := \text{highest index of non-empty set in } A$ \;
    $v := \text{any vertex from } A[i]$ \;
    $x := v$ \;

    \Do{$\text{$R$ is non-empty}, x := \text{pop}(R)$ \label{line:while}}{
      \label{line:do}
      $\text{delete } x \text{ from } A[i]$ \; \label{line:removefromAi}
      $\text{append } x \text{ to } \tau$ \; \label{line:appendtotau}
      \ForEach{$w \in (\Ne(x) \setminus \tau)$ \label{line:startfor1}}{
        $j := \text{index of set in } A \text{ that } w \text{ is
          in}$ \;
        $\text{delete } w \text{ from } A[j]$ \;
        $\text{insert } w \text{ in } A[j+1]$ \;
      } \label{line:endfor1}
      \FEnum{$G$, $A$, $\tau$}\; \label{line:rec}
      \ForEach{$w \in (\Ne(x) \setminus \tau)$ \label{line:startfor2}}{
        $j := \text{index of set in } A \text{ that } w \text{ is
          in}$ \;
        $\text{delete } w \text{ from } A[j]$ \;
        $\text{insert } w \text{ in } A[j-1]$ \;
      } \label{line:endfor2}
      $\text{insert } x \text{ in } A[i]$ \;
      $\text{pop } x \text{ from } \tau$ \; \label{line:endreversals}
      \If{$x = v$}{
        $R:= \{ a \mid a \text{ reachable from } v \text{ in } G[A[i]]
        \}$ \;         \label{line:reach}
      }
      
    }
  }
\end{algorithm}

Algorithm \textsc{mcs-enum} utilizes these results to enumerate the
AMOs of a chordal graph. Lines~\ref{line:startinit}
to~\ref{line:endinit} in Algorithm~\ref{alg:mcs} initialize the
necessary data structures for an MCS (in particular an array $A$,
which includes the set of vertices with $i$ visited neighbors at
index $A[i]$). Afterward, the recursive function \texttt{enumerate} is
called. It first chooses any vertex $v$ from the vertices with most
visited neighbors (just as a normal MCS). This vertex is then removed
from $A[i]$ in line~\ref{line:removefromAi}, it is appended to $\tau$
in line~\ref{line:appendtotau}, which stores the traversal sequence
(later used to impose edge directions based on its ordering) and the
neighbors are moved from $A[j]$ to $A[j+1]$ in
lines~\ref{line:startfor1} to~\ref{line:endfor1}. The recursive
calls to \texttt{enumerate} are repeated until, at some point, the first AMO
is output in line~\ref{line:out} (up to this point there is no
difference to an MCS).

After the recursive call, however, the changes are reversed in
lines~\ref{line:startfor2} to~~\ref{line:endreversals} and then, in
line~\ref{line:reach}, the vertices reachable from
$v$ are computed. These are then iterated in the do-while loop,
meaning we also recursively go through the AMOs produced by choosing
those vertices instead of $v$, as discussed above. Note that reachability in
line~\ref{line:reach} is only performed once in a call of
\texttt{enumerate} for the initial vertex $v$.

\begin{theorem}\label{theorem:mcsenumcorrectness}
  Given a chordal graph $G$, \textsc{mcs-enum} enumerates all
  AMOs of $G$.
\end{theorem}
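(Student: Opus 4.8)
The plan is to prove, by induction on the number $n-|\tau|$ of not-yet-visited vertices, the following strengthening: for every state $(A,\tau)$ reached by a sequence of $|\tau|$ legal MCS steps on $G$, the call $\texttt{enumerate}(G,A,\tau)$ outputs---each exactly once---precisely those AMOs of $G$ that arise from some MCS ordering of $G$ whose first $|\tau|$ vertices are $\tau$. Applying this to the initial call with $\tau$ empty and invoking both directions of Fact~\ref{fact:counting} (every AMO comes from some MCS ordering, and, conversely, every MCS ordering yields an AMO) shows that \textsc{mcs-enum} lists exactly the AMOs of $G$ without repetition, which is the theorem. Throughout we assume $G$ is a UCCG; a disconnected chordal graph is handled by running the procedure on each connected component and combining, since the AMOs of a chordal graph are exactly the componentwise unions of the AMOs of its components.

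First I would pin down the bookkeeping invariant that drives everything: whenever $\texttt{enumerate}(G,A,\tau)$ is entered, $A[k]$ contains exactly the unvisited vertices with exactly $k$ neighbors in $\tau$, so the top nonempty cell $A[i]$ is precisely the highest-label set $S$ of the MCS after $\tau$. This holds for the initial call and is preserved by the forward label shifts in lines~\ref{line:startfor1}--\ref{line:endfor1} and undone by the mirrored shifts in lines~\ref{line:startfor2}--\ref{line:endreversals}; hence every vertex the do--while loop appends to $\tau$ is a legal next MCS choice and produces again a state of the required form. In particular, any $\tau$ that reaches length $n$ is a genuine MCS ordering, so by the converse half of Fact~\ref{fact:counting} line~\ref{line:out} emits an AMO; this already gives soundness, and it settles the base case $|\tau|=n$, where the only ordering extending $\tau$ is $\tau$ itself and it contributes exactly one AMO.

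For the inductive step, let $v$ be the first selected vertex and $T$ the vertex set of the connected component of $v$ in $G[S]=G[A[i]]$. The reachability computation in line~\ref{line:reach} together with the do--while loop recurses on $\tau\cdot x$ for each $x\in T$, once per $x$. By the induction hypothesis each such call outputs exactly once the AMOs extending $\tau\cdot x$, so it remains to check that (i) $\bigcup_{x\in T}\{\text{AMOs extending }\tau\cdot x\}$ equals the set of AMOs extending $\tau$, and (ii) these sets are pairwise disjoint. For (ii): any $x,x'\in T$ are connected in $G[S]$, hence (Lemma~\ref{lemma:subgraph:over:s}) connected in $G[V\setminus\tau]$, so Lemma~\ref{lemma:amoconn}, item~\ref{lemma:amoconn:a}, forces the two AMO sets to be disjoint; combined with the no-repetition-within-a-branch part of the induction, this also rules out duplicate outputs. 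In (i) the inclusion ``$\subseteq$'' is immediate since $T\subseteq S$. For ``$\supseteq$'', take an AMO $\alpha$ witnessed by an MCS ordering whose $(|\tau|+1)$-st vertex is $y\in S$: if $y\in T$ we are done; otherwise $y$ is not connected to $v$ in $G[S]$, hence (Lemma~\ref{lemma:subgraph:over:s}) not in $G[V\setminus\tau]$, so Lemma~\ref{lemma:amoconn}, item~\ref{lemma:amoconn:b}, re-produces $\alpha$ from an MCS ordering whose next vertex is some element of $S$ connected to $v$ in $G[V\setminus\tau]$, and that element lies in $T$ by Lemma~\ref{lemma:subgraph:over:s}, so $\alpha$ extends $\tau\cdot x$ for some $x\in T$. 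This closes the induction.

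The part I expect to be the main obstacle is exactly this completeness direction ``$\supseteq$'': justifying that confining the branching to a single connected component of $G[S]$---which is all \textsc{mcs-enum} explores---loses no AMO. All the work there is carried by Lemma~\ref{lemma:amoconn}, item~\ref{lemma:amoconn:b} (reorder the remaining components so $v$'s comes first) and Lemma~\ref{lemma:subgraph:over:s} (so that the locally testable ``same component in $G[S]$'' is a faithful proxy for ``same component in $G[V\setminus\tau]$'', which is what line~\ref{line:reach} relies on). The delicate point is to phrase the inductive claim in terms of the AMOs \emph{producible} by orderings extending $\tau$, not in terms of what the algorithm happens to emit, so that Lemma~\ref{lemma:amoconn} can be applied at all; once the claim is stated that way the argument reduces to the three-way case analysis above. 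The remaining obligations---that the reversal block in lines~\ref{line:startfor2}--\ref{line:endreversals} restores $A$ and $\tau$ exactly, and that line~\ref{line:reach} is evaluated only for $x=v$---are routine and I would dispatch them while verifying the bookkeeping invariant.
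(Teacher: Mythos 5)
Your proposal is correct and takes essentially the same route as the paper's proof: soundness because every emitted ordering is a genuine MCS ordering (Fact~\ref{fact:counting}), completeness via item~\ref{lemma:amoconn:b} of Lemma~\ref{lemma:amoconn} together with Lemma~\ref{lemma:subgraph:over:s} to justify pruning to the component of $v$ in $G[S]$, and no-duplicates via item~\ref{lemma:amoconn:a}. The only difference is presentational: you package the argument as an explicit induction on $n-|\tau|$ with a stated bookkeeping invariant, which the paper leaves informal.
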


\begin{proof}
  Every DAG output in line~\ref{line:out} is
  an AMO, as it is generated by a linear ordering
  produced by an MCS. This holds as any chosen vertex is from
  the highest-index non-empty set in~$A$.
  To see that the algorithm outputs \emph{all} AMOs of $G$, recall that every AMO can be
  represented by an MCS ordering by Fact~\ref{fact:counting}.
  In principle, Algorithm~\ref{alg:mcs}
  considers all possible courses an MCS could take, except the pruning
  of vertices unreachable from $v$. By
  Lemma~\ref{lemma:subgraph:over:s}, it suffices to inspect only
  connected vertices in $G[A[i]]$ and by item~\ref{lemma:amoconn:b} of
  Lemma~\ref{lemma:amoconn} those
  unreachable vertices would not lead to any new AMO.

  Finally, we argue that no AMO is output twice. Every output is
  obtained by constructing a directed graph based on the
  ordering given by the graph traversal.
  Assume for the sake of
  contradiction that we have two such sequences $\tau_1$ and $\tau_2$ representing the same
  AMO. Let $x$ and~$y$ be the vertices in $\tau_1$ and $\tau_2$ at the first
  differing position, respectively.
  Note that $x$ and $y$ are connected and, hence, by item~\ref{lemma:amoconn:a} of
  Lemma~\ref{lemma:amoconn} it follows that $\tau_1$ and $\tau_2$
  yield different AMOs.
\end{proof}


\begin{theorem} \label{theorem:wcdelay}
  \textsc{mcs-enum} has worst-case delay $O(n+m)$.
\end{theorem}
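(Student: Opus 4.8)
The plan is to argue that the total work performed between two consecutive outputs (two successive executions of line~\ref{line:out}) is bounded by $O(n+m)$. The key observation is that the recursion tree of \texttt{enumerate} has a specific shape: each internal node corresponds to a call in which some vertex is appended to $\tau$, and the leaves correspond to outputs. Between two consecutive leaves, the algorithm walks up from one leaf to a common ancestor and then down to the next leaf. I would bound separately (a) the work done while unwinding the recursion (the reversal loops in lines~\ref{line:startfor2}--\ref{line:endreversals}, the reachability computation in line~\ref{line:reach}, and the bookkeeping of the do-while loop), and (b) the work done while descending into the next output.

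First I would establish that a single ``step'' of the algorithm — entering \texttt{enumerate}, choosing $v$, executing the body of the do-while loop once (push $x$, update neighbors' labels, recurse, pop, reverse updates), and possibly computing reachability once — costs $O(1 + \delta(x))$ plus, for the very first iteration, the cost of the reachability computation from $v$ in $G[A[i]]$. The label updates in lines~\ref{line:startfor1}--\ref{line:endfor1} and \ref{line:startfor2}--\ref{line:endfor2} touch exactly the unvisited neighbors of $x$, so they cost $O(\delta(x))$; moving a vertex between adjacent buckets $A[j]$ and $A[j\pm1]$ is $O(1)$ with the right data structure (doubly linked lists with position pointers). The reachability set $R$ in line~\ref{line:reach} is computed by a graph search restricted to $G[A[i]]$, whose cost is $O(|A[i]| + \text{edges within } A[i])$; crucially this is charged only once per call of \texttt{enumerate}, i.e.\ once per vertex on the current root-to-leaf path, not once per iteration of the do-while loop.

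Next I would sum these costs along the path traversed between two consecutive outputs. Going down to an output, the vertices appended to $\tau$ are distinct and form a prefix of an MCS ordering, so the $\sum \delta(x)$ over a full root-to-leaf descent is at most $\sum_{v\in V}\delta(v) = 2m$, and the lengths of the paths are at most $n$; likewise the reachability computations along one root-to-leaf path sum to $O(n+m)$ since the sets $A[i]$ at successive recursion depths are nested subsets of $V$ with disjoint ``newly removed'' portions — more precisely, at depth $k$ the relevant induced subgraph is $G[V\setminus\tau]$ with $|\tau|=k$, and a vertex/edge is searched over at each depth it survives, but one must be slightly careful here. The unwinding half is symmetric: reversing the updates for vertex $x$ costs $O(\delta(x))$, and these $x$'s are again distinct along the path being unwound.

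**The main obstacle** is the reachability computation in line~\ref{line:reach}: a naive bound says each call of \texttt{enumerate} pays $O(n+m)$ for it, and with up to $n$ nested calls on a root-to-leaf path this would give $O(n(n+m))$, not $O(n+m)$. The fix is the observation (which I would prove) that along a single root-to-leaf path the sets $A[i]$ over which reachability is run are \emph{nested and shrinking}: at recursion depth $k$ the set $A[i]$ is contained in $V\setminus\tau_k$ where $|\tau_k|=k$, and more importantly only the \emph{highest-index} bucket is searched, whose vertices are exactly those in one connected component that has not yet been ``entered'' — so the total size of all these searches telescopes to $O(n+m)$ rather than multiplying. I would make this precise by charging the cost of searching a vertex $w$ (and its incident edges) in line~\ref{line:reach} at recursion depth $k$ to the ``slot'' $(w, k)$, and showing each such slot is used $O(1)$ times along any root-to-leaf walk because once the search at depth $k$ fixes the component order, deeper searches operate on strictly smaller vertex sets. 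Combining the descent cost, the ascent cost, and this telescoped reachability cost yields the claimed $O(n+m)$ delay.
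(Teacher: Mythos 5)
There is a genuine gap, and it sits exactly where you located the main difficulty: the treatment of line~\ref{line:reach}. Your fix --- that the reachability searches performed by the calls on a root-to-leaf path act on ``nested and shrinking'' sets and therefore telescope to $O(n+m)$ --- is not established by the proposed charging argument and is in fact false as stated. Nestedness alone never gives a telescoping sum: on the complete graph $K_n$ (which is chordal), the call at recursion depth $k$ searches a highest-label set of size $n-k$ containing $\Theta((n-k)^2)$ edges, so summing the search costs over the calls on one root-to-leaf path gives $\Theta(n^3)$, far exceeding $O(n+m)=O(n^2)$. Charging a slot $(w,k)$ only $O(1)$ times does not help, because a vertex can occupy $\Omega(n)$ such slots along a single path. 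The reason the theorem is nevertheless true is that these expensive searches are \emph{not} all incurred within one delay interval: the search of a given invocation is executed only after that invocation's first recursive call has returned, i.e.\ after at least one output has already been produced in between, so aggregating them over a root-to-leaf path is the wrong accounting unit.

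The paper's argument uses precisely this timing. Between two consecutive outputs the execution splits into an ascent, a single turning point, and a descent. On the descent no reachability is computed at all, since line~\ref{line:reach} is reached only after the recursive call in line~\ref{line:rec} returns; the cost is the label updates, $O(\delta(x))$ per level, hence $O(m)$ in total. On the ascent, whenever the do-while loop terminates and the recursion pops up one level, the set $R$ must be empty at that moment --- otherwise another iteration, another recursive call, and hence another output would occur --- so any reachability query executed there dies immediately after inspecting the neighbors of $v$, costing only $O(\delta(v))$; again $O(m)$ over the whole ascent. Only at the unique turning point can a full $O(n+m)$ search from $v$ be performed, and it happens once per delay interval. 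This ``at most one expensive search, all others are provably empty'' observation is the missing idea; your per-step cost bounds of $O(1+\delta(x))$ for the label updates and their reversal are fine and coincide with the paper's phases, but without the above observation the reachability cost cannot be brought down from $O(n(n+m))$ to $O(n+m)$.
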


\begin{proof}
  Let us partition the steps between two outputs in three phases: (i)
  the recursion goes ``upwards'' from an
  output; (ii) it reaches its ``top'' in the recursion tree; and
  (iii) the recursion goes ``downwards'' towards the next output.

  We show that each phase runs in time $O(n+m)$.
  In phase~(i), lines~\ref{line:startfor2} to~\ref{line:while} are
  executed and the do-while loop stops (otherwise we would be in phase (ii)).
  The for-loop in
  lines~\ref{line:startfor2} to~\ref{line:endfor2} has time complexity
  $O(\delta(x))$. Moreover, the
  reachability query executed in line~\ref{line:reach} does not yield any
  vertices (otherwise the do-while loop would continue), meaning it takes
  time $O(\delta(x))$ (all neighbors of $v$ are checked once and the search
  stops). The run-time is therefore $O(m)$ as
  every edge is considered at most twice.

  The main costs of phase (iii) are produced by the for-loop in
  lines~\ref{line:startfor1} to~\ref{line:endfor1}, which requires
  time $O(\delta(x))$ leading to an overall time of $O(m)$.
  Both for-loops are executed in phase (ii), resulting in
  overall time $O(\delta(x))$. The reachability query from $v$ in
  line~\ref{line:reach} costs time $O(m)$. As this is done only once,
  we obtain a worst-case delay of $O(n+m)$.
\end{proof}

The result immediately generalizes to CPDAGs.

\begin{theorem}
  The Markov equivalence class $[G]$ of a CPDAG $G$
  can be enumerated with worst-case delay $O(n+m)$.
\end{theorem}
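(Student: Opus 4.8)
The plan is to reduce the CPDAG enumeration problem to the AMO enumeration problem already solved by \textsc{mcs-enum}, and then to run several independent instances of that algorithm in parallel without harming the delay. By Fact~\ref{fact:reduction}, if $G$ has undirected connected components $U_1,\dots,U_k$ (each a UCCG) with $n_i$ vertices and $m_i$ edges, then every DAG in $[G]$ is obtained by keeping all directed edges of $G$ and choosing, independently, one AMO for each $U_i$; conversely every such choice yields a distinct member of $[G]$. Hence $[G]$ is in bijection with the Cartesian product $\mathrm{AMO}(U_1)\times\cdots\times\mathrm{AMO}(U_k)$, and it suffices to enumerate this product.

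First I would compute the decomposition into $U_1,\dots,U_k$ in $O(n+m)$ preprocessing and create one instance of \textsc{mcs-enum} per component, initially advancing each to its first AMO. The members of the product are then enumerated in an \emph{odometer} fashion: the current DAG consists of the fixed directed part of $G$ together with the AMO currently held by each instance; to advance, we step the instance for $U_k$ to its next AMO, and whenever an instance for $U_j$ is exhausted we discard it, start a fresh instance for $U_j$ (which immediately produces its first AMO again), and carry over by stepping the instance for $U_{j-1}$, repeating as needed; the enumeration ends once the carry propagates past $U_1$. Correctness is immediate: by Theorem~\ref{theorem:mcsenumcorrectness} each instance lists all AMOs of its UCCG exactly once, so the odometer lists every tuple of the product exactly once, i.e., every DAG of $[G]$ exactly once.

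For the delay, consider two consecutive outputs. The work done is a single carry chain touching some suffix $U_{j^\ast},U_{j^\ast+1},\dots,U_k$: one advancing step of the instance for $U_{j^\ast}$, which costs $O(n_{j^\ast}+m_{j^\ast})$ by Theorem~\ref{theorem:wcdelay}; the restart of each instance for $U_{j^\ast+1},\dots,U_k$, each of which costs $O(n_i+m_i)$ since $O(n_i)$ initialization followed by the run up to the first output is, as noted in the proof of Theorem~\ref{theorem:mcsenumcorrectness}, exactly a single MCS; and $O(n+m)$ to write out the resulting DAG. Because the $U_i$ are pairwise vertex- and edge-disjoint, $\sum_{i=1}^{k}(n_i+m_i)\le n+m$, so the carry chain costs $O(n+m)$ in total; the time before the first output (preprocessing plus one first-AMO computation per component) is $O(n+m)$ by the same bound, and termination is detected in $O(1)$.

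The main obstacle is exactly this delay analysis of the carry: a naive estimate would multiply the per-component cost by $k$, which is too large. The resolution is that a cascading rollover re-initializes each affected component only once per step, and since the UCCGs partition the vertices and undirected edges of $G$ these re-initialization costs sum to $O(n+m)$ rather than stacking up. Beyond this disjointness bookkeeping — together with the observation that restarting \textsc{mcs-enum} and reaching its first output is no more expensive than a plain MCS — everything reduces to the single-UCCG result already established.
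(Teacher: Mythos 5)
Your argument is correct, but it is a noticeably different (and heavier) route than the paper's. The paper's proof is a two-line observation: Algorithm~\ref{alg:mcs} works \emph{unmodified} on a disconnected chordal graph, so one simply strips the directed edges of $G$, runs \textsc{mcs-enum} on the resulting (possibly disconnected) graph, and re-adds the directed edges at output time; correctness is Fact~\ref{fact:reduction} and the delay bound is Theorem~\ref{theorem:wcdelay} verbatim. The interleaving across components and the avoidance of duplicates are handled automatically by the reachability pruning inside the single run. You instead make the product structure $\mathrm{AMO}(U_1)\times\cdots\times\mathrm{AMO}(U_k)$ explicit and compose one enumerator per UCCG via an odometer, paying for the carry chain with the disjointness bound $\sum_i(n_i+m_i)\le n+m$. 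This is sound and arguably more modular -- it would let you plug in \emph{any} linear-delay per-component enumerator, and it does not require knowing that the single algorithm tolerates disconnected input -- but it needs extra bookkeeping that the paper avoids. One small point you gloss over: Theorem~\ref{theorem:wcdelay} literally bounds the time between two consecutive outputs, whereas your carry chain also charges the time an exhausted instance spends ascending from its last output to termination (exhaustion detection); this is covered by the same phase-(i) argument in the paper's proof of Theorem~\ref{theorem:wcdelay} and is again $O(n_i+m_i)$, but it deserves a sentence, just as you correctly noted that a restart up to the first output is a plain MCS.
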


\begin{proof}
  Algorithm~\ref{alg:mcs} also works for unconnected chordal graphs
  without any modifications. Hence, it can be called for the
  graph obtained by removing all directed edges of $G$. After computing an AMO
  of this graph, the
  directed edges can be re-added and the output is a member of $[G]$ produced
  with delay $O(n+m)$. The correctness follows from Fact~\ref{fact:reduction}.
\end{proof}

\section{PDAGs and MPDAGs}
One can naturally generalize the enumeration task to only consider members
of the MEC, which conform to certain background knowledge given in the form of
additional directed edges. Such a subclass of an
MEC is commonly represented by an MPDAG (or PDAG).
As before, a DAG is a consistent extension of $G$ if it
has the same skeleton and v-structures.
It is easy to see that \textsc{meek-enum} works in this generalized
setting as well, as does \textsc{chickering-enum} (see also
Corollary~\ref{corollary:pdagchickering} in Section~\ref{section:anotherapproach}).
In this section, we show how
\textsc{mcs-enum} can be adapted, first to handle MPDAGs and, building
on this, PDAGs.

\begin{definition}
  We term the graph induced by
  the vertices of a undirected component in an MPDAG a \emph{bucket}.
\end{definition}

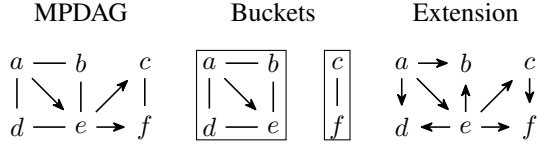
\begin{figure}
  \centering
  \begin{tikzpicture}[scale=0.85]
    \node (a) at (0,0) {$a$};
    \node (b) at (1,0) {$b$};
    \node (c) at (2,0) {$c$};
    \node (d) at (0,-1) {$d$};
    \node (e) at (1,-1) {$e$};
    \node (f) at (2,-1) {$f$};
    \node (l) at (1,0.8) {MPDAG};

    \graph[use existing nodes, edges = {edge}] {
      a -- b -- e -- d -- a;
      c -- f;
    };
    
    \graph[use existing nodes, edges = {arc}] {
      a -- e -- c;
      e -- f;
    };    
  \end{tikzpicture} \hspace*{0.25cm}
  \begin{tikzpicture}[scale=0.85]
    \node (a) at (0,0) {$a$};
    \node (b) at (1,0) {$b$};
    \node (c) at (2,0) {$c$};
    \node (d) at (0,-1) {$d$};
    \node (e) at (1,-1) {$e$};
    \node (f) at (2,-1) {$f$};
    \node (l) at (1,0.8) {Buckets};

    \graph[use existing nodes, edges = {edge}] {
      a -- b -- e -- d -- a;
      c -- f;
    };
    
    \graph[use existing nodes, edges = {arc}] {
      a -- e;
    };

    \draw (-0.2,0.2) rectangle (1.2,-1.2);
    \draw (1.8,0.2) rectangle (2.2,-1.2);
  \end{tikzpicture} \hspace*{0.25cm}
  \begin{tikzpicture}[scale=0.85]
    \node (a) at (0,0) {$a$};
    \node (b) at (1,0) {$b$};
    \node (c) at (2,0) {$c$};
    \node (d) at (0,-1) {$d$};
    \node (e) at (1,-1) {$e$};
    \node (f) at (2,-1) {$f$};
    \node (l) at (1,0.8) {Extension};
    
    \graph[use existing nodes, edges = {arc}] {
      a -- e -- b;
      a -- b;
      a -- d;
      e -- d;
      e -- c;
      e -- f;
      c -- f;
    };
  \end{tikzpicture}
  \caption{The MPDAG on the left, contains two buckets with
    chordal skeleton (middle), which
    lead to a consistent extension when each oriented into an AMO
    (right).}
  \label{fig:mpdagbuckets}
\end{figure}

Buckets are similar to the undirected chordal components that we
considered in the previous section, in that orienting each
bucket in an MPDAG without cycles or v-structures will
yield a consistent extension. However, a bucket may already
contain directed edges. An illustration is
given in Fig.~\ref{fig:mpdagbuckets}.

\begin{fact}[\citet{WienobstExtendability2021}] \label{fact:mpdagext}
  An MPDAG is extendable (i.\,e., has a consistent extension) iff the
  skeleton of every bucket
  is chordal. An extension can be computed in time $O(n+m)$.
\end{fact}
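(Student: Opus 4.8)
The plan is to prove the two directions of the equivalence separately and then address the running time, in the spirit of Fact~\ref{fact:reduction} for CPDAGs. For the \emph{only if} direction, suppose $G$ has a consistent extension $D$ and fix a bucket $B$; I want to show that $\mathrm{skel}(B)$ is chordal. Assume it is not and pick a chordless cycle $C=\langle v_1,\dots,v_k,v_1\rangle$ in $\mathrm{skel}(B)$ with $k\ge 4$. The crucial observation is that \emph{every edge of $C$ is undirected in $G$}: if some $v_j\to v_{j+1}$ were directed, then, since $v_j$ and $v_{j+2}$ are non-adjacent (the cycle is chordless), the first Meek rule forces $v_{j+1}\to v_{j+2}$, and iterating around $C$ orients the whole cycle into a directed cycle, contradicting the acyclicity of $G$. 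Now $D$ orients $C$ acyclically, so $C$ has a local sink $v_i$ with $v_{i-1}\to v_i\leftarrow v_{i+1}$ in $D$; as $v_{i-1}$ and $v_{i+1}$ are non-adjacent, this is a v-structure of $D$ and hence of $G$ -- contradicting the fact that $v_{i-1}-v_i$ and $v_i-v_{i+1}$ are undirected in $G$. Therefore $\mathrm{skel}(B)$ is chordal.

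For the \emph{if} direction and the construction, assume that every bucket has a chordal skeleton. Since an undirected edge never joins two distinct undirected components, all edges between different buckets are directed; using the Meek rules one shows that these edges point the same way between any fixed pair of buckets and that a vertex with an edge into a bucket $B'$ is a parent of all of $V(B')$, which together with the acyclicity of $G$ makes the relation ``there is a directed edge from $B$ to $B'$'' acyclic and hence yields a topological order $B_1,\dots,B_t$ of the buckets. Inside each bucket I would build a consistent extension by sink-elimination: repeatedly delete a vertex $v$ that has no outgoing directed edge and whose undirected neighbors are each adjacent to all of $\Ne(v)$, orienting the undirected edges at $v$ into $v$. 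Chordality of $\mathrm{skel}(B)$ together with closure under the first Meek rule guarantees that such a $v$ exists at every step -- this is the technical heart and is essentially the content of the cited result. The resulting orientation of $B$ is acyclic, preserves the pre-existing directed edges, has the same skeleton, and creates no v-structure, because two edges are oriented into a vertex only when its current neighborhood is a clique. Gluing the bucket orientations and keeping all inter-bucket edges gives a directed graph $D$: a directed cycle in $D$ would have to leave some bucket and return to it, which is impossible because the bucket order is topological, so it would lie inside one bucket, contradicting acyclicity there; and no v-structure is created across buckets, because in a putative new v-structure $u\to c\leftarrow v$ some edge, say $v\to c$, was oriented by us inside a bucket $B'$, so $c\in V(B')$, and then either $u\in V(B')$, in which case it is a v-structure of $D[V(B')]$ and is thus already among the v-structures of $B'=G[V(B')]$, or $u\notin V(B')$, in which case $u\to c$ is an inter-bucket edge, so $u$ is a parent of all of $V(B')$, giving $u\to v$ and contradicting the non-adjacency of $u$ and $v$. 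Hence $D$ is a consistent extension of $G$.

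For the running time, computing the buckets and their topological order takes $O(n+m)$ (one graph traversal plus one pass over the directed edges), and naive sink-elimination is superlinear but can be replaced by the linear-time extension algorithm of \citet{WienobstExtendability2021} run bucket by bucket -- the buckets are vertex-disjoint and every edge is inspected a constant number of times -- for a total of $O(n+m)$. I expect the main obstacles to be the observation that a chordless cycle inside a bucket of an \emph{MPDAG} must be entirely undirected (this is exactly where closure under the Meek rules is used, and is the reason the statement fails for general PDAGs) and the guarantee that sink-elimination never gets stuck in a chordal-skeleton bucket while still being implementable in linear time.
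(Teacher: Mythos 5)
This statement is not proved in the paper at all: it is imported verbatim as a Fact from \citet{WienobstExtendability2021}, so your attempt can only be judged on its own merits. There it has one genuine flaw, in the ``only if'' direction. The step ``$v_j\rightarrow v_{j+1}$ and $v_j,v_{j+2}$ nonadjacent, hence the first Meek rule forces $v_{j+1}\rightarrow v_{j+2}$'' is not a valid inference: closure under R1 only guarantees that the configuration $v_j\rightarrow v_{j+1}-v_{j+2}$ does not occur, i.\,e., that the edge between $v_{j+1}$ and $v_{j+2}$ is \emph{not undirected}; it may perfectly well be oriented backwards as $v_{j+2}\rightarrow v_{j+1}$, creating a collider $v_j\rightarrow v_{j+1}\leftarrow v_{j+2}$ on the cycle and stopping your propagation, so you never arrive at a directed cycle. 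Buckets of an MPDAG do contain directed edges (see Fig.~\ref{fig:mpdagbuckets}), and R1 alone does not exclude v-structures on a chordless cycle inside a bucket. Your lemma (chordless cycles in a bucket are entirely undirected) is true, but proving it needs the part of the structure you never use, namely the undirected connectivity that defines the bucket together with rules beyond R1: one has to rule out a collider $a\rightarrow c\leftarrow b$ (with $a,b$ nonadjacent) inside a bucket, e.\,g., by taking a shortest undirected path from $c$ to $a$ within the bucket and propagating along it --- R1 forces each successive path vertex $w$ to be adjacent to both $a$ and $b$, R3 excludes that both edges $w$--$a$ and $w$--$b$ are undirected, and R1/R2 exclude the mixed and outgoing cases, so $a\rightarrow w$ and $b\rightarrow w$ are pushed along the path until they contradict the undirected path edge at $a$. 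Combined with acyclicity (a fully directed chordless cycle must contain a collider, and so must every maximal directed stretch) this yields your lemma; without an argument of this kind the ``only if'' direction does not go through.

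For the ``if'' direction and the $O(n+m)$ bound you do not really give a proof: the existence of an admissible sink at every elimination step (which you correctly call the technical heart) and the linear-time implementation are precisely the content of the cited result, so the proposal delegates exactly the parts that carry the difficulty. Since the paper itself treats the statement as a citation, such delegation is defensible, but then the only substantive new content of your write-up is the ``only if'' direction, which is where the gap lies. The surrounding bookkeeping is fine: all edges between distinct buckets are directed, an external parent of one bucket vertex is (by R1 and R2) a parent of the entire bucket, hence inter-bucket edges are uniform and the buckets admit a topological order, and your gluing argument for acyclicity and for v-structures across buckets is correct --- with the small caveat that ``already among the v-structures of $B'$'' holds because the within-bucket orientation is constructed to add no v-structures, not automatically. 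Note also that the paper's own route for the constructive half is the modified MCS restricted to $S^+$ on each bucket rather than sink elimination, though its success guarantee is likewise inherited from \citet{WienobstExtendability2021}.
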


An MPDAG can be extended by running a modified MCS for
each bucket $B$
in the following way: The graph traversal is performed
on the skeleton of $B$ with the restriction that only vertices
in the highest-label set $S$ that have no unvisited parent in $B$
are considered. These are the vertices $x$ with $\Pa(x) \setminus \tau = \emptyset$,
given that $\tau$ contains the visited vertices,
and we denote the set of such
vertices by $S^+ \subseteq S$. This way, the MCS conforms to the
background edges.

With these insights, an analogue modification of
Algorithm~\ref{alg:mcs} suggests itself to enumerate \emph{all} AMOs of a bucket:
Perform the algorithm on the skeleton of the bucket and only consider
vertices in $S^+$.

\begin{lemma}\label{lemma:pdagconn}
  Let $B$ be a bucket and $\tau$ be a sequence of visited vertices with
  $|\tau| = k < n$ produced by the modified MCS using~$S^+$. Then it
  holds that:
  \begin{enumerate}
  \item\label{lemma:pdagconn:a} If $x,y \in S^+$ are connected in
    $\text{skel}(B[V \setminus \tau])$, the
    set of AMOs produced by choosing
    $x$ next is disjoint from the set produced by choosing $y$ next.
  \item\label{lemma:pdagconn:b} If $x,y \in S^+$ are unconnected in $\text{skel}(B[V \setminus \tau])$, any
    AMO produced by choosing $y$ as the next vertex can also be
    produced by choosing a vertex in $S^+$ connected to $x$ in $\text{skel}(B[V \setminus \tau])$ next.
  \end{enumerate}
\end{lemma}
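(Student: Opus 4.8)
The plan is to follow the proof of Lemma~\ref{lemma:amoconn} almost line for line, inserting the two features that are new in the bucket setting: the background-directed edges inside $B$ (which interact with the restriction to $S^+$) and the fact that the chordal object we work with is $\text{skel}(B)$ rather than $B$ itself. Throughout I would use Fact~\ref{fact:mpdagext}, so that $\text{skel}(B)$ and hence every induced subgraph $\text{skel}(B[V\setminus\tau])$ is chordal, together with the observation that the connected components $C_1,\dots,C_k$ of $\text{skel}(B[V\setminus\tau])$ are pairwise non-adjacent in $B[V\setminus\tau]$: no edge of any type runs between two of them, so for every AMO $D$ of $B$ one has $D[V\setminus\tau]=D[C_1]\sqcup\cdots\sqcup D[C_k]$.

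For item~\ref{lemma:pdagconn:a} I would take a shortest path $p=\langle x=u_0,u_1,\dots,u_\ell=y\rangle$ in $\text{skel}(B[V\setminus\tau])$; being shortest it is chordless. If $x$ is chosen next then $x$ precedes $u_1,\dots,u_\ell$ in the traversal, so the edge $u_0-u_1$ is oriented $x\to u_1$; since an AMO has no v-structure and $u_{i-1},u_{i+1}$ are non-adjacent along a chordless path, an immediate induction forces $p$ to be oriented $x\to u_1\to\cdots\to y$. Choosing $y$ next forces, by symmetry, $y\to u_{\ell-1}\to\cdots\to x$. These two orientations disagree on the edge $u_{\ell-1}-y$, so the two families of AMOs are disjoint (if a background edge on $p$ contradicts one of the forced orientations, that family is simply empty and disjointness is trivial; note that $x\in S^+$ already rules out the edge at $x$ being directed into $x$).

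For item~\ref{lemma:pdagconn:b}, let $C_1$ be the component of $\text{skel}(B[V\setminus\tau])$ containing $x$, let $D$ be an AMO obtained from a modified-MCS run $\sigma=\langle\tau,y,\dots\rangle$, and let $z$ be the first vertex of $C_1$ visited by $\sigma$. I would first show that $z$ is a legal and $D$-consistent choice already at state $\tau$. Its label at that state is $|\Ne(z)\cap\tau|$ (no $C_1$-vertex is visited before it and its neighbours lie in $\tau\cup C_1$), which is at most the maximum label $M$ at state $\tau$, yet also equals the current maximum when $\sigma$ visits $z$, which is at least $M$ since $x$ still has label $M$ at that moment; hence $z\in S$. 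Moreover $z$ has no unvisited $B$-parent — such a parent would be a neighbour of $z$ in $V\setminus\tau$, hence in $C_1$, hence a $C_1$-vertex preceding $z$ in $\sigma$ — so $z\in S^+$; and $z$ is a source of $D[V\setminus\tau]$, since a $D$-parent of $z$ in $V\setminus\tau$ would again precede $z$ in $\sigma$ and lie in $C_1$. Finally, $\langle\tau,z\rangle$ is a valid modified-MCS prefix whose orientations agree with $D$ and all of whose edges into the remaining vertices point outward, so by the bucket analogue of Fact~\ref{fact:counting} (a modified-MCS run can be steered, from any valid $D$-consistent prefix, to realize $D$) it can be completed to a run producing $D$. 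Since $z\in S^+\cap C_1$, i.e.\ $z\in S^+$ is connected to $x$, this is the claim.

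The step I expect to be the main obstacle is the validity argument in item~\ref{lemma:pdagconn:b}. The naive idea -- relocating $z$, or all of $C_1$, to the front of $\sigma$ -- does \emph{not} work: visiting $z$ may push a neighbour's label above $M$ and thereby \emph{force} the traversal to remain inside $C_1$, so the relocated sequence need no longer be an MCS ordering. The fix is the route above: rather than reorder $\sigma$, isolate $z$ as a legal $D$-consistent first choice and appeal to the fact that a modified MCS can realize a prescribed AMO from any consistent prefix. Hence the genuinely new work, beyond what Lemma~\ref{lemma:amoconn} already covers, is to pin down that steerability statement in the presence of directed edges and the $S^+$ restriction, and to get the label bookkeeping for $z$ exactly right.
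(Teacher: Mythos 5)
Your item~\ref{lemma:pdagconn:a} is essentially the paper's argument with one twist: the paper disposes of directed edges on the shortest path by noting that, since the bucket is closed under the first Meek rule, a directed edge would propagate along the chordless path and give $x$ or $y$ an unvisited parent, contradicting $x,y\in S^+$ (so the path is in fact undirected and Lemma~\ref{lemma:amoconn} applies verbatim); you instead keep the forced-orientation argument and observe that a background edge can at worst empty one of the two families. Both are sound, and your version even avoids the Meek-rule step, so this part is fine.

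For item~\ref{lemma:pdagconn:b} you genuinely diverge: the paper settles it by the same one-line component-permutation argument as in Lemma~\ref{lemma:amoconn} (list the components of $\text{skel}(B[V\setminus\tau])$ with $x$'s component $C_1$ first), whereas you isolate $z$, the first $C_1$-vertex of the run $\sigma$ producing $D$, show $z\in S^+$ via the label of $x$ and that $z$ is a source of $D[V\setminus\tau]$, and then appeal to a ``steerability'' claim that you do not prove. Your bookkeeping for $z$ is correct, and your worry about naively relocating $C_1$ (the block-reordered sequence need not obey the max-label rule) is legitimate as a literal reading of the paper's argument; but as written your proof is incomplete exactly at the step you flag. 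Fortunately the steerability claim is true and closes in a few lines: let $P$ be the visited set of a $D$-consistent, modified-MCS-realizable prefix, and suppose a highest-label vertex $u$ has a $D$-parent $w\notin P$. Every $p\in \Ne(u)\cap P$ satisfies $p\rightarrow u$ in $D$ by consistency of $P$, and since $D$ has no v-structure, $w$ is adjacent to each such $p$; hence $\Ne(u)\cap P\subseteq \Ne(w)\cap P$ and $w$ is also of highest label. Iterating yields an infinite chain of distinct $D$-ancestors, contradicting acyclicity, so some highest-label vertex is a source of $D[V\setminus P]$; since $\Pa_B(v)\subseteq\Pa_D(v)$ it lies in $S^+$, and visiting it preserves both invariants, so by induction the modified MCS can be completed from the prefix $(\tau,z)$ to output exactly $D$. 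With this lemma added, your route is a correct, more detailed alternative to the paper's terse component argument.
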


\begin{proof}
  For item~\ref{lemma:pdagconn:a} consider the shortest path
  between~$x$ and~$y$ and assume that it contains directed edges (if
  not the argument of Lemma~\ref{lemma:amoconn} applies). Then
  $x$ or $y$ have an incoming edge due to the non-applicability of
  the first Meek rule in the original bucket $B$. This violates the assumption that
  $x,y\in S^+$, meaning that
  $\tau_1$ and $\tau_2$ imply different AMOs.
  For item~\ref{lemma:pdagconn:b}  the same argument as in
  Lemma~\ref{lemma:amoconn} holds.
\end{proof}

Reachability can again be tested in a simplified way:
\begin{lemma}\label{lemma:pdagfastreach}
  Given a bucket $B$ and a sequence of visited vertices
  $\tau$ produced by the modified MCS using~$S^+$.
  Vertices $x,y \in S^+$ are connected in $B[V
  \setminus \tau]$ iff they are connected in $B[S^+]$.
\end{lemma}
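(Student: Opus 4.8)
The plan is to adapt the proof of Lemma~\ref{lemma:subgraph:over:s}, adding two ingredients that cope with the directed edges inside the bucket.

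The direction ``connected in $B[S^+]$ implies connected in $B[V\setminus\tau]$'' is immediate, since $S^+\subseteq V\setminus\tau$. For the converse I first record two observations. (i)~$B[S^+]$ contains no directed edge: an edge $u\to v$ with $u,v\in S^+$ would make $u$ an unvisited parent of $v$, contradicting $v\in S^+$; hence $B[S^+]=\mathrm{skel}(B)[S^+]$, so it suffices to exhibit an \emph{undirected} $x$-$y$ path inside $S^+$. (ii)~The modified MCS is a legal run of an ordinary MCS on the connected chordal graph $\mathrm{skel}(B)$: at every step it picks a vertex of currently highest label, merely restricting the choice to $S^+\subseteq S$, and labels depend only on $\mathrm{skel}(B)$ and on $\tau$. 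Hence $\tau$ is also a prefix producible by an ordinary MCS on $\mathrm{skel}(B)$, with the same highest-label set $S$, and Lemma~\ref{lemma:subgraph:over:s} applies to $\mathrm{skel}(B)$.

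Next, let $Q$ be a shortest $x$-$y$ path in $B[V\setminus\tau]$; it is chordless in $\mathrm{skel}(B)$. I claim every edge of $Q$ is undirected. Otherwise, start from a directed edge of $Q$: using chordlessness and the fact that a bucket contains no v-structure (so the orientation cannot reverse at an internal vertex of $Q$), repeated application of the first Meek rule orients every remaining edge of $Q$ in the same direction, all the way to $x$ or to $y$; that endpoint then acquires an incoming edge from a vertex of $Q\subseteq V\setminus\tau$, that is, an unvisited parent, contradicting $x,y\in S^+$. (This is exactly the propagation used in the proof of item~\ref{lemma:pdagconn:a} of Lemma~\ref{lemma:pdagconn}.) Thus $Q$ lies in $\mathrm{skel}(B)[V\setminus\tau]$, and by observation~(ii) together with Lemma~\ref{lemma:subgraph:over:s}, $x$ and $y$ are connected in $\mathrm{skel}(B)[S]$.

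It remains to upgrade connectivity in $\mathrm{skel}(B)[S]$ to connectivity in $\mathrm{skel}(B)[S^+]$, which is the heart of the argument. The goal is to show that a shortest $x$-$y$ path in $\mathrm{skel}(B)[S]$ whose edges are all undirected in $B$ (one exists, by the argument of the previous paragraph applied now inside $\mathrm{skel}(B)[S]$) can be chosen so as to avoid $S\setminus S^+$ entirely; such a path lies in $\mathrm{skel}(B)[S^+]=B[S^+]$ and we are done. Take such a path $Q^\star=\langle x=v_0-\dots-v_\ell=y\rangle$ with as few vertices outside $S^+$ as possible, and suppose some internal $v_i$ lies in $S\setminus S^+$. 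Its path-neighbours $v_{i-1},v_{i+1}$ are non-adjacent ($Q^\star$ is chordless), and $v_i$ has an unvisited parent $q$; the first Meek rule applied to $q\to v_i-v_{i\pm1}$ forces $q\sim v_{i-1}$ and $q\sim v_{i+1}$, while the second Meek rule applied to $v_{i\pm1}\to q\to v_i$ shows these adjacencies cannot point into $q$. One then re-routes $Q^\star$ around $v_i$ through a suitable ancestor of $v_i$ in the acyclic directed part of $B$ (and, if needed, re-applies the reduction of the previous paragraph): either this strictly decreases the number of vertices outside $S^+$, contradicting the choice of $Q^\star$, or it introduces a directed edge on the new path, which propagates to an unvisited parent of $x$ or $y$ and again contradicts $x,y\in S^+$. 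I expect this re-routing step --- certifying that a suitable ancestor exists, that it remains joined to $v_{i-1}$ and $v_{i+1}$ by undirected edges, and that the procedure terminates --- to be the main obstacle, and the place where acyclicity of $B$, chordality of $\mathrm{skel}(B)$, and Meek-rule propagation have to be woven together with care.
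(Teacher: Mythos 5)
Your first three steps (the trivial direction, the observation that $B[S^+]$ contains no directed edges, the reduction of the modified MCS to an ordinary MCS on $\mathrm{skel}(B)$ so that Lemma~\ref{lemma:subgraph:over:s} yields connectivity in $\mathrm{skel}(B)[S]$, and the Meek-rule propagation showing the shortest path is fully undirected) all match the paper's argument and are sound. But the step you yourself flag as ``the heart of the argument'' --- upgrading connectivity in $S$ to connectivity in $S^+$ --- is left as a sketch, and that is precisely where the actual work of this lemma lies; as it stands the proposal has a genuine gap. You assert that one can re-route around an internal path vertex $v_i\in S\setminus S^+$ ``through a suitable ancestor of $v_i$'' and that this ``either strictly decreases the number of vertices outside $S^+$ \dots\ or introduces a directed edge,'' but you neither establish that such an ancestor exists with the required undirected attachments to the path, nor that your potential function (number of path vertices outside $S^+$) actually decreases --- a detour can a priori lengthen the path or pass through new vertices of $S\setminus S^+$.

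The paper closes exactly this gap as follows. It fixes a shortest $x$--$y$ path $p_1-\dots-p_k$ in $S$ and, among all shortest such paths, one in which the \emph{first} index $i$ with $p_i\in S\setminus S^+$ is as large as possible (a two-level extremal choice, not a count of bad vertices). It then builds a chain of unvisited parents $z_1\leftarrow z_2\leftarrow\dots$ starting from an unvisited parent $z_1$ of $p_i$: the AMO/no-v-structure property of the modified MCS forces each $z_s$ to lie in $S$ and to be adjacent to $p_{i-1}$, and a propagation argument along the path forces each $z_s$ to be adjacent (ultimately by an \emph{undirected} edge) to some later path vertex. Acyclicity of $B$ guarantees the chain terminates at some $z_j$ with no unvisited parent, i.e.\ $z_j\in S^+$, whose edges to $p_{i-1}$ and to some $p_t$ with $t>i$ are undirected (the former by the minimality of $i$). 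Re-routing through $z_j$ then gives a path that is either strictly shorter or has its first vertex outside $S^+$ strictly later, contradicting the extremal choice. To complete your proof you would need to supply this ancestor-chain construction (or an equivalent argument) and replace your potential function with one that the re-routing provably improves.
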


Using these results, we can show that:

\begin{theorem}
  There is an algorithm that enumerates all
  AMOs of a given bucket $B$ with worst-case delay $O(n+m)$.
\end{theorem}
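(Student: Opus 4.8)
The plan is to mirror the argument used for Theorems~\ref{theorem:mcsenumcorrectness} and~\ref{theorem:wcdelay}, which established correctness and linear delay for \textsc{mcs-enum} on chordal graphs, now applied to the bucket setting. Concretely, I would define \textsc{bucket-enum} to be Algorithm~\ref{alg:mcs} run on $\mathrm{skel}(B)$ with two changes: the vertex $v$ and the candidates iterated in the do-while loop are drawn from $S^+$ rather than $S$ (i.e., we skip any highest-label vertex that still has an unvisited parent in $B$), and the reachability query in line~\ref{line:reach} is computed in $B[S^+]$ rather than $G[A[i]]$. The set $S^+$ can be maintained incrementally: alongside the array $A$, keep a counter of unvisited parents for each vertex, decremented when a parent is appended to $\tau$ and restored on backtracking; then $S^+$ is the subset of the highest-index non-empty cell of $A$ whose counter is zero, and updating it costs $O(\delta(x))$ per step, exactly as the existing neighbor bookkeeping does.

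For correctness I would argue exactly as in the proof of Theorem~\ref{theorem:mcsenumcorrectness}, but citing Fact~\ref{fact:mpdagext}, Lemma~\ref{lemma:pdagconn}, and Lemma~\ref{lemma:pdagfastreach} in place of their chordal-graph analogues. Every output is an AMO of $B$: a vertex is only ever chosen from $S^+$, so the modified MCS respects the directed edges already present in $B$, and since $\mathrm{skel}(B)$ is chordal (Fact~\ref{fact:mpdagext}) the resulting orientation is an AMO of $\mathrm{skel}(B)$ that moreover agrees with $B$'s directed edges; one checks it introduces no new v-structure because the MCS ordering property carries over. Conversely every AMO of $B$ arises from some run: it restricts to an AMO of $\mathrm{skel}(B)$, hence corresponds to an MCS ordering of $\mathrm{skel}(B)$ by Fact~\ref{fact:counting}, and because the AMO orients the directed edges of $B$ as in $B$, that ordering can be taken to visit a vertex only after all its $B$-parents, i.e., it is a legal run of the $S^+$-restricted MCS. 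The pruning in line~\ref{line:reach} loses nothing by item~\ref{lemma:pdagconn:b} of Lemma~\ref{lemma:pdagconn}, and no AMO is emitted twice: if two runs produced the same AMO, at the first differing position the two chosen vertices $x,y\in S^+$ would be connected in $\mathrm{skel}(B[V\setminus\tau])$ (a vertex $y$ unconnected to the earlier choices is pruned by line~\ref{line:reach} exactly because Lemma~\ref{lemma:pdagfastreach} lets us test connectivity in $B[S^+]$), whence item~\ref{lemma:pdagconn:a} of Lemma~\ref{lemma:pdagconn} forces the two runs to yield different AMOs — a contradiction.

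For the delay bound I would reuse the three-phase analysis from the proof of Theorem~\ref{theorem:wcdelay} verbatim: between consecutive outputs the recursion climbs (phase~i), reaches a top (phase~ii), and descends (phase~iii), and each phase pays $O(\delta(x))$ per node touched plus one reachability query costing $O(m)$ that is performed only once (line~\ref{line:reach} runs once per \texttt{enumerate} call, for the initial vertex $v$). The only new cost is maintaining $S^+$, but as noted this is $O(\delta(x))$ per step and folds into the existing neighbor-update loops in lines~\ref{line:startfor1}--\ref{line:endfor1} and~\ref{line:startfor2}--\ref{line:endfor2}, so the asymptotics are unchanged and the delay is $O(n+m)$.

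The main obstacle, and the place I would be most careful, is item~\ref{lemma:pdagconn:a}-type reasoning in the no-duplicates argument: one must be sure that when $x$ and $y\in S^+$ are joined by a shortest path containing a directed edge, that edge genuinely blocks one of $x,y$ from being in $S^+$ — this is where the maximality of the MPDAG (non-applicability of Meek rule~1 in $B$) is essential, and it is already packaged in Lemma~\ref{lemma:pdagconn}, so the work is really just to invoke it at the correct step rather than reprove it. A secondary subtlety is that $S^+$ can be empty while $S$ is not; but Fact~\ref{fact:mpdagext} (extendability of the bucket) guarantees that as long as unvisited vertices remain, some among them has no unvisited parent, so $S^+\neq\emptyset$ whenever $|\tau|<n$, and the algorithm never gets stuck.
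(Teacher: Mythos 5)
Your proposal is correct and follows essentially the same route as the paper's own proof: the same $S^+$-restricted MCS on $\mathrm{skel}(B)$ with reachability in $B[S^+]$, correctness via Lemma~\ref{lemma:pdagconn} and Lemma~\ref{lemma:pdagfastreach} mirroring Theorem~\ref{theorem:mcsenumcorrectness}, and the same three-phase delay analysis with $S^+$ maintained by unvisited-parent counters. The only caveat is your final remark: extendability of the bucket alone does not immediately give a vertex with no unvisited parent \emph{inside the highest-label set} $S$, which is why the paper instead cites \citet{WienobstExtendability2021} for the fact that the modified MCS never gets stuck.
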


\begin{proof}[Proof sketch]
  Consider the just sketched algorithm, i.\,e., which proceeds as
  Algorithm~\ref{alg:mcs} for the skeleton of $B$ with the
  modification of choosing vertices and performing reachability with
  regard to $S^+$ (the algorithm is given explicitly in
  Appendix~\ref{subsection:appendixpdag}). 
  By using $S^+$, the resulting AMOs conform with the
  directed edges in the bucket and due to Lemma~\ref{lemma:pdagconn}
  and~\ref{lemma:pdagfastreach} and by
  similar arguments as for Theorem~\ref{theorem:mcsenumcorrectness} every
  such AMO is output exactly once.
  The linear-time delay follows as before, notably, $S^+$ can be
  efficiently maintained by storing the in-degree of each vertex.
\end{proof}

Using Fact~\ref{fact:mpdagext}, the result for buckets immediately generalizes to MPDAGs.
\begin{corollary} \label{corollary:enummpdag}
  There is an algorithm that enumerates all
  consistent extensions of a given MPDAG with linear-time delay.
\end{corollary}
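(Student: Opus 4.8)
The plan is to reduce the task to independently enumerating the AMOs of each bucket and then to interleave these enumerations without losing linear delay. First I would decompose $G$ into its buckets $B_1,\dots,B_k$ (the induced subgraphs of the undirected connected components) in time $O(n+m)$, setting aside the set $F$ of directed edges of $G$ that run between two distinct buckets. As explained in the discussion preceding Fact~\ref{fact:mpdagext}, orienting each bucket acyclically and without creating a v-structure (while keeping the directed edges already present in it) and then re-inserting $F$ produces a consistent extension of $G$, and conversely every consistent extension restricts to such an AMO on each bucket; consequently the consistent extensions of $G$ are in bijection with the tuples $(O_1,\dots,O_k)$ with $O_i$ an AMO of $B_i$. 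By Fact~\ref{fact:mpdagext} we may first test in $O(n+m)$ whether every bucket has a chordal skeleton, after which each bucket is guaranteed to admit at least one AMO.

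The algorithm combines the per-bucket enumerators like an odometer. By the preceding theorem, for each $B_i$ there is a DFS-style algorithm $\mathcal{A}_i$ listing all AMOs of $B_i$ with delay $O(n_i+m_i)$, where $n_i$ and $m_i$ denote the number of vertices and edges of $B_i$; inspecting its analysis, the times spent between initialization and the first output, between two consecutive outputs, and between the last output and returning are each $O(n_i+m_i)$. The combined procedure keeps one current AMO per bucket and, to emit the next extension, advances $\mathcal{A}_k$; whenever some $\mathcal{A}_i$ runs out of AMOs it is re-initialized to its first AMO and the ``carry'' advances $\mathcal{A}_{i-1}$, stopping as soon as one $\mathcal{A}_j$ advances successfully. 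Equivalently, one chains the recursions, replacing the ``output'' step of $\mathcal{A}_i$ by a call to $\mathcal{A}_{i+1}$ and letting the ``output'' of $\mathcal{A}_k$ emit the current extension together with $F$; this mirrors exactly how Algorithm~\ref{alg:mcs} already threads through the several components of a disconnected chordal graph. Since the tuples $(O_1,\dots,O_k)$ are traversed in lexicographic order, each of them, and hence each extension, is output exactly once.

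It remains to bound the delay. Between two consecutive outputs the odometer only touches a suffix $B_j,B_{j+1},\dots,B_k$ of the buckets: each of $B_{j+1},\dots,B_k$ is run to exhaustion (cost $O(n_i+m_i)$ by the ``last output to return'' bound) and then re-initialized to its first AMO (a single MCS pass, another $O(n_i+m_i)$), while $\mathcal{A}_j$ performs one enumeration step ($O(n_j+m_j)$). The buckets are vertex- and edge-disjoint, so $\sum_{i\ge j}(n_i+m_i)\le n+m$; maintaining the emitted graph incrementally is no more expensive, since only edges inside the touched buckets change and $F$ is fixed. Hence the delay is $O(n+m)$. I expect the delay analysis of the odometer to be the part needing the most care: one has to check that a cascade of carries stays cheap, which rests on each re-initialization being linear in its own bucket, on the buckets partitioning the vertex and edge sets, and on the observation -- read off from phases~(i)--(ii) of the delay proof of the bucket algorithm -- that finishing a bucket's recursion after its last AMO, and not merely moving between two AMOs, is also $O(n_i+m_i)$.
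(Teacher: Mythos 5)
Your proposal is correct and follows essentially the same route as the paper: reduce to the bucket enumeration theorem via Fact~\ref{fact:mpdagext}, then handle the several (vertex- and edge-disjoint) buckets jointly, re-adding the fixed directed edges at output time. The paper states this generalization in one line (its appendix simply runs the bucket algorithm on the union of buckets, just as Algorithm~\ref{alg:mcs} handles disconnected chordal graphs), whereas you spell out the cross-bucket combination explicitly as an odometer together with the needed ``last output to return'' and re-initialization bounds — a more detailed writeup of the same argument.
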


The matter for PDAGs is similar as they can be
maximally oriented into an equivalent MPDAG 
by Meek's rules.

\begin{theorem}
  There is an algorithm that enumerates all
  consistent extensions of a given PDAG with linear-time delay after an
  initialization step of time $O(n^3)$.
\end{theorem}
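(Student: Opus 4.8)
The plan is to reduce the PDAG case to the MPDAG case of
  Corollary~\ref{corollary:enummpdag}. Given a PDAG $G$, the
  initialization step computes its maximal orientation $G'$ by
  exhaustively applying the four Meek rules; the closure is well
  defined since the rules are confluent. If at some point a rule would
  orient an edge so as to close a directed cycle, then no consistent
  extension of $G$ exists and the algorithm halts with empty output;
  otherwise $G'$ is an MPDAG, and computing it takes time $O(n^3)$ by a
  suitable implementation of the Meek closure.

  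The key structural fact is that $G$ and $G'$ have exactly the same
  set of consistent extensions. On the one hand, $G'$ has the same
  skeleton and the same v-structures as $G$ (the Meek rules alter
  neither adjacencies nor v-structures) while its set of directed edges
  contains that of $G$; hence every consistent extension of $G'$ is one
  of $G$. On the other hand, by soundness of the Meek rules every
  consistent extension of $G$ must already orient each edge exactly the
  way $G'$ does, so it is also a consistent extension of $G'$. It
  therefore suffices to enumerate the consistent extensions of the
  MPDAG $G'$.

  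To finish, test extendability of $G'$ using
  Fact~\ref{fact:mpdagext}, that is, check in time $O(n+m)$ that the
  skeleton of every bucket of $G'$ is chordal; if some bucket skeleton
  is not chordal, output nothing. Otherwise run the algorithm of
  Corollary~\ref{corollary:enummpdag} on $G'$, which enumerates all
  consistent extensions of $G'$ --- equivalently, of $G$ --- with
  worst-case delay $O(n+m)$. The overall preprocessing is dominated by
  computing the Meek closure and hence is $O(n^3)$, which subsumes the
  $O(n+m)$ extendability test and the per-bucket MCS initialization.
  The main points that need careful justification are the equivalence
  of consistent extensions under maximal orientation (essentially the
  correctness theorem for Meek's rules) and the fact that every
  non-extendable PDAG is caught by one of the two checks above; a
  secondary technical point is verifying that the Meek closure fits
  within the stated $O(n^3)$ time bound.
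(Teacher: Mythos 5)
Your overall route is the same as the paper's: maximally orient the PDAG in $O(n^3)$ and then invoke the MPDAG enumerator of Corollary~\ref{corollary:enummpdag}; for \emph{extendable} inputs your argument that the Meek closure preserves the set of consistent extensions is sound. The genuine gap is your handling of non-extendable PDAGs. It is not true that every non-extendable PDAG is caught by your two checks (a directed cycle arising during the closure, or a non-chordal bucket skeleton). Take $V=\{a,b,c,d\}$ with $a \rightarrow b$ and $d \rightarrow c$ directed, $b - c$ undirected, and $a,c$ as well as $b,d$ nonadjacent. Either orientation of $b - c$ creates a v-structure that is not present in $G$, so $G$ has no consistent extension. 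Yet exhaustively applying the Meek rules (firing R1 on $a \rightarrow b - c$ first, say) yields the DAG $a \rightarrow b \rightarrow c \leftarrow d$: no directed cycle is ever closed and every bucket is a single vertex, so both of your checks pass and the algorithm would output a graph that is not a consistent extension of $G$ instead of the correct empty output. The same example shows that the closure is \emph{not} confluent on non-extendable inputs (the other firing order gives $a \rightarrow b \leftarrow c \leftarrow d$) and that the Meek rules \emph{can} create new v-structures when no consistent extension exists, so the ``Meek rules alter neither adjacencies nor v-structures'' half of your equivalence argument also silently uses extendability.

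The paper sidesteps this by testing extendability of the PDAG explicitly and computing the maximal orientation via the procedure of \citet{WienobstExtendability2021} (see Algorithm~\ref{alg:pdagextension}, whose first step returns $\emptyset$ if no consistent extension exists), rather than relying on the raw Meek closure plus the bucket-chordality test of Fact~\ref{fact:mpdagext} to detect inconsistency. If you insert such an explicit extendability check --- it fits within the $O(n^3)$ preprocessing budget --- the remainder of your sketch coincides with the paper's proof.
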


\begin{proof}
  The graph is initially transformed into its MPDAG. This is possible in time
  $O(n^3)$ as shown in~\cite{WienobstExtendability2021}.
  Afterward, apply the algorithm from Corollary~\ref{corollary:enummpdag}.
\end{proof}

\citet{WienobstExtendability2021} showed that the initialization step
of maximally orienting a
PDAG is likely not possible in linear time. However, using a finer
complexity analysis, it can be performed in time $O(dm)$,
where $d$ is the \emph{degeneracy} of the input's skeleton, which implies
linear-time on many natural graph classes such as planar graphs,
bounded-degree and bounded-treewidth graphs.

\section{Another Approach for Enumerating Markov Equivalent
  DAGs} \label{section:anotherapproach}

\begin{figure*}
  \centering
  \begin{tikzpicture}[xscale = 0.85, yscale=0.85]
    \node (a) at (-11,1) {$a$};
    \node (b) at (-10,1) {$b$};
    \node (c) at (-8,.5) {$c$};
    \node (d) at (-7,.5) {$d$};
    \node (e) at (-11,0) {$e$};
    \node (f) at (-10,0) {$f$};
    \node (g) at (-9,0) {$g$};
    \node (h) at (-8,-.5) {$h$};
    \node (i) at (-7,-.5) {$i$};
    \node (j) at (-11,-1) {$j$};
    \node (k) at (-10,-1) {$k$};
    \node (l) at (-9,1.5) {CPDAG};

    \graph[use existing nodes, edges = {edge}] {
      a -- b -- g;
      g -- c -- d;
      e -- f -- g;
      g -- h -- i;
      j -- k -- g;
    };

    \draw (-3.6,0) ellipse (2 and 1.5);
    
    \node[inner sep = 0.5] (a) at (-5,.75) {$a$};
    \node[inner sep = 0.5] (b) at (-4.25,.75) {$b$};
    \node[inner sep = 0.5] (c) at (-2.75,.5) {$c$};
    \node[inner sep = 0.5] (d) at (-2,.5) {$d$};
    \node[inner sep = 0.5] (e) at (-5,0) {$e$};
    \node[inner sep = 0.5] (f) at (-4.25,0) {$f$};
    \node[inner sep = 0.5] (g) at (-3.5,0) {$g$};
    \node[inner sep = 0.5] (h) at (-2.75,-.5) {$h$};
    \node[inner sep = 0.5] (i) at (-2,-.5) {$i$};
    \node[inner sep = 0.5] (j) at (-5,-.75) {$j$};
    \node[inner sep = 0.5] (k) at (-4.25,-.75) {$k$};

    \graph[use existing nodes, edges = {arc}] {
      a -- b -- g;
      g -- c -- d;
      g -- f -- e;
      g -- h -- i;
      g -- k -- j;
    };

    \draw[dashed] (-1.6,0) -- (0,1);
    
    \node[draw,circle] (a) at (0,1) {};
    \node[draw,circle] (b) at (1,1) {};
    \node[draw,circle] (c) at (3,.5) {};
    \node[draw,circle] (d) at (4,.5) {};
    \node[draw,circle] (e) at (0,0) {};
    \node[draw,circle] (f) at (1,0) {};
    \node[draw,circle] (g) at (2,0) {};
    \node[draw,circle] (h) at (3,-.5) {};
    \node[draw,circle] (i) at (4,-.5) {};
    \node[draw,circle] (j) at (0,-1) {};
    \node[draw,circle] (k) at (1,-1) {};
    \node (l) at (2,1.5) {MEC};

    \graph[use existing nodes, edges = {edge}] {
      g -- b -- a;
      g -- c -- d;
      g -- f -- e;
      g -- h -- i;
      g -- k -- j;
    };

    \draw (-0.8,-1.3) to[out=140,in=240] (-0.6,0.25) to[out=50,in=-40]
    (-0.6,1.1) to[out=130,in=170] (3,0.9) to[out=-10,in=120] (4.8,1.3)
    to[out=-60,in=75] (4.2,-1.2) to[out=-100,in=40] (2,-1)
    to[out=220,in=-40] (-0.8,-1.3);

    \draw (7.4,0) ellipse (2 and 1.5);
    
    \node[inner sep = 0.5] (a) at (6,.75) {$a$};
    \node[inner sep = 0.5] (b) at (6.75,.75) {$b$};
    \node[inner sep = 0.5] (c) at (8.25,.5) {$c$};
    \node[inner sep = 0.5] (d) at (9,.5) {$d$};
    \node[inner sep = 0.5] (e) at (6,0) {$e$};
    \node[inner sep = 0.5] (f) at (6.75,0) {$f$};
    \node[inner sep = 0.5] (g) at (7.5,0) {$g$};
    \node[inner sep = 0.5] (h) at (8.25,-.5) {$h$};
    \node[inner sep = 0.5] (i) at (9,-.5) {$i$};
    \node[inner sep = 0.5] (j) at (6,-.75) {$j$};
    \node[inner sep = 0.5] (k) at (6.75,-.75) {$k$};

    \graph[use existing nodes, edges = {arc}] {
      g -- b -- a;
      g -- c -- d;
      g -- f -- e;
      g -- h -- i;
      g -- k -- j;
    };

    \draw[dashed] (5.4,0) -- (2,0);
  \end{tikzpicture}
  \caption{An example that has no sequence of SHD two that enumerates all
    Markov equivalent DAGs. Two DAGs in the MEC are connected by an edge if they can be
    transformed into each other by a single edge reversal. For trees,
    the resulting topology coincides with the one of the CPDAG, each DAG in the
    MEC can be represented by its unique source vertex. During the
    enumeration, the DAG in the center can be used only once,
    which makes it impossible to list all ``leaf DAGs'' when allowing only
    for distance at most two.}
  \label{fig:counterexample}
\end{figure*}
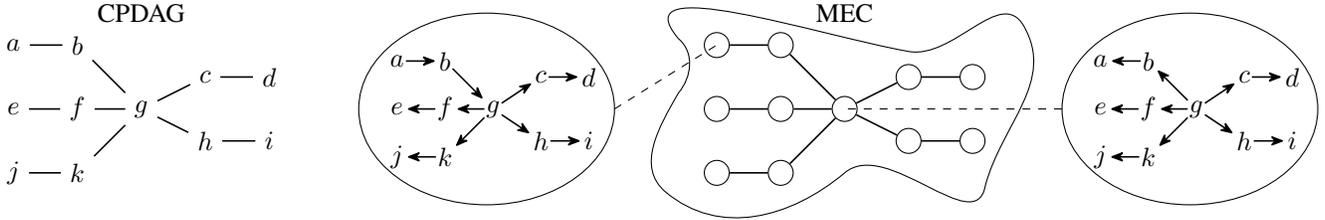
The results of the previous sections settle the worst-case complexity
of enumerating the members of an MEC (at least if
every DAG is output separately, a run-time of $o(n+m)$ is not
achievable as this would be less than the size of the graph). In this
section, we complement these
results with an enumeration sequence of small changes between
consecutive DAGs. While having a worse delay, such sequences are more
natural from the causal perspective, with only a few edge orientations changing
at a time, and provide structural insights into Markov
equivalence itself.
In more detail, we show that all graphs in an MEC
can be enumerated in a sequence such that every two consecutive DAGs
have structural Hamming distance (SHD) at most three. Our results are based on
the following characterization of Markov equivalence, which is also
the basis for \textsc{chickering-enum} (see also Appendix~\ref{appendix:chickeringsubsection}):
\begin{fact}[\citet{chickering1995transformational}] \label{fact:chickering}
  For any two Markov equivalent DAGs $D$ and $D'$ there is a
  sequence of Markov equivalent DAGs
  $\langle D = D_{1}, \dots, D_{k} = D' \rangle$ such that
  $D_{i}$ and $D_{i+1}$ have SHD one.
\end{fact}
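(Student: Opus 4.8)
The plan is to prove this exactly as \citet{chickering1995transformational} does, via \emph{covered-edge reversals}. Call an edge $u \to v$ of a DAG $D$ \emph{covered} if $\Pa(u) = \Pa(v) \setminus \{u\}$. The first step I would record is the standard characterization: reversing a single edge $u \to v$ of $D$ yields a Markov equivalent DAG \emph{if and only if} $u \to v$ is covered. The ``if'' direction is a short but genuine case check -- the skeleton is untouched; no directed cycle is created because $v$ keeps exactly $u$'s other parents; and no $v$-structure is created or destroyed at $u$, at $v$, or at any third vertex, again because $\Pa(v) = \Pa(u) \cup \{u\}$. The ``only if'' direction follows by tracing, from a non-covered edge, either a short directed cycle or a triple that is a $v$-structure in exactly one of the two graphs. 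Since Markov equivalent DAGs share a skeleton, two of them at SHD one differ by exactly one edge reversal; hence Fact~\ref{fact:chickering} is equivalent to: the graph on $[G]$ whose edges join DAGs that differ by a covered-edge reversal is connected. It therefore suffices to prove the one-step claim: if $D \neq D'$ are Markov equivalent, then $D$ has a covered edge oriented the opposite way in $D'$; reversing it produces $D''$ Markov equivalent to $D'$ with $\mathrm{shd}(D,D'') = 1$ and $\mathrm{shd}(D'',D') = \mathrm{shd}(D,D')-1$, and an induction on $\mathrm{shd}(D,D')$ then yields the whole sequence.

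For the one-step claim I would fix a topological ordering $\pi$ of $D'$, consider the nonempty set of edges oriented differently in $D$ and $D'$, and among them pick the edge whose $D$-orientation is $u \to v$ (so $v \to u$ in $D'$) with $v$ as \emph{late as possible} in $\pi$. The claim is that this $u \to v$ is covered in $D$. If not, then either (a) some $p \in \Pa_D(v)$ is neither $u$ nor in $\Pa_D(u)$, or (b) some $q \in \Pa_D(u)$ is not in $\Pa_D(v)$. In case (a): if $p$ is non-adjacent to $u$, then $p \to v \leftarrow u$ is a $v$-structure of $D$ but not of $D'$, contradicting equivalence; if $p$ is adjacent to $u$ then $u \to p$ in $D$, and pushing the triangle on $\{u,p,v\}$ through acyclicity of $D'$ forces the edge $p - v$ to be reversed with its $D'$-head $p$ strictly after $v$ in $\pi$, contradicting the extremal choice. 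Case (b) is handled symmetrically: $q$ adjacent to $v$ would create a directed cycle in $D$, while $q$ non-adjacent to $v$ forces either a $v$-structure mismatch at $u$ or a reversed edge with head after $v$ in $\pi$. All branches close, so $u \to v$ is covered, which together with the first paragraph completes the induction.

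The main obstacle is the extremal argument of the second paragraph: choosing the reversed edge so that \emph{every} way of failing to be covered collapses, and keeping the case analysis airtight -- especially the sub-case where the offending parent is itself adjacent to $u$, where one must play acyclicity of $D$ and of $D'$ against the maximality of $v$ in $\pi$ at the same time. The ``only if'' half of the reversal characterization is also not quite free: it needs the $v$-structure bookkeeping at $u$, at $v$, and at each common neighbour. An alternative organization of the induction, perhaps cleaner to present, peels a sink $s$ of $D'$: one shows a sequence of equivalence-preserving covered-edge reversals in $D$ can make $s$ a sink of $D$ with the parent set it has in $D'$, then deletes $s$ (legitimate, since a sink only occurs as the collider of a $v$-structure, so deletion preserves both skeleton and $v$-structures) and recurses on $n$; but the step ``make $s$ a sink'' hides the very same combinatorial core, namely exhibiting a covered edge out of $s$ to reverse -- naturally the edge to the $D$-topologically-last child of $s$ -- so I expect no real savings there.
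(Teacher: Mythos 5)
You have reconstructed the right architecture: the paper itself does not prove this statement but cites \citet{chickering1995transformational}, whose argument is precisely the covered-edge-reversal induction you describe (reversal of a covered edge preserves Markov equivalence; among the edges oriented differently in $D$ and $D'$ there is always one that is covered in $D$; reversing it reduces the SHD by one). The gap is in your key selection lemma in the second paragraph: the claim that taking a topological order $\pi$ of $D'$ and choosing a differing edge $u \to v$ (in $D$-orientation) with the $D$-head $v$ as late as possible in $\pi$ yields a covered edge is false. Counterexample on three vertices: let $D$ have edges $u \to p$, $p \to v$, $u \to v$ and let $D'$ have $p \to u$, $p \to v$, $v \to u$. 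Both are complete DAGs, hence Markov equivalent. The differing edges are $u$--$p$ and $u$--$v$, and the unique topological order of $D'$ is $p, v, u$, so your rule uniquely selects $u \to v$. But this edge is not covered in $D$ (we have $\Pa_D(u) = \emptyset$ while $\Pa_D(v)\setminus\{u\} = \{p\}$), and reversing it does not even give a DAG: it creates the cycle $v \to u \to p \to v$. The failure sits exactly in your case (a) with the offending parent $p$ adjacent to $u$: acyclicity of $D'$ on the triangle $\{u,p,v\}$ only forces that one of the edges $u$--$p$ or $p$--$v$ is reversed; in the branch where $u$--$p$ is reversed and $p \to v$ is kept, the reversed edge has $D$-head $p$, which lies \emph{before} $v$ in $\pi$, so no contradiction with your extremal choice arises --- and that branch is what the counterexample realizes. (You also slide between $D$-heads and $D'$-heads when invoking extremality, and you never break ties in $v$, where coveredness can hold for one choice of $u$ and fail for another, e.g.\ for two complete DAGs whose orders are reverses of each other.)

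The repair is known but genuinely needs more than your single extremal criterion: Chickering's Find-Edge makes a \emph{two-level} extremal choice (an extremal endpoint with respect to a topological order, and then, among the differing edges at that endpoint, an extremal co-endpoint), and the proof that the selected edge is covered uses both extremalities; in the counterexample above such a rule selects $u \to p$, which is indeed covered, rather than the edge your rule forces. Your alternative sink-peeling organization can also be completed, but, as you yourself note, it hides the same combinatorial core. So the overall plan is the same as the cited source's and is salvageable, but as written the central lemma is false and the induction does not go through.
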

The statement also holds for two consistent
extensions of a \emph{PDAG} in the sense that all intermediate DAGs
are also consistent extensions of this PDAG.

\begin{corollary}\label{corollary:pdagchickering}
  For any two consistent extensions $C$ and $C'$ of PDAG $G$ there is
  a sequence of consistent extensions of $G$
  $\langle C = D_{1}, \dots, D_{k} = C' \rangle$ such that
  $D_{i}$ and $D_{i+1}$ have SHD one.
\end{corollary}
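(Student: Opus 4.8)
The plan is to reduce the PDAG statement to Fact~\ref{fact:chickering} by showing that the single-edge-reversal path guaranteed there can be chosen to stay inside the set of consistent extensions of $G$. First I would recall the precise form of Chickering's result: between two Markov equivalent DAGs $C$ and $C'$, the intermediate sequence $\langle C = D_1, \dots, D_k = C' \rangle$ of SHD-one steps is obtained by repeatedly reversing a \emph{covered} edge (an edge $u \to v$ with $\Pa(u) \cup \{u\} = \Pa(v)$), and reversing a covered edge preserves both the skeleton and the set of v-structures, hence keeps one inside the same MEC. The key observation is that such a reversal never \emph{creates} a new directed edge out of nothing and never changes the skeleton: the only change is the orientation of the single edge between $u$ and $v$. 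So if $C$ and $C'$ both agree with $G$ on every edge that is directed in $G$, I must argue that the intermediate DAGs do too.

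The main step, then, is the following claim: in Chickering's construction, a covered edge $u \to v$ that gets reversed along the way is never an edge that is already oriented in $G$. I would argue this by contradiction. Suppose $u \to v$ is directed in $G$ and the path from $C$ to $C'$ reverses it at some DAG $D_i$ (so $D_i$ has $u \to v$ and $D_{i+1}$ has $u \leftarrow v$). Since $D_1 = C$ and $D_k = C'$ both have $u \to v$ (both being consistent extensions of $G$), the edge must be reversed back again later in the sequence; in particular it is oriented $u \leftarrow v$ in some $D_j$ with $i < j \le k$. But $D_j$ is Markov equivalent to $C$ — it has the same skeleton and v-structures. I would then invoke the fact that the orientation forced by $G$ on $u \to v$ is precisely the statement that \emph{every} DAG in $[G]$ (equivalently: every DAG Markov equivalent to $C$ and agreeing with the \emph{already present} orientations/v-structures of $G$) orients it $u \to v$; this is exactly what it means for $u \to v$ to be an edge of the PDAG/MPDAG $G$ rather than an undirected edge. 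More carefully, since $G$ is a PDAG with consistent extension $C$, and $D_j \in [C]$ shares the skeleton and v-structures of $C$, and all of $G$'s directed edges except possibly $u \to v$ are still respected along the path (induction), $D_j$ would have to be a consistent extension of $G$ too, contradicting that it has $u \leftarrow v$. So no $G$-oriented edge is ever reversed, and every $D_i$ is a consistent extension of $G$.

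To make this rigorous I would set it up as an induction on the position along the sequence, maintaining the invariant ``$D_i$ is a consistent extension of $G$,'' using the contradiction argument above at each reversal step. The subtle point — and the part I expect to be the main obstacle — is pinning down the precise sense in which ``$u \to v$ is directed in $G$'' forces its orientation in every relevant DAG: one must be careful that $G$ is an arbitrary PDAG (not necessarily an MPDAG), so the constraint is not literally ``every DAG in the MEC of $\mathrm{skel}(G)$ with $G$'s v-structures,'' but rather the definition of consistent extension (same skeleton, same v-structures, respects the directed edges of $G$). The clean way around this is to first pass to the MPDAG $G^*$ obtained by closing $G$ under the Meek rules — by the discussion in the PDAG/MPDAG section, $G$ and $G^*$ have the same consistent extensions — and then use that a directed edge of an MPDAG is directed in \emph{every} consistent extension by maximality. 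With that reformulation, the contradiction argument goes through cleanly: $D_j$ being Markov equivalent to $C$ and respecting all directed edges of $G^*$ except $u \to v$ cannot orient that edge the other way without violating either acyclicity or the Meek-rule closure, contradiction. The remaining verification — that Chickering's covered-edge reversals are exactly SHD-one steps and stay in the MEC — is immediate from Fact~\ref{fact:chickering} itself and needs no extra work.
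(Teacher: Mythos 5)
Your overall plan---reduce to Chickering's covered-edge sequence and show that no edge directed in $G$ is ever reversed, so that all intermediate DAGs remain consistent extensions---is the same reduction the paper uses, but the way you try to establish the key claim has a genuine gap. Your contradiction argument rests on the assertion that a DAG $D_j$ Markov equivalent to $C$ which respects all directed edges of $G$ (or of its Meek closure $G^*$) except $u \rightarrow v$ ``cannot orient that edge the other way without violating either acyclicity or the Meek-rule closure.'' This is false: directed edges of a PDAG/MPDAG that come from background knowledge need not be compelled by the Markov equivalence class at all. Take $G = G^*$ consisting of the single directed edge $a \rightarrow b$ on two vertices; then $C = a \rightarrow b$ is its unique consistent extension, yet $a \leftarrow b$ is Markov equivalent to $C$, violates only that one edge, and creates no cycle and no Meek-rule conflict. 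Passing to the MPDAG does not help, because Meek closure of background knowledge still contains non-compelled edges---their fixedness among consistent extensions is exactly what is being assumed, which is why your intermediate step ``$D_j$ would have to be a consistent extension of $G$ too, contradicting that it has $u \leftarrow v$'' merely restates the supposition rather than deriving a contradiction. In short, no forcing argument can rule out the \emph{existence} of Markov equivalent DAGs that reverse a background edge; what must be ruled out is that Chickering's particular construction ever reverses one.

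What closes the gap---and is the paper's one-line proof---is the quantitative form of Chickering's theorem (Theorem~\ref{theorem:chickering} in the appendix, stronger than Fact~\ref{fact:chickering} as you use it): the sequence from $C$ to $C'$ consists of exactly $\mathrm{shd}(C, C')$ covered-edge reversals ending at $C'$, so each reversal must strictly decrease the Hamming distance to $C'$, and hence only edges on which $C$ and $C'$ \emph{differ} are ever reversed. Since $C$ and $C'$ are both consistent extensions of $G$, they agree on every edge directed in $G$; those edges are therefore never touched, and each intermediate DAG, being Markov equivalent to $C$ with $G$'s directed edges intact, is a consistent extension of $G$. Your proposal never invokes this counting property, and without it the argument cannot be completed along the lines you describe.
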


\begin{proof} 
  Only differing edges between $C$ and $C'$ are reversed in the
  constructive proof of~\citet{chickering1995transformational}. Hence,
  all background edges stay fixed during the transformation.
\end{proof}

This means that it is possible to go from one DAG to
another with only single edge reversals for CPDAGs as well as for
PDAGs and MPDAGs.
The task we are trying to solve, however, is to enumerate \emph{all}
members of an MEC, meaning
the goal is to find a sequence in which
\emph{every} DAG occurs exactly once. It can be shown
that such a sequence with SHD at most one does
indeed \emph{not} exist. Fig.~\ref{fig:counterexample} provides
an example that does not even allow a sequence of SHD \emph{two.}

However, if we permit \emph{three} edge reversals between consecutive
DAGs we can always find such a sequence:
\begin{theorem}\label{theorem:dist3seq}
  Every MEC 
  can be represented as sequence $\langle D_1,
  D_2, \dots \rangle$ of Markov equivalent DAGs such that $D_i$ and
  $D_{i+1}$ have SHD at most three.
\end{theorem}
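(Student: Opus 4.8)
The plan is to reduce the statement for a general MEC to the case of a single UCCG (equivalently, to AMOs of a connected chordal graph), and then build the sequence recursively on the structure of that graph. First I would observe that, by Fact~\ref{fact:reduction}, a DAG in $[G]$ is specified by independently choosing an AMO for each UCCG of $G$; if each UCCG's AMOs can be listed so that consecutive ones differ by SHD at most one, then a standard ``product'' construction (a Gray-code-style traversal over the Cartesian product of the per-component sequences) lists all of $[G]$ with consecutive SHD at most one as well. Hence it suffices to handle a single UCCG $H$, and in fact it would be enough to get SHD at most one there; the slack up to three in the theorem is what we will actually need because the recursive step on $H$ does not quite achieve one.

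The core is an induction on the chordal graph $H$. The plan is to pick a vertex $v$ (a good choice is a simplicial vertex, or — mirroring \textsc{mcs-enum} — a first MCS vertex) and to partition the AMOs of $H$ according to the ``local picture at $v$''. Concretely, fix an ordering of the relevant sub-cases and, for each, recursively enumerate the corresponding AMOs of the smaller chordal graph(s) obtained after committing the orientation around $v$, using Fact~\ref{fact:counting} to know that MCS orderings realize exactly the AMOs. Within one sub-case, the inductive hypothesis gives a sequence with consecutive SHD at most three (or at most one, if the stronger inductive statement goes through); the only place larger jumps can occur is \emph{between} consecutive sub-cases, where the orientations of a bounded number of edges incident to $v$ flip. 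The key combinatorial claim to nail down is that the sub-cases can be linearly ordered so that passing from one to the next changes only a constant number — at most three — of edge orientations, and simultaneously the recursive sequences can be ``reversed or not'' (boustrophedon/snake order) so that the endpoints match up across the seam. This is exactly the Gray-code idea: order the local states so adjacent ones are close, and alternate the direction of each block.

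I expect the main obstacle to be controlling the ``seam'' cost precisely and showing it never exceeds three. Flipping the orientation of one edge $v - w$ is not a legal AMO move in isolation — it can force, via acyclicity and the no-v-structure condition (Meek rule~1), a cascade of further reorientations — so the naive bound on the number of changed edges at a seam is not obviously constant. The plan to handle this is to choose $v$ simplicial: then $\Ne(v)$ induces a clique, all of whose edges are already forced by any AMO, so the ``state at $v$'' is essentially just which neighbor is the unique parent of $v$ among $\Ne(v)$ (a sink/source structure on the clique), and moving $v$ from being a source to being dominated by one particular neighbor, then shifting that neighbor along the clique, changes only $O(1)$ edges at a time; pushing this through is where the constant $3$ (rather than $1$ or $2$) enters and where the compatibility with the recursive endpoints must be argued. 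As a sanity check, the example in Fig.~\ref{fig:counterexample} shows $3$ cannot be improved to $2$, so the analysis must be tight enough to be consistent with that lower bound but no tighter. Finally I would package the whole thing so that the same recursion, driven by \textsc{mcs-enum}'s choice structure, gives an explicit algorithm producing the sequence, and note it carries over verbatim to PDAGs/MPDAGs via Corollary~\ref{corollary:pdagchickering} and the bucket decomposition.
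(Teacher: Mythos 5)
There is a genuine gap, and it sits exactly where you locate it yourself: the ``seam'' claim. Your plan reduces everything to producing, for a single UCCG, an ordering of its AMOs in which the sub-cases at a chosen simplicial vertex $v$ can be linearly arranged with at most three edge flips between blocks, \emph{and} in which each recursive block can be traversed (possibly reversed) so that its endpoints are compatible with the neighbouring blocks. That endpoint-compatibility requirement silently strengthens the induction hypothesis from ``there exists a short-step Hamiltonian sequence of the AMOs'' to something like ``between any two prescribed AMOs there is a short-step Hamiltonian path,'' and nothing in the proposal argues that this stronger statement holds; it is not at all clear that it does. Already the star $K_{1,3}$ shows that SHD-$1$ Hamiltonian sequences of AMOs need not exist, and Fig.~\ref{fig:counterexample} shows SHD-$2$ sequences can fail, so the flip graph on AMOs can be tree-like with many leaves hanging off a single hub; in such graphs one has very little freedom in choosing where a traversal of a block starts and ends, which is precisely what your boustrophedon/Gray-code gluing needs. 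Also, reorienting the edge between $v$ and its parent is not a free local move: keeping the rest of the orientation fixed while changing the local state at $v$ is only legal when the flipped edge is covered, so even the claim that adjacent sub-cases differ by $O(1)$ legal changes needs the rest of the AMO at the seam to be of a special form --- again an endpoint condition you have not secured. As it stands, the plan restates the theorem's difficulty rather than resolving it.

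For comparison, the paper avoids all structural analysis of chordal graphs. It forms the graph whose nodes are the DAGs of the MEC, joined when they differ by a single covered-edge reversal; by Chickering's transformational characterization (Fact~\ref{fact:chickering}) this graph is connected. It then invokes a purely graph-theoretic fact: every connected graph admits an ordering of its nodes, each exactly once, with consecutive nodes at distance at most three, obtained from a DFS that outputs even-depth nodes at discovery and odd-depth nodes at finishing time. This sidesteps both the product construction and the seam/endpoint problem entirely, and it is why the result transfers verbatim to PDAGs/MPDAGs (via Corollary~\ref{corollary:pdagchickering}) and to MAGs. If you want to salvage your route, you would need to prove the two-endpoint Hamiltonian-path strengthening for AMOs of chordal graphs, which is a substantially harder statement than the theorem itself.
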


\begin{proof}
  For a constructive, proof consider the graph that contains all DAGs
  in the MEC as
  nodes.\footnote{We use the term node instead of vertex here to avoid
    confusion with the vertices of the DAGs.} In that graph connect
  two nodes with an edge if the DAGs can be transformed into each other
  by a single edge reversal (hence, these have SHD one). By
  Fact~\ref{fact:chickering} the graph is connected.

  Every connected graph has a sequence $\langle p_1, p_2, \dots
  \rangle$ that
  contains every node exactly once such that the distance between
  consecutive nodes is at most three.\footnote{The authors became
    aware of this graph property due to a problem posed by Jorke de Vlas in
    the annual programming contest BAPC (Problem H at BAPC 2021: \url{https://2021.bapc.eu/}).} This sequence can be
  constructed by performing a depth-first-search (DFS) starting at an
  arbitrary node $r$ and appending
  nodes with an even distance from $r$ in the DFS tree when they
  are discovered and nodes with an odd distance from $r$ when
  they are fully processed (essentially mixing pre- and post-order depending
  on the layer of the DFS tree). The SHD between two output
  nodes is never larger than three: When going down the DFS tree,
  every second node is output, when going up (after last outputting
  in odd layer $i$) the node in layer $i-2$ is output after it is
  finished. Hence, if it has no unvisited neighbors, the SHD is
  two. If it does, one of
  these gets explored and, hence, immediately output as it is in even
  layer $i-1$. In this case, the SHD to the last output is three.
\end{proof}

Due to Corollary~\ref{corollary:pdagchickering}, 
this result generalizes to PDAGs:

\begin{corollary}
  The consistent extensions of PDAG $G$ can be represented as sequence $\langle D_1,
  D_2, \dots \rangle$ of consistent extensions of $G$ with SHD at most three.
\end{corollary}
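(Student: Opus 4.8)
The plan is to combine the two results that are already in place. First, recall that Corollary~\ref{corollary:pdagchickering} asserts that any two consistent extensions $C$ and $C'$ of a PDAG $G$ are connected by a chain of consistent extensions of $G$ in which each successive pair has SHD exactly one. Second, recall the purely graph-theoretic fact used in the proof of Theorem~\ref{theorem:dist3seq}: every connected graph admits an ordering of its nodes that visits each node exactly once with consecutive nodes at distance at most three, obtained via the mixed pre/post-order DFS.

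So I would proceed exactly as in the proof of Theorem~\ref{theorem:dist3seq}, but with the ambient ``meta-graph'' adjusted. Build the graph $\mathcal{H}$ whose nodes are the consistent extensions of $G$, placing an edge between two nodes when the corresponding DAGs differ by a single edge reversal (equivalently, have SHD one) \emph{and} both are consistent extensions of $G$. By Corollary~\ref{corollary:pdagchickering}, every edge reversal appearing along a Chickering-style transformation between two consistent extensions stays within the set of consistent extensions of $G$, so $\mathcal{H}$ is connected. Now apply the DFS-based ordering to $\mathcal{H}$: perform a depth-first search from an arbitrary root, output nodes at even DFS-depth when first discovered and nodes at odd DFS-depth when fully processed. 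The identical case analysis as in Theorem~\ref{theorem:dist3seq} shows that any two consecutive outputs are at distance at most three in $\mathcal{H}$, and since each edge of $\mathcal{H}$ corresponds to SHD one, the two consecutive output DAGs have SHD at most three. Every consistent extension of $G$ appears exactly once because the ordering is a permutation of the nodes of $\mathcal{H}$.

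There is really no new obstacle here; the only thing that needs a sentence of care is confirming that distance at most three in $\mathcal{H}$ translates into SHD at most three between the DAGs — this follows because a path of length $\le 3$ in $\mathcal{H}$ is a sequence of $\le 3$ single-edge reversals, and each reversal changes the edge relation of exactly one ordered pair (actually one unordered pair of vertices), so the symmetric difference of the two edge sets has size at most three. One might worry whether consecutive reversals along such a short path could ``cancel'' and still be counted, but that can only make the SHD smaller, not larger, so the bound of three is safe. Hence the corollary follows immediately, and the proof is essentially a one-line appeal to Corollary~\ref{corollary:pdagchickering} together with the DFS argument already spelled out for Theorem~\ref{theorem:dist3seq}.

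\begin{proof}
  By Corollary~\ref{corollary:pdagchickering}, the graph whose nodes are the
  consistent extensions of $G$ and whose edges connect DAGs of SHD one is
  connected. Applying the depth-first-search ordering described in the proof of
  Theorem~\ref{theorem:dist3seq} to this graph yields a sequence containing every
  consistent extension of $G$ exactly once in which consecutive nodes are at
  distance at most three; since each edge corresponds to SHD one, consecutive DAGs
  in the sequence have SHD at most three.
\end{proof}
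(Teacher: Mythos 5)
Your proposal is correct and matches the paper's intended argument: the paper derives this corollary by noting that Corollary~\ref{corollary:pdagchickering} makes the meta-graph on consistent extensions (with SHD-one edges) connected, and then reusing the DFS-based ordering from the proof of Theorem~\ref{theorem:dist3seq}. Your added remark that cancellations along a length-$\le 3$ path can only decrease the SHD is a fine (and correct) point of care, though not strictly needed.
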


We note that 
\textsc{meek-enum},
\textsc{chickering-enum} and \textsc{mcs-enum} do not have this property.

\begin{lemma}
  Sequences of DAGs produced by \textsc{meek-enum}
  may contain consecutive DAGs with
  SHD larger than three.
\end{lemma}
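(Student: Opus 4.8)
The plan is to exhibit a concrete small CPDAG on which \textsc{meek-enum} is forced, at some step of its recursion, to jump between two consecutive output DAGs that differ in four or more edge orientations. Recall how \textsc{meek-enum} works: it repeatedly picks an undirected edge $a - b$ of the current (maximally oriented) graph, branches on the two orientations $a \rightarrow b$ and $a \leftarrow b$, closes each branch under the Meek rules, and recurses; the DAGs are output at the leaves of this binary recursion tree, in the left-to-right order the branches are visited. Two \emph{consecutive} outputs are thus two leaves that are adjacent in this order, i.e.\ they share a path up to some common ancestor node in the recursion tree and then diverge. The key observation is that a single branching decision ``$a \rightarrow b$ versus $a \leftarrow b$'' can trigger, via the Meek rules, the forced orientation of \emph{many} further edges, so the two subtrees hanging off one internal node can have first/last leaves that are far apart in SHD.

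Concretely, I would take a CPDAG whose undirected part contains a long induced path $v_1 - v_2 - \cdots - v_\ell$ together with a gadget that forces a v-structure-free, acyclic completion: e.g.\ attach to each $v_i$ a directed ``collider-avoidance'' configuration so that once the first edge of the path is oriented, the first Meek rule (an induced $a \rightarrow b - c$ is completed to $a \rightarrow b \rightarrow c$) propagates the orientation down the whole path. Then at the top-level branching on $v_1 - v_2$, the branch $v_1 \rightarrow v_2$ closes under Meek to orient the entire path $v_1 \rightarrow v_2 \rightarrow \cdots \rightarrow v_\ell$, while the branch $v_1 \leftarrow v_2$ orients it $v_1 \leftarrow v_2 \leftarrow \cdots \leftarrow v_\ell$ (again by repeated applications of Meek rule~1 in the other direction, using symmetry of the gadget). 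If $\ell \ge 4$ these two graphs differ in at least four edges, so the last output in the first subtree and the first output in the second subtree have $\mathrm{shd} > 3$; picking $\ell$ as small as possible while keeping the whole gadget a valid CPDAG (all undirected components chordal, no hidden v-structures forced the wrong way) gives the counterexample. I would present one explicit such CPDAG and simply trace the execution of \textsc{meek-enum} on it, pointing to the two consecutive outputs and counting the differing edges.

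The main obstacle is making the gadget \emph{tight and correct}: I must ensure (i) the chosen graph really is a CPDAG (every chain component chordal, and it is closed under all four Meek rules so that \textsc{meek-enum} actually starts by branching on the edge I want), (ii) orienting that one edge genuinely forces a long cascade under the Meek rules in \emph{both} directions, not just one, and (iii) no earlier branching step of \textsc{meek-enum} on some other edge sneaks in and breaks the intended order of outputs. Point (ii) is the delicate part, because Meek rule~1 only propagates along a path when each intermediate vertex has no other parent that would create a v-structure; the cleanest fix is to route the forced path through vertices each already having a fixed incoming directed edge in the CPDAG (so rule~1 fires) while keeping the undirected skeleton a single induced path (so no chord spoils the argument). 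Once such a gadget is fixed, the verification that the two consecutive DAGs differ in at least four positions is a direct edge count, and the lemma follows. Since the excerpt defers the formal description of \textsc{meek-enum} to an appendix, in the write-up I would phrase the trace at the level of ``branch, apply Meek rules, recurse'' and refer to that appendix for the precise pseudocode.
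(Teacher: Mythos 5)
There is a genuine gap: the mechanism you rely on cannot be realized by any valid CPDAG. You want a single top-level branching on $v_1 - v_2$ whose two branches each force the \emph{entire} path, in opposite directions, and your proposed fix for making Meek rule~1 fire is to give internal path vertices ``a fixed incoming directed edge in the CPDAG''. But a CPDAG is itself closed under the Meek rules, so a configuration $z \rightarrow v_i - v_{i+1}$ with $z$ nonadjacent to $v_{i+1}$ cannot survive in the input: rule~1 would already have oriented $v_i \rightarrow v_{i+1}$, contradicting that edge being undirected. Hence your conditions (i) and (ii) are mutually exclusive, and the symmetric cascade cannot be triggered by pre-existing directed edges. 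The obstruction is even more basic: if \emph{both} orientations of one undirected edge forced a complete orientation of its undirected component, that component would have exactly two AMOs, and by Chickering's transformational characterization (Fact~\ref{fact:chickering}) two sole members of a class must be at SHD one (concretely, an undirected path on $\ell$ vertices has $\ell$ AMOs, so both branches are singletons only when $\ell = 2$). So the two consecutive outputs produced by your scheme could never be at distance four.

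The paper avoids this by using the cascade \emph{asymmetrically}, and you could repair your argument the same way: the cascade is triggered by the freshly oriented edge, not by background arrows, and only in one branch. In the spider-shaped CPDAG of Fig.~\ref{fig:counterexample} (all edges undirected, legs of length two around the center $g$), \textsc{meek-enum} may first branch on $a - b$; orienting $a \rightarrow b$ makes rule~1 propagate everything away from $a$, giving one output. In the branch $a \leftarrow b$ nothing is forced, and the algorithm may next pick the distant edge $h - i$ and orient it $h \leftarrow i$, whereupon rule~1 orients everything away from $i$; this second, consecutive output differs from the first on all four edges of the path $a - b - g - h - i$, i.e.\ SHD four. (A plain undirected path on five vertices works just as well with the analogous adversarial choices.) Your overall intuition—use rule-1 cascades to push consecutive outputs apart—is the right one, but the concrete construction you sketch would fail, and no explicit counterexample is exhibited, so the proof as proposed does not go through.
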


\begin{proof}
  Consider the CPDAG shown in Fig.~\ref{fig:counterexample}. Since
  \textsc{meek-enum} has no preferences on the edge it orients first, it may
  start with the edge $a \rightarrow b$. All other edge directions
  would then follow from the first Meek rule
  yielding the output DAG shown in Fig.~\ref{fig:counterexample}.
  The orientation $a \leftarrow b$ is tried afterward, which would
  result in no further directed edges. Then, assume the next undirected edge
  picked by the algorithm is the ones between $h$ and $i$. It may be
  oriented as $h \leftarrow i$ yielding a DAG with SHD 4 to the previously output DAG. 
\end{proof}

Similar arguments
hold for \textsc{chickering-enum} and \textsc{mcs-enum}, the former could end up in a state where the
only DAGs left are the one with edge $a \rightarrow b$ and the
one with $h \leftarrow i$ and the latter could start with vertex $a$
and afterward choose $i$ as the first vertex~--~yielding again the same DAGs.

\begin{corollary}
  Sequences of DAGs produced by \textsc{chickering-enum} and \textsc{mcs-enum}
  may contain consecutive DAGs with
  structural Hamming distance larger than three.
\end{corollary}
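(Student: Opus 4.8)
The final statement is the Corollary asserting that \textsc{chickering-enum} and \textsc{mcs-enum} may produce consecutive DAGs with SHD larger than three. My plan is to make rigorous the informal argument already indicated in the text following the preceding Lemma, using the same CPDAG from Fig.~\ref{fig:counterexample} as the witness in both cases. The key observation is that this CPDAG is a tree (in skeleton), so each consistent extension is determined by its unique source vertex; flipping between extensions corresponds to moving the source along an edge, and two extensions whose source vertices are at tree-distance $d$ have SHD exactly $d$. In particular the ``$a$-source'' DAG and the ``$i$-source'' DAG differ in the orientation of every edge on the $a$--$i$ path, which has length four, so their SHD is four.

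For \textsc{mcs-enum}: I would argue that the algorithm is free to begin an MCS with vertex $a$ as the first visited vertex; after visiting $a$, the highest-label set consists of $a$'s unique neighbor $b$, forcing the second vertex to be $b$ (here the tree structure makes all of $a$'s ``alternatives'' empty since $a$ has degree one), and continuing greedily yields the topological order in which $a$ is the unique source — i.e., the DAG drawn on the left inside the MEC ellipse of Fig.~\ref{fig:counterexample}. This is the first DAG output. Then I would exhibit a later branch of the recursion in which, after backtracking, the algorithm chooses $i$ as the first visited vertex; by the same degree-one argument this run produces the DAG whose unique source is $i$. Since these two DAGs can be output consecutively (no other output is forced between the completion of the $a$-subtree of the recursion and the start of the $i$-subtree — or, more carefully, I only need that \emph{some} consecutive pair along the output sequence realizes SHD $\ge 4$, which follows because the output sequence is a permutation of $[G]$ and cannot interpolate the tree-distance-four jump in steps of SHD $\le 3$ while also covering the ``leaf DAGs'', exactly as in the discussion of Fig.~\ref{fig:counterexample}), the SHD between them is four.

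For \textsc{chickering-enum}: the argument is even more direct. That algorithm traverses $[G]$ via single-edge reversals and must avoid re-outputting DAGs it has already produced. I would describe a state of the traversal in which all DAGs have been output except the $a$-source DAG and the $i$-source DAG, and in which the ``current'' DAG is one of these two; since every single-edge-reversal neighbor of the remaining DAG has already been output, the only legal next output is the other remaining DAG, and its SHD to the current one is four. (Such a state is reachable: do a DFS/traversal of the reversal graph that exhausts the interior before the two leaves $a$ and $i$; this is possible precisely because in a tree-shaped MEC the reversal graph is again a tree isomorphic to the CPDAG's skeleton, so removing an internal node's subtree in the right order isolates these two leaves last.) Combining the two cases gives the Corollary.

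The main obstacle is the bookkeeping needed to certify that the two ``bad'' DAGs actually appear \emph{consecutively} in the output sequence, rather than merely appearing somewhere. The cleanest way around this — which I would adopt — is to avoid tracking the exact schedule and instead invoke the counting/parity-style argument already used for Fig.~\ref{fig:counterexample}: in a tree-shaped MEC with a long path, any enumeration whose consecutive SHD is bounded by three, combined with the constraint that each DAG is output exactly once, is impossible because the central DAG on the long path is a cut node that can be ``passed through'' only once, yet it lies on the unique path between too many leaf DAGs. Since neither \textsc{chickering-enum} nor \textsc{mcs-enum} has any mechanism enforcing a small-SHD schedule — both may, by the choices exhibited above, begin their enumeration at an extremal leaf — their output sequences on this instance must somewhere exhibit SHD at least four, establishing the Corollary.
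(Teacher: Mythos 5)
Your two direct constructions are essentially the paper's argument and are sound: for \textsc{mcs-enum}, a run that starts the MCS at $a$ and, after backtracking to the top level, picks $i$ next outputs the $a$-rooted and then the $i$-rooted DAG; for \textsc{chickering-enum}, a traversal that exhausts the interior of the reversal graph first leaves the $a$-rooted and $i$-rooted DAGs as the last two outputs, which are therefore consecutive, and their SHD is four (they disagree exactly on the four edges of the path $a-b-g-h-i$). For \textsc{mcs-enum} the only point you need to make explicit is why nothing is output in between: since the instance is a tree, every AMO is determined by its unique source, so the entire recursion branch whose first chosen vertex is $a$ produces exactly one output, and the $i$-branch's first output follows immediately.

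The genuine problem is the fallback argument that you explicitly say you would adopt. The claim that on this instance ``any enumeration whose consecutive SHD is bounded by three \dots is impossible'' is false: Theorem~\ref{theorem:dist3seq} shows that \emph{every} MEC, including this one, admits an enumeration with consecutive SHD at most three (the mixed pre/post-order DFS of the reversal graph achieves it here as well). Fig.~\ref{fig:counterexample} only rules out sequences with SHD at most \emph{two}. The same false premise underlies the parenthetical ``more careful'' remark in your \textsc{mcs-enum} paragraph: the output sequence being a permutation of $[G]$ does not force a jump of SHD $\ge 4$ anywhere. Consequently, the corollary cannot be obtained from any instance-level impossibility; it must come from exhibiting concrete schedules of the two algorithms, i.e., precisely the direct arguments you sketched first (and which the paper uses). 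Drop the final paragraph and flesh out the scheduling details instead.
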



Computationally, our results do not imply a better bound on
the delay in producing the sequence from
Theorem~\ref{theorem:dist3seq} and we leave this as an open
problem. The constructive algorithm (which we call \textsc{shd3-enum})
given in the proof of
Theorem~\ref{theorem:dist3seq} behaves similar to
\textsc{chickering-enum} and has delay $O(m^2)$ as every DAG
may have $m$ neighbors and we have to check for
each of them whether they
were already visited (between two outputs a constant number of
recursive calls are handled; see Appendix~\ref{subsection:appendixleq3}).
It seems unlikely that this can be improved
without further structural insights.

Lastly, we remark that the same idea can also be used in the more
general setting of enumerating maximal ancestral graphs (MAGs) without
selection bias, which are causal models allowing for latent
confounders and for which a
similar transformational characterization
exists~\cite{zhang2005transformational}.
A brief introduction to MAGs and a more detailed analysis are given in
Appendix~\ref{section:appendixmags}.
\begin{corollary}\label{corollary:mags}
  Every MEC of MAGs without selection bias
  can be represented as sequence $\langle M_1,
  M_2, \dots \rangle$ of Markov equivalent MAGs with SHD at most three.
\end{corollary}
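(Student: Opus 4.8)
The plan is to run the argument of Theorem~\ref{theorem:dist3seq} essentially verbatim, with Fact~\ref{fact:chickering} replaced by its counterpart for ancestral graphs. First I would invoke the transformational characterization of Markov equivalence for MAGs without selection bias due to \citet{zhang2005transformational}: any two Markov equivalent MAGs $M$ and $M'$ on the same vertex set are joined by a sequence $\langle M = N_0, N_1, \dots, N_k = M' \rangle$ of Markov equivalent MAGs in which each $N_{i+1}$ arises from $N_i$ by one elementary operation acting on a single edge (a legal edge reversal, or a legal replacement of a directed edge by a bidirected one or vice versa). Since a MAG without selection bias has only directed and bidirected edges, every such operation changes the endpoint marks of exactly one edge and hence alters the edge relation of exactly one pair $(a,b) \in V^2$; therefore $\mathrm{shd}(N_i, N_{i+1}) = 1$.

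With this in hand I would form the graph $H$ whose nodes are the MAGs of the given MEC and in which two nodes are adjacent precisely when the corresponding MAGs differ by one such elementary operation. By the transformational characterization $H$ is connected, and every edge of $H$ joins MAGs of SHD one. I would then apply the same purely graph-theoretic fact used in the proof of Theorem~\ref{theorem:dist3seq}: every connected graph admits an ordering of its nodes, each occurring exactly once, in which consecutive nodes lie at distance at most three, obtained from a DFS of $H$ by emitting a node at discovery time if it is in an even layer of the DFS tree and at finishing time if it is in an odd layer. Two consecutive nodes of this ordering are then at distance at most three in $H$, so by subadditivity of SHD along the connecting path their SHD is at most three, which yields the claimed sequence $\langle M_1, M_2, \dots\rangle$.

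The only step that is more than bookkeeping is the first one: one must check, against the precise list of operations in \citet{zhang2005transformational}, that each is indeed an \emph{SHD-one move}, i.\,e., that its effect is confined to the marks of one edge. The absence of selection bias is what keeps this clean, since it removes undirected components and the additional operations they would require; for general MAGs those cases would have to be handled separately. Everything downstream~--~connectivity of $H$, the mixed pre-/post-order DFS, and the constant bound of three~--~is identical to Theorem~\ref{theorem:dist3seq}, so I would simply refer to that proof rather than repeat it.
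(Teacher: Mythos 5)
Your proposal matches the paper's own argument: it invokes the transformational characterization of \citet{zhang2005transformational} for MAGs without selection bias, observes that each elementary move affects a single edge and hence has SHD one, and then reuses the DFS-based distance-three traversal from Theorem~\ref{theorem:dist3seq} on the connected graph of SHD-one moves. The only nitpick is that Zhang's operations are single edge-\emph{mark} changes (switching one edge between directed and bidirected), not edge reversals, but since either kind of move alters exactly one edge this does not affect the SHD-one claim or the rest of the argument.
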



In contrast, approaches such as \textsc{meek-enum}
do not exist for MAGs as analogue rules proposed
by~\citet{zhang2008completeness}
only complete the graph in case the edge marks are inferred by
observational data. If edge marks are chosen for the sake of
enumeration, these rules are not known to be 
complete. Generally, it
is an interesting direction for future work to investigate the
computational aspects of the enumeration of MECs of MAGs.

\section{Experiments}
In addition to the theoretical results, we also show that
\textsc{mcs-enum} and its generalizations are practically
implementable and significantly faster than previously used algorithms.

\begin{figure}
  \centering
  \begin{tikzpicture}[xscale=0.8, yscale=0.5]

    \begin{scope}[yscale=0.75]
      \plot[0.2cm]{
        0.00590,
        0.00859,
        0.01367,
        0.02314,
        0.04173,
        0.07717,
        0.14963
      }{ba.pine}{\textsc{mcs}}
      
      \plot{
        0.10779,
        0.26160,
        0.69816,
        1.41159,
        3.00775,
        5.97319,
        12.81452
      }{ba.blue}{\textsc{meek}}
      
      \plot[0.25cm]{
        0.02418,
        0.06906,
        0.25004,
        0.43234,
        0.98316,
        2.20901,
        6.59289
      }{ba.orange}{\textsc{shd3}}
      
      \plot[-0.25cm]{
        0.02462,
        0.06949,
        0.24948,
        0.43453,
        0.97970,
        2.20470,
        6.54484
      }{ba.violet}{\textsc{chickering}}
      
      \draw[semithick, ->, >={[round]Stealth}] (0,0) -- (0, 13)
      node[above] {Delay in ms};
      \foreach \y in {1,2,...,12} {
        \draw (0,\y) -- (-.25,\y) node[left] {\small\y};
      }
      
    \end{scope}

    \draw[semithick, ->, >={[round]Stealth}] (0,0) -- (9,0);
    \foreach [count=\x] \label in {16, 32, 64, 128, 256, 512, 1024}{
      \draw (\x,0) -- (\x, -0.25) node[below] {\small\label};
    }
    \node at (4,-1.5) {Number of Vertices};
  \end{tikzpicture}
  \caption{Average delay in milliseconds for enumerating the
    AMOs of random chordal graphs with $m = 3\cdot n$ edges. We
    compare the algorithms \textsc{meek-enum}, \textsc{chickering-enum}, \textsc{mcs-enum} and \textsc{shd3-enum}.
  }
  \label{fig:experiments}
\end{figure}
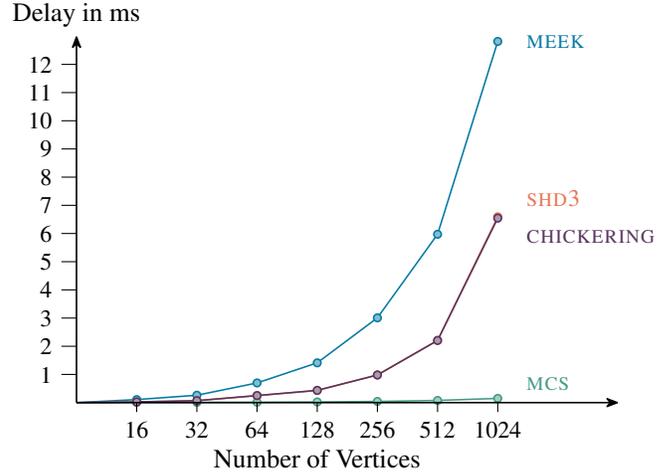

In Fig.~\ref{fig:experiments}, we compare the average delay of the
four approaches (\textsc{meek-enum}, \textsc{chickering-enum},
\textsc{mcs-enum}, \textsc{shd3-enum}) implemented in
Julia~\cite{bezanson2017julia}.
For each instance, the programs
were terminated after two minutes if the enumeration was not completed.
As the enumeration problem reduces to listing the AMOs of a chordal
graph (as shown in Fact~\ref{fact:reduction}), we consider as instances
undirected graphs generated
by randomly inserting edges, which do not violate chordality, until a
graph with $3\cdot n$ edges is reached. Note that these
instances are all CPDAGs, just fully undirected ones, and thus all
approaches can be applied to this setting.
In Appendix~\ref{appendix:experiments}, we also compare the
results for CPDAGs \emph{with} directed edges as well as for PDAGs,
which both lead to
very similar results. Moreover, we discuss the distribution of the
delay for the various algorithms.\footnote{The implementations
of the algorithms are available at \\ \url{https://github.com/mwien/mec-enum}.}

The results clearly show that \textsc{mcs-enum} is by far the
fastest among the algorithms. This is mainly due to the fact that the
other algorithms always incur a cost of at least $\Omega(n+m)$, whenever a single
edge is (re-)oriented. \textsc{meek-enum} needs to apply the four completion
rules, whereas \textsc{chickering-enum} and \textsc{shd3-enum}
require checking whether the resulting DAG was already output (which
might often be the case). Still, the latter algorithms are
significantly faster than \textsc{meek-enum} (at the cost of higher
memory demand), and notably have both very similar delay
(showing that the enumeration with SHD at most three  gives mainly
structural insights into Markov equivalence and has in itself no
computational advantage).

\section{Conclusion}
We have given the first formal and exhaustive treatment of the
fundamental problem of enumerating Markov equivalent DAGs. Our main
results are twofold: (i) we significantly improve the run-time of
enumeration by giving the first linear-time delay algorithm, which is
also practically effective and (ii) we
give structural insights into Markov equivalence by constructing an
enumeration sequence with minimal distance between
successive graphs. The concepts for (ii) are so general that they directly apply
to MAGs without selection bias as well.

As an open problem, it remains to find more efficient enumeration
algorithms for MAGs, where, currently, approaches in the spirit of both
\textsc{meek-enum} and
\textsc{mcs-enum} cannot be applied,
because similar structure does not exist for Markov equivalence of
MAGs or, at least, is not known.

\section*{Acknowledgments}
The research of Malte Luttermann was partly supported by the Medical Cause
and Effects Analysis (MCEA) project. The work of the last author was partly supported by the Deutsche Forschungsgemeinschaft (DFG) grant 471183316.

\bibliography{main}

\clearpage

\appendix

\begin{strip}
  \centering
  \textbf{\huge Appendix}
  \vspace*{1cm}
\end{strip}

\section{Formal Description of Related Algorithms}
\label{appendix:related}
\subsection{Enumerating Markov Equivalent DAGs Based on Meek's Rules}
\label{appendix:meeksubsection}
\citet{Meek1995} gave a complete set of four rules (R1 - R4), which,
when applied repeatedly, transform a PDAG into its maximal
orientation, i.\,e., orient all undirected edges, which are fixed in the
DAGs represented by the PDAG. Graphs completed under the Meek rules
are also called \emph{Maximally oriented PDAGs} (MPDAGs).

\begin{figure}
  \begin{tikzpicture}[scale=0.85]
    \node (a) at (-1,0) {$a$};
    \node (b) at (-1,-1) {$b$};
    \node (c) at (0,-1) {$c$}; 
    \draw [arc](a) -- (b);
    \draw [-](b) -- (c);
    
    \node at (0.3+0.5,-0.5) {$\Rightarrow$};
    \node at (0.3+0.5, 0.25) {\textbf{R1}};
    
    \node (a) at (0.6+1,0) {$a$};
    \node (b) at (0.6+1,-1) {$b$};
    \node (c) at (0.6+2,-1) {$c$}; 
    \draw [arc](a) -- (b);
    \draw [arc](b) -- (c);

    \node (a) at (1.3+3,0) {$a$};
    \node (b) at (1.3+3,-1) {$b$};
    \node (c) at (1.3+4,-1) {$c$}; 
    \draw [arc](a) -- (b);
    \draw [arc](b) -- (c);
    \draw [-] (a) edge (c);
    
    \node at (1.3+0.3+4.5,-0.5) {$\Rightarrow$};
    \node at (1.3+0.3+4.5, 0.25) {\textbf{R2}};
    
    \node (a) at (1.3+0.6+5,0) {$a$};
    \node (b) at (1.3+0.6+5,-1) {$b$};
    \node (c) at (1.3+0.6+6,-1) {$c$};
    \draw [arc](a) -- (b);
    \draw [arc](b) -- (c);
    \draw [arc] (a) edge (c);
    
    \node (a) at (-1,-2) {$a$};
    \node (d) at (0,-2) {$d$};
    \node (b) at (-1,-3) {$b$};
    \node (c) at (0,-3) {$c$}; 
    \draw [-](a) -- (b);
    \draw [-](a) -- (d);
    \draw [-](a) -- (c);
    \draw [arc](b) -- (c);
    \draw [arc](d) -- (c);
    
    \node at (0.3+0.5,-2.5) {$\Rightarrow$};
    \node at (0.3+0.5, -1.75) {\textbf{R3}};
    
    \node (a) at (0.6+1,-2) {$a$};
    \node (d) at (0.6+2,-2) {$d$};
    \node (b) at (0.6+1,-3) {$b$};
    \node (c) at (0.6+2,-3) {$c$}; 
    \draw [-](a) -- (b);
    \draw [-](a) -- (d);
    \draw [arc](a) -- (c);
    \draw [arc](b) -- (c);
    \draw [arc](d) -- (c);      
    
    \node (a) at (1.3+3, -2) {$a$};
    \node (d) at (1.3+4, -2) {$d$};
    \node (b) at (1.3+3, -3) {$b$};
    \node (c) at (1.3+4, -3) {$c$};
    \draw [-] (a) -- (b);
    \draw [-] (a) -- (c);
    \draw [-] (a) -- (d);
    \draw [arc] (d) -- (c);
    \draw [arc] (c) -- (b);
    
    \node at (1.3+0.3+4.5, -2.5) {$\Rightarrow$};
    \node at (1.3+0.3+4.5, -1.75) {\textbf{R4}};
    
    \node (a) at (1.3+0.6+5, -2) {$a$};
    \node (d) at (1.3+0.6+6, -2) {$d$};
    \node (b) at (1.3+0.6+5, -3) {$b$};
    \node (c) at (1.3+0.6+6, -3) {$c$};
    \draw [arc] (a) -- (b);
    \draw [-] (a) -- (c);
    \draw [-] (a) -- (d);
    \draw [arc] (d) -- (c);
    \draw [arc] (c) -- (b);      
  \end{tikzpicture}
  \caption{The four Meek rules that are used to characterize MPDAGs~\citep{Meek1995}.}
  \label{figure:meekRules}
\end{figure}
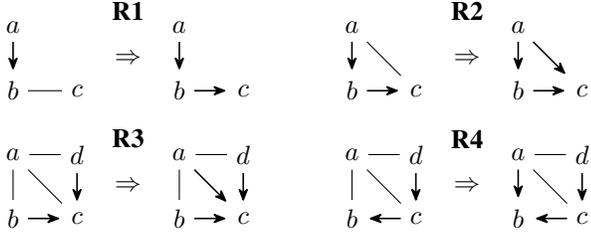

See R1 in Figure~\ref{figure:meekRules} as an example for the
application of the rules. The edge $b -
c$ necessarily has a fixed orientation because $a \rightarrow b
\leftarrow c$ and $a \rightarrow b \rightarrow c$ are in different
MECs (the first graph contains a v-structure, the latter does not) and
hence cannot be represented by the same PDAG. The single DAG
represented by the PDAG in this case is $a \rightarrow b \rightarrow
c$, the graph which does not introduce a new v-structure, and it is
immediately obtained after applying R1. In most cases some undirected
edges remain even after application of the Meek rules.

In the converse, a graph completed under the Meek rules has the
property that every undirected edge $x - y$ is \emph{not} fixed in the DAGs it
represents, i.\,e., there is a DAG in the class which contains $x \rightarrow y$
and one which contains $x \leftarrow y$.
Thus, the members can be enumerated by first orienting the edge as $x
\rightarrow y$ (and recursively orienting the remaining edges one-by-one, always
after completing the graph under the Meek rules to ensure that every
orientation yields a valid DAG) and afterward orienting the edge as $x
\leftarrow y$. This approach we call \textsc{meek-enum} is depicted in
Algorithm~\ref{alg:meek}.

\begin{algorithm}
  \caption{Enumeration algorithm of Markov equivalent DAGs based on
    Meek's rules (\textsc{meek-enum}).}
  \label{alg:meek}
  \DontPrintSemicolon
  \SetKwInOut{Input}{input}\SetKwInOut{Output}{output}
  \SetKwFor{Rep}{repeat}{}{end}
  \Input{A CPDAG $G = (V,E)$.}
  \Output{$[G]$.}
  \SetKwFunction{FEnum}{enumerate}
  \SetKwProg{Fn}{function}{}{end}
  \SetKwRepeat{Do}{do}{while}

  \BlankLine

  \Fn{\FEnum{$G$}}{
    $H := \text{maximal orientation of } G$\;
    $\mathrm{undir} := \{\{u,v\} \mid (u,v) \in E_H \wedge (v,u) \in E_H \}$\;
    \eIf{$\mathrm{undir}$ is empty}{
      Output $H$
    }{
      $\{u, v\} := \text{any element from } \mathrm{undir}$\;
      $E_H := E_H \setminus \{(u, v)\}$ \hspace*{0.1cm} \tcp{Orient $u \leftarrow v$}
      \FEnum{$H$}\;
      $E_H := E_H \cup \{(u, v)\}$ \hspace*{0.1cm} \tcp{Undo}
      \BlankLine
      $E_H := E_H \setminus \{(v, u)\}$ \hspace*{0.1cm} \tcp{Orient $u \rightarrow v$}
      \FEnum{$H$}\;
      $E_H := E_H \cup \{(v, u)\}$ \hspace*{0.1cm} \tcp{Undo}
    }
  }
\end{algorithm}

\textsc{meek-enum} has polynomial-delay, particularly as every
orientation leads to a valid DAG (there are no ``dead-ends'' in the
recursive search). However, it requires the application of Meek's rules in
every step, which amounts to significant effort to perform between two
consecutive outputs.
Naively estimated, checking whether one of the four Meek rules applies
takes time $O(n^4)$ and in a worst-case scenario this check has to be
performed multiple times as more and more edges get oriented
successively. This would lead to a polynomial time requirement of
large degree, even
for applying the Meek rules once. However, there have been recent
algorithmic improvements in this regard.

\begin{fact}[\citet{WienobstExtendability2021}] \label{fact:efficientmr}
  Given a PDAG $G$, Meek's rules can be applied exhaustively on $G$ in
  time $O(n^3)$.
\end{fact}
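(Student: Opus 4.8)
The plan is to compute the unique fixpoint of the four Meek rules by an incremental propagation argument. First I would record the two structural facts that make this well-posed: Meek's rules are \emph{monotone} — each application only replaces an undirected edge by a directed one and never undoes an orientation — and \emph{confluent}, so the orientation reached does not depend on the order in which rules are applied and the process halts after at most $\binom{n}{2} = O(n^2)$ single-edge orientations. Hence it suffices to exhibit a schedule of rule applications of total running time $O(n^3)$; amortized over the whole run this is $O(n)$ work per newly oriented edge, of which there are $O(n^2)$.

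The main idea is to keep a worklist of the edges that have most recently received an orientation, initialized with the directed edges of the input PDAG $G$, and to process it until empty. When an edge $u \to v$ is popped, only rule instances that \emph{contain} the edge $\{u,v\}$ can have become newly applicable, so one re-examines exactly those, pushing any edge that gets oriented as a consequence. With the current graph stored as an adjacency matrix (so adjacency and edge-type queries cost $O(1)$) together with, for each vertex, its list of directed in-neighbours and of undirected neighbours, the re-examination for rules R1 and R2 is cheap: in each of these patterns the edge $\{u,v\}$ plays the role of one of at most two edges and the single remaining vertex of the pattern ranges over $O(n)$ candidates, so $O(n)$ work per popped edge, i.e.\ $O(n^3)$ in total.

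The delicate part — the step I expect to be the main obstacle — is rules R3 and R4, which involve four vertices and two directed edges; a naive fixpoint iteration rescanning all quadruples would cost $\Theta(n^4)$ or worse. The observations that bring this down are that in both R3 and R4 the two directed edges of the pattern point into a common vertex $c$, and that the edge the rule orients is incident to $c$ and is never re-examined once oriented; so one attaches the recheck work to $c$, maintains for each $c$ its set of directed in-neighbours and the set of undirected neighbours still eligible to become children of $c$, and triggers a bounded recomputation only when a \emph{new} in-neighbour of $c$ appears. Carrying out this bookkeeping so that the total is genuinely $O(n^3)$ (rather than the $O(n^4)$ that the obvious analysis gives) is the technical crux, and it is exactly where a careful amortized argument — charging most work to the $O(n^2)$ orientation events and the rest to in-neighbour-insertion events — is required. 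An alternative route that avoids this is to first produce a consistent extension DAG $D$ of $G$ and then run a single ``compelled edge'' labelling pass in the spirit of Chickering's DAG-to-CPDAG procedure, initialized with the background directed edges of $G$ marked compelled; that pass examines each directed edge once, spending $O(\delta(x)+\delta(y))$ time on edge $x\to y$ for a total of $O(\sum_v \delta(v)^2) = O(nm) = O(n^3)$, after which one argues that the labelling coincides with the Meek closure because Meek's rules are complete for the orientations forced by the skeleton, the v-structures, and the background edges.
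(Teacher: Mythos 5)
First, note that the paper does not prove this statement at all: it is imported verbatim as a Fact from \citet{WienobstExtendability2021}, so your argument has to stand entirely on its own --- and as written it does not. In your first route, the worklist scheme and the $O(n)$-per-pop analysis for R1 and R2 are fine, but the entire difficulty of the $O(n^3)$ bound sits in R3 and R4, and there you explicitly stop short (``the technical crux \dots\ is exactly where a careful amortized argument is required''). The charging scheme you sketch does not obviously close this gap: when a new directed in-neighbour $b$ of $c$ appears, certifying a new R3/R4 instance still requires pairing $b$ with the existing in-neighbours $d$ of $c$ and with the common undirected neighbours $a$, including nonadjacency checks; done naively this is $\Theta(n^2)$ per insertion event, and with $O(n^2)$ orientation events you land at $O(n^4)$, exactly the bound you set out to beat. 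Saying that ``most work can be charged to the $O(n^2)$ orientation events'' is a statement of the goal, not an argument; the concrete data structures and the amortization for the two-directed-edge rules are the content of the cited result, and they are missing here.

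Your alternative route has a different, more serious problem: the claim that Chickering's one-pass compelled-edge labelling, seeded with the background edges of $G$, outputs exactly the Meek closure is asserted but never argued, and ``Meek's rules are complete'' does not imply it. Completeness only characterizes the target set (the orientations common to all consistent extensions); it says nothing about whether this particular single pass computes that set. Chickering's correctness proof rests on lemmas about compelled edges in an unconstrained MEC (the specific edge-processing order, and the step that labels \emph{all} unknown edges into $y$ at once); once arbitrary edges are pre-labelled compelled, forced orientations can arise from R4-type patterns whose detection is not covered by those lemmas, and the outcome even depends on how the seeded edges themselves are treated during the pass (skipping their processing steps already produces wrong labels on small examples built from an undirected chordal CPDAG plus two background edges). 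So this route would require re-proving Chickering-style structural lemmas in the background-knowledge setting --- a substantial argument you have not supplied. In short, both routes leave the decisive step unproven, so the proposal has a genuine gap rather than being a complete alternative proof of the $O(n^3)$ bound.
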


This gives a slightly better bound on the achievable delay, formally
stated in the following Theorem.
\begin{theorem}
  \textsc{meek-enum} can be implemented such that the MEC
  is enumerated with worst-case delay $O(m \cdot n^3)$.
\end{theorem}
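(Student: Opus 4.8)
The plan is to analyze the recursion tree induced by the recursive function \texttt{enumerate} of Algorithm~\ref{alg:meek}. Identify each invocation of \texttt{enumerate} with a node of a rooted tree: an invocation that reaches the line ``Output $H$'' (i.e. $\mathrm{undir}$ is empty) is a leaf, and every other invocation has exactly two children, one for the recursive call after orienting the chosen edge $u\leftarrow v$ and one for the call after orienting $u\rightarrow v$. The execution of the algorithm is precisely a depth-first traversal of this tree, and the leaves, in left-to-right order, are exactly the DAGs of $[G]$, each listed once. So I would reduce the delay bound to two things: a bound on the number of tree nodes visited between two consecutive leaves, and a bound on the work done at a single node.

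For the per-node work, the dominant step is computing the maximal orientation $H$, which by Fact~\ref{fact:efficientmr} takes $O(n^3)$; forming $\mathrm{undir}$, choosing an edge, performing the two edge toggles, and copying the current graph into the recursive call are each $O(n^2)$, so a node is processed in $O(n^3)$ time. For the structure of the tree I would use two facts. First, the depth is at most $m$: whenever \texttt{enumerate} recurses, the graph it passes down already has the chosen edge oriented, and applying Meek's rules only adds orientations and never removes one, so the number of undirected edges strictly decreases along every root-to-leaf path. Second, there are no dead ends: since $H$ is an MPDAG, every undirected edge $u-v$ is non-fixed, hence there is a consistent extension of $H$ with $u\rightarrow v$ and one with $u\leftarrow v$, and each such extension is produced by some sequence of choices in the corresponding subtree; therefore both subtrees of every internal node contain at least one leaf. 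This is what guarantees that the depth-first traversal never spends time descending into a subtree that yields no output.

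Combining these, between two consecutive outputs the traversal walks up from the just-output leaf to the least common ancestor of it and the next leaf (the first ancestor whose second child has not yet been entered), then walks down to the leftmost leaf of that second subtree; by the depth bound each of these two walks passes through $O(m)$ nodes, and by the no-dead-end property the downward walk reaches a leaf without any backtracking. The preamble before the first output is just the root-to-leftmost-leaf descent, again $O(m)$ nodes. Multiplying the $O(m)$ nodes by the $O(n^3)$ per-node cost gives worst-case delay $O(m\cdot n^3)$. The main obstacle I expect is making this last step airtight, namely arguing simultaneously that the recursion depth is $O(m)$ and that, thanks to the absence of dead ends, the traversal between two successive leaves ascends and then descends through only $O(m)$ internal nodes rather than probing several empty subtrees, and checking that the state restoration done while ascending is itself dominated by the $O(n^3)$ per-node bound rather than accumulating across levels.
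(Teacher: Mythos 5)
Your proposal is correct and follows essentially the same route as the paper's proof: bound the number of recursive calls between consecutive outputs by $m$ (undirected edges strictly decrease along every branch, and completeness of Meek's rules rules out dead ends), and charge $O(n^3)$ per call for the maximal orientation via Fact~\ref{fact:efficientmr}, giving delay $O(m\cdot n^3)$. Your recursion-tree/LCA bookkeeping is just a more detailed spelling-out of the paper's brief "at most $m$ recursive calls between two outputs" argument.
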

\begin{proof}
  Clearly, every DAG in the MEC is output as every undirected edge is oriented
  in both directions (by soundness of the Meek rules every directed
  edge is actually fixed in the DAGs in the class).
  No DAG is output twice because in each step, an undirected edge is
  oriented and fixed, i.e., every recursive call receives a different
  graph as input. Nor is an invalid DAG output, because every
  undirected edge can be oriented in both directions leading to a
  valid DAG by completeness of the Meek rules.
  In the worst case, the algorithm needs $m$ recursive calls until a
  DAG is output, and a single call costs time $O(n^3)$ due
  to the application of Meek's rules by Fact~\ref{fact:efficientmr}.
  Hence, the delay between two outputs is bounded by $O(m \cdot n^3)$.
\end{proof}

The same result holds for PDAGs and MPDAGs without modifications.
We note that in the experiments, we decided to implement the Meek
rules in the ``naive'' way as this is typically used in
implementations, which makes for a more meaningful comparison.

\subsection{Enumerating Markov Equivalent DAGs Based on Chickering's
  Transformational Characterization}
\label{appendix:chickeringsubsection}
Another approach for enumerating Markov equivalent DAGs is build on
the characterization by~\citet{chickering1995transformational}, also
stated briefly in Section~\ref{section:anotherapproach}
above. It is based on the notion of \emph{covered edges}:
\begin{definition}[\citet{chickering1995transformational}]
  An edge $x \rightarrow y$ is called \emph{covered} if
  $\mathrm{Pa}(x) \cup \{x\} = \mathrm{Pa}(y)$.
\end{definition}

This allows the following characterization of Markov equivalence:
\begin{theorem}[\citet{chickering1995transformational}]\label{theorem:chickering}
  Let $D$ and $D'$ be two Markov equivalent DAGs. There exists a
  sequence of $\mathrm{shd}(D, D')$
  edge reversals in $D$ with the following properties:
  \begin{enumerate}
  \item Each edge reversed in $D$ is a covered edge.
  \item After each reversal, $D$ is a DAG Markov equivalent to $D'$.
  \item After all reversals, $D = D'$.
  \end{enumerate}
\end{theorem}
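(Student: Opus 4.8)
The plan is to prove this by induction on $k = \mathrm{shd}(D, D')$, stripping off one covered-edge reversal at a time, and to base the whole argument on two lemmas. \textbf{Lemma A:} reversing a covered edge of a DAG yields a DAG that is Markov equivalent to it. \textbf{Lemma B:} if $D \equiv D'$ but $D \neq D'$, then $D$ contains an edge $x \to y$ that is covered in $D$ and oriented $y \to x$ in $D'$. Granting both, the induction closes immediately: for $k = 0$ take the empty sequence; for $k > 0$ pick $x \to y$ from Lemma B, reverse it to obtain $D_1$, note $D_1 \equiv D \equiv D'$ by Lemma A, observe that only the orientation of the pair $xy$ has changed and now agrees with $D'$, so $\mathrm{shd}(D_1, D') = k - 1$, apply the inductive hypothesis to $D_1$ and $D'$, and prepend this reversal. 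Properties (1)--(3) of the statement then hold along the entire sequence, and the bookkeeping that each step drops $\mathrm{shd}$ by exactly one is routine.

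For Lemma A, suppose $x \to y$ is covered in $D$, i.e.\ $\mathrm{Pa}_D(y) = \mathrm{Pa}_D(x) \cup \{x\}$, and let $D^{*}$ be the result of reversing it; the skeleton is unchanged. First, $D^{*}$ is acyclic: a new directed cycle must use $y \to x$, so $D$ would contain a directed path $x \to \cdots \to y$ of length at least two, whose penultimate vertex $z$ satisfies $z \in \mathrm{Pa}_D(y) \setminus \{x\} = \mathrm{Pa}_D(x)$, giving both $x \to \cdots \to z$ and $z \to x$ in $D$, a contradiction. Second, $D^{*} \equiv D$: the only parent sets that change are $\mathrm{Pa}_{D^{*}}(x) = \mathrm{Pa}_D(y) \supseteq \mathrm{Pa}_D(x)$ (so $x$ gains only the vertex $y$, which is adjacent to every former parent of $x$ by coveredness) and $\mathrm{Pa}_{D^{*}}(y) = \mathrm{Pa}_D(x) \subseteq \mathrm{Pa}_D(y)$ (so the parent set of $y$ only shrinks), hence no v-structure is created or destroyed. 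This is essentially folklore and I would cite it, but the argument is short enough to include in full.

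For Lemma B --- the crux of the proof --- I would fix a topological order $\sigma$ of $D$ and call an edge \emph{reversed} if $D$ and $D'$ disagree on it; such an edge is $x \to y$ in $D$ and $y \to x$ in $D'$, with $x <_\sigma y$. Choose a reversed edge $x \to y$ with $y$ as small as possible in $\sigma$ and, subject to that, with $x$ as large as possible. The inclusion $\mathrm{Pa}_D(x) \cup \{x\} \subseteq \mathrm{Pa}_D(y)$ holds for \emph{any} reversed edge: for $v \to x$ in $D$, non-adjacency of $v$ and $y$ would make $v \to x \leftarrow y$ a v-structure present in $D'$ but not $D$, while $y \to v$ in $D$ is ruled out by acyclicity (it would close $v \to x \to y \to v$), leaving $v \to y$ in $D$. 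For the reverse inclusion, assume some $w \in \mathrm{Pa}_D(y)$ lies outside $\mathrm{Pa}_D(x) \cup \{x\}$; since $w$ and $x$ are both parents of $y$, non-adjacency would give a v-structure $w \to y \leftarrow x$ in $D$ but not $D'$, so $w$ is adjacent to $x$, hence $x \to w$ in $D$ and $x <_\sigma w <_\sigma y$; minimality of $y$ forces the pair $xw$ to be non-reversed, so $x \to w$ in $D'$, and then acyclicity of $D'$ together with $y \to x$ in $D'$ forces $y \to w$ in $D'$ --- making $wy$ a reversed edge with the same head $y$ but tail $w >_\sigma x$, contradicting the maximal choice of $x$. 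Hence $\mathrm{Pa}_D(y) = \mathrm{Pa}_D(x) \cup \{x\}$ and $x \to y$ is covered. I expect the main obstacle to be exactly this extremal choice: several plausible alternatives (ordering by $D'$ rather than $D$, or minimizing the head's neighbour, etc.) fail already on the complete DAGs on three vertices, and the v-structure/acyclicity case analysis must be arranged so that the ``$\supseteq$'' inclusion is genuinely forced.
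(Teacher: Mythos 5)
A framing note first: the paper itself does not prove this statement---it is imported verbatim from Chickering (1995)---so the relevant baseline is Chickering's original argument, whose overall structure (induct on $\mathrm{shd}(D,D')$, locate a covered edge of $D$ on which $D$ and $D'$ disagree, reverse it, repeat) your proposal reproduces. Your Lemma A is correct apart from a slip: $\mathrm{Pa}_{D^{*}}(x) = \mathrm{Pa}_D(x) \cup \{y\}$, not $\mathrm{Pa}_D(y)$; the parenthetical remarks (``$x$ gains only $y$, which is adjacent to all former parents of $x$''; ``the parent set of $y$ only shrinks to $\mathrm{Pa}_D(x)$'') are what actually carry the equivalence argument, and they are right.

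The genuine problem is in Lemma B, where you assert that the inclusion $\mathrm{Pa}_D(x) \cup \{x\} \subseteq \mathrm{Pa}_D(y)$ ``holds for \emph{any} reversed edge''. That is false: take $D\colon v \to x \to y$ and $D'\colon y \to x \to v$ (same skeleton, no v-structures, hence Markov equivalent); the edge $x \to y$ is reversed, yet $v \in \mathrm{Pa}_D(x)$ is not even adjacent to $y$. The flaw in your justification is the phrase ``a v-structure present in $D'$'': for $v \to x \leftarrow y$ to be a v-structure of $D'$ you need $v \to x$ in $D'$, which is not guaranteed when the pair $vx$ is itself reversed---exactly what happens in the counterexample. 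Fortunately you only ever apply the inclusion to your extremal edge, and there your own device repairs it: if $v \in \mathrm{Pa}_D(x)$ and the pair $vx$ were reversed, it would be a reversed edge with head $x <_\sigma y$, contradicting the minimality of $y$; hence $v \to x$ holds in $D'$, and your case analysis (v-structure at $x$ if $v,y$ are non-adjacent, acyclicity of $D$ otherwise) then goes through. With the forward inclusion argued only for the chosen edge rather than claimed in general, Lemma B is correct, the reverse inclusion and the induction bookkeeping are fine as written, and the whole proof stands.
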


Hence, starting from a DAG $D$ we reach all DAGs in the same Markov
equivalence class by reversals of covered edges. This permits the
following enumeration approach, which we call \textsc{chickering-enum}.

\begin{algorithm}
    \caption{Enumeration algorithm of Markov
    equivalent DAGs based on Chickerings'
    characterization~\cite{chickering1995transformational} (\textsc{chickering-enum}).}
  \label{alg:chickering}
  \DontPrintSemicolon
  \SetKwInOut{Input}{input}\SetKwInOut{Output}{output}
  \SetKwFor{Rep}{repeat}{}{end}
  \Input{A CPDAG $G = (V,E)$.}
  \Output{$[G]$.}
  \SetKwFunction{FDFS}{enumerate}
  \SetKwFunction{FExtend}{extend}
  \SetKwProg{Fn}{function}{}{end}

  \BlankLine
  $D := $ any DAG in $[G]$ \;
  $\mathrm{vis} := \{D\}$ \;
  $\FDFS(D)$ \;
  \BlankLine
  
  \Fn{\FDFS{$D$, $\mathrm{vis}$}}{
    Output $D$ \;

    \ForEach{DAG $D'$ obtained by reversing a covered edge in $D$}{
      \If{$D' \not\in \mathrm{vis}$}{
        $\mathrm{vis} := \mathrm{vis} \cup \{D'\}$ \;
        \FDFS{$D'$, $\mathrm{vis}$}
      }
    }
  }
\end{algorithm}

Starting from $D$, similar to a depth-first-search (DFS), all
neighbors of $D$ (graphs with a single reversed covered edge) are
explored and this is continued recursively. Eventually all Markov equivalent DAGs are
reached by Theorem~\ref{theorem:chickering} above.
In order to not visit any DAG twice, it is necessary to store a set of
all visited DAGs.

\begin{theorem} \label{theorem:chickeringdelay}
  \textsc{chickering-enum} enumerates a Markov equivalence class and can be
  implemented with worst-case delay $O(m^3)$.
\end{theorem}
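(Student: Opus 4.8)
The plan is to verify the two claims separately: correctness (the algorithm enumerates exactly $[G]$, each member once) and the $O(m^3)$ delay bound. For correctness, I would first argue that \textsc{chickering-enum} only outputs DAGs in $[G]$: it starts from some $D \in [G]$ (computable e.g.\ via the MCS extension) and, by property~2 of Theorem~\ref{theorem:chickering}, reversing a covered edge keeps the DAG Markov equivalent, so by induction every explored $D'$ lies in $[G]$. Conversely, every member of $[G]$ is reached: given an arbitrary $D'' \in [G]$, Theorem~\ref{theorem:chickering} gives a sequence of covered-edge reversals from the current DAG to $D''$, all staying in $[G]$; hence $D''$ is reachable in the ``reversal graph'' whose nodes are $[G]$ and whose edges connect DAGs differing by one covered-edge reversal, and the DFS visits every node of a connected graph. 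No DAG is output twice because of the $\mathrm{vis}$ set: a DAG is added to $\mathrm{vis}$ exactly when it is first discovered, and \texttt{enumerate} is called (and its output line reached) only on DAGs just inserted into $\mathrm{vis}$.

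For the delay bound I would analyze one invocation of the \texttt{enumerate} loop body and the transitions between consecutive outputs. Between two outputs the recursion performs a bounded walk in the DFS tree, but in the worst case a single call may iterate through all DAGs $D'$ obtained by reversing a covered edge of $D$, find each one already in $\mathrm{vis}$, and only then return (a ``dead end''), after which the parent call resumes. The key quantities: a DAG has at most $m$ edges, hence at most $m$ covered edges, so at most $m$ candidate neighbors $D'$; generating each $D'$ (copying $D$ and reversing one edge, then checking the covered-edge condition $\Pa(x)\cup\{x\}=\Pa(y)$) costs $O(n+m)$; and testing membership $D' \in \mathrm{vis}$ costs $O(n+m)$ using a hash of the adjacency structure (or $O(m\log|\mathrm{vis}|) = O(m \cdot n^2)$ with a balanced tree, but the hashing version is cleaner and still within the stated bound). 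Thus one loop body costs $O(m(n+m)) = O(m^2)$. Since between two successive outputs only a constant number of \texttt{enumerate} frames are traversed in the relevant DFS fashion — in the worst case we descend through and back up a chain, but each frame on that chain either produces an output or is a leaf that we charge to its parent — bounding the inter-output work naively by the cost of exploring all neighbors at each of up to $O(m)$ such frames would give $O(m^3)$; I would argue this cruder bound suffices for the stated worst-case delay of $O(m^3)$, since $n = O(m)$ on connected instances (and isolated components are handled per-component).

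The main obstacle I anticipate is making the delay accounting precise: naively one worries that between two outputs arbitrarily many ``dead-end'' recursive calls accumulate, which would blow the bound past $O(m^3)$. The resolution is the standard DFS amortization/charging argument — each \texttt{enumerate} frame that does \emph{not} produce a fresh output (a leaf of the current descent) can be charged to the edge of the reversal graph along which it was entered, and on the path between two consecutive outputs there are at most $O(m)$ such frames because the reversal graph has degree at most $m$ and the relevant portion of the DFS path between outputs has length $O(m)$. I would also need to note the initialization cost ($D :=$ any DAG in $[G]$ via the linear-time extension of Fact~\ref{fact:reduction}) is dominated by the per-output work, and that the memory for $\mathrm{vis}$, while exponential in the worst case, does not affect the delay since each lookup/insert is handled in the claimed time. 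The remaining steps are routine.
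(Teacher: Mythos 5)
Your correctness argument matches the paper's (reachability via Theorem~\ref{theorem:chickering}, no repeats via $\mathrm{vis}$), and your per-frame accounting ($O(m)$ covered-edge neighbors, $O(m)$ expected time per hash lookup, hence $O(m^2)$ per frame) is also the paper's. The gap is in bounding the number of frames traversed between two consecutive outputs. In \textsc{chickering-enum} every frame outputs immediately upon entry, so the expensive stretch is not ``dead-end leaves'' but the climb back \emph{up} the recursion stack after a subtree is exhausted: the algorithm may pop through many frames, each re-scanning its remaining (all visited) neighbors, before finding a fresh DAG. The length of that climb is bounded only by the recursion depth, and for a naive DFS on the reversal graph the depth can be as large as the number of DAGs in the MEC, i.e.\ exponential in $m$. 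Your justification --- that the reversal graph has degree at most $m$ and ``the relevant portion of the DFS path between outputs has length $O(m)$'' (after first asserting only constantly many frames) --- does not follow from anything you establish: a degree bound says nothing about how deep the DFS path is, and your charging argument is an amortization, which bounds the \emph{average} delay but not the \emph{worst-case} delay claimed by the theorem.

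The missing idea, and the crux of the paper's proof, is that the implementation must cap the recursion depth at $m$: by Theorem~\ref{theorem:chickering}, any two Markov equivalent DAGs are connected by a sequence of $\mathrm{shd}(D,D') \le m$ covered-edge reversals in which every edge is reversed at most once, so restricting the DFS to depth $m$ still reaches every member of the class. With depth bounded by $m$, at most $O(m)$ frames lie between two consecutive outputs, and combined with the $O(m)$ neighbors per frame and $O(m)$ per membership test this yields the stated worst-case delay $O(m^3)$. Without this modification (or some other device such as alternating pre-/post-order output), the bound you assert for the plain algorithm does not hold in the worst case.
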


\begin{proof}
  Clearly, any DAG is output exactly once by
  Theorem~\ref{theorem:chickering} and storing
  all visited DAGs ensures not outputting any DAG multiple times.

  For the delay, it is first useful to analyze when the most steps
  between two outputs are necessary. This happens when the algorithm
  traverses back up the recursion tree (from a leaf), potentially up to the
  root, without outputting any new DAG and then down into another subtree.
  In principle, the recursion
  depth might be exponential and this would, in turn, lead to an
  exponential-time delay.

  However, it is possible to bound the recursion depth by $m$ (by
  Theorem~\ref{theorem:chickering} every edge is reversed at most
  once) and one will still find all DAGs in the MEC. It remains to
  estimate the cost at each recursion step. Going up the
  recursion tree, at each DAG, $O(m)$ neighbors might be checked
  whether they have been visited before, e.\,g., if they are in
  $\mathrm{vis}$, and all of them might indeed are.
  The check takes time $\Omega(m)$ per neighboring DAG $D'$ as
  the whole DAG has to be ``read'' at least once for lookup. If one
  uses a hash table for storing the DAGs, expected time $O(m)$ can be reached.

  So, in total, we have $O(m)$ recursion steps between two outputs,
  with at most $O(m)$ neighboring DAGs being considered, each of which consuming
  $O(m)$ time for lookup in $\mathrm{vis}$. This leads to a worst-case
  delay of $O(m^3)$.
\end{proof}

\subsection{Computing a Consistent Extension of a CPDAG based on the
  MCS Algorithm}
\label{appendix:mcssubsection}
In this subsection, we revisit the closely related problem of finding
an arbitrary DAG in an MEC represented by a CPDAG $G$ instead of listing all of them. This is known as the \emph{extension} task and it is well-known that it
is possible to perform it in linear-time~\cite{hauser2012characterization} for CPDAGs
using the so-called Lexicographic Breadth-First Search (LBFS)
algorithm, which is a graph traversal algorithm originally proposed
for testing chordality of a graph. For MPDAGs and PDAGs, this problem is discussed in depth in~\cite{WienobstExtendability2021}.

As we base our results on the Maximum Cardinality Search (MCS) algorithm, which has
been used less frequently in the causality community compared to LBFS, we give a brief
introduction. The MCS algorithm has been designed, as LBFS, for
testing chordality of a graph. It is a graph traversal algorithm,
which visits the vertices of an undirected graph in an order
representing\footnote{A linear ordering of the vertices represents the orientation of an undirected graph, in which edges are directed $x \rightarrow y$ if $x$ comes before $y$ in the ordering.} an AMO (\emph{acyclic moral orientation}, that is, an
orientation without directed cycles and v-structures) iff the
graph is chordal.\footnote{In the chordal graph literature the term
  \emph{perfect elimination ordering} is more commonly used in this
  setting. A linear-ordering of
  the vertices describes an AMO iff its reverse is a perfect
  elimination ordering. We will stick to AMOs in this paper to avoid
  confusion.}

The connection between AMOs and CPDAGs stems from the fact that the
undirected components of a CPDAG are chordal and replacing each by an AMO
leads to a DAG in the Markov equivalence
class. The reason for this is that a CPDAG is extended into a DAG
by orienting its undirected edges without creating a directed cycle or
a new v-structure. This is why AMOs are needed, and the undirected components
are chordal, because only chordal graphs have AMOs.

\begin{algorithm}[h!]
      \caption{Computing a DAG in the MEC represented by CPDAG $G$ in
        linear-time $O(n+m)$.}
  \label{alg:cpdagextension}
  \DontPrintSemicolon
  \SetKwInOut{Input}{input}\SetKwInOut{Output}{output}
  \SetKwFor{Rep}{repeat}{}{end}
  \Input{A CPDAG $G = (V,E)$.}
  \Output{DAG $D \in [G]$.}
  \SetKwFunction{FMCS}{mcs}
  \SetKwProg{Fn}{function}{}{end}

  \BlankLine
  $D := $ copy of $G$ \;
  $U := (V, \{ (u,v) \mid (u,v) \text{ and } (v,u) \in E \})$, i.\,e.,
  the graph with only the undirected edges of $G$ \; \label{line:startpre}
  \ForEach{connected component $C = (V_C, E_C)$ of $U$}{
    $\tau_C = \FMCS(C)$ \;
    Orient edges in $D[V_C]$ according to $\tau_C$, i.\,e., $x - y$ as
    $x \rightarrow y$ if $x$ comes before $y$ in $\tau_C$ \;
  } \label{line:endpre}

  \BlankLine
  
  \Fn{\FMCS{$C$} \label{line:startmcs}}{
    $A :=$ array of $n$ initially empty sets \; 
    $\tau :=$ empty list \;
    $A[0] := V$ \;

    \While{$|\tau| < |V_C|$}{
      $i :=  \text{highest index of non-empty set in } A$\;
      $v := \text{any vertex from } A[i]$ \; \label{line:choosenext}
      $\text{delete } v \text{ from } A[i]$ \; \label{line:starthandle}
      $\text{append } v \text{ to } \tau$ \;
      \ForEach{$w \in (\Ne(v) \setminus \tau)$}{
        $j := \text{index of set in } A \text{ that } w \text{ is
          in}$ \;
        $\text{delete } w \text{ from } A[j]$ \;
        $\text{insert } w \text{ in } A[j+1]$ \; \label{line:movehigher}
      } \label{line:endhandle}
    }
    \Return $\tau$ \;
  } \label{line:endmcs}
\end{algorithm}

We can now look at Algorithm~\ref{alg:cpdagextension}, which
implements this strategy, in more detail. We discussed the first part
(lines~\ref{line:startpre} to~\ref{line:endpre})
in detail. The UCCGs (i.e., the connected components when all directed
edges are removed from $G$) are oriented one-by-one. For this, an
ordering $\tau$ is computed according to which the edges are oriented. This ordering $\tau$ is the
traversal sequence of an MCS. This algorithm is given in
lines~\ref{line:startmcs} to~\ref{line:endmcs}. It differs from
standard graph traversals in the way the
next vertex to visit is chosen (line~\ref{line:choosenext}). The
algorithm chooses one
of the vertices with most previously visited neighbors. This is
implemented efficiently by having a set of vertices with $0$
previously visited neighbors ($A[0]$), a set for vertices with $1$
previously visited neighbor ($A[1]$), and so on. When a vertex $v$ is
handled (lines~\ref{line:starthandle} to~\ref{line:endhandle}), its
neighbors are moved to a higher set (line~\ref{line:movehigher}).

This can be done in constant time per neighbor~\cite{tarjan1984simple},
meaning the MCS runs in linear-time. As it is performed on disjoint
subgraphs, we obtain overall linear-time $O(n+m)$ for computing a DAG
in $[G]$.

\begin{theorem}
  Algorithm~\ref{alg:cpdagextension} computes a DAG in the MEC
  represented by CPDAG $G$ in time $O(n+m)$.
\end{theorem}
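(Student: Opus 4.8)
The plan is to split the statement into a correctness claim --- that the graph $D$ returned really is a DAG in $[G]$ --- and a complexity claim --- that the whole computation runs in $O(n+m)$ time --- and to obtain each of them by a short reduction to the structural facts already in hand, Fact~\ref{fact:reduction} and Fact~\ref{fact:counting}, together with the classical linear-time implementation of Maximum Cardinality Search from~\cite{tarjan1984simple}.

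For correctness I would first observe that lines~\ref{line:startpre}--\ref{line:endpre} leave every originally directed edge of $G$ unchanged in $D$ and touch only the undirected edges, handling them one undirected connected component at a time. Restricting attention to a single UCCG $C$, I would verify that the \textsc{mcs} subroutine (lines~\ref{line:startmcs}--\ref{line:endmcs}) is a faithful Maximum Cardinality Search: the array $A$ is maintained as an invariant so that $A[i]$ always holds exactly the unvisited vertices of $C$ with precisely $i$ visited neighbors, the vertex chosen in line~\ref{line:choosenext} is therefore one of highest label, and lines~\ref{line:starthandle}--\ref{line:endhandle} update the invariant correctly by promoting each unvisited neighbor by one bucket. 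Hence $\tau_C$ is a genuine MCS ordering of $C$, and by the reverse direction of Fact~\ref{fact:counting} orienting $x-y$ as $x\rightarrow y$ whenever $x$ precedes $y$ in $\tau_C$ produces an AMO of $C$. Carrying this out independently for every UCCG while keeping the pre-existing directed edges fixed is exactly the operation in Fact~\ref{fact:reduction}, which then yields directly that $D$ is acyclic and belongs to $[G]$; in particular acyclicity of the \emph{whole} of $D$, including cycles that might try to run through several UCCGs or through already-directed edges, is part of the content of Fact~\ref{fact:reduction} and can be cited rather than re-derived.

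For the running time I would account for the three ingredients separately. Extracting the undirected subgraph $U$ and computing its connected components is a single graph scan, $O(n+m)$. Each vertex is handled by exactly one call of \textsc{mcs}. Finally, using doubly linked lists for the buckets $A[j]$ together with a stored list-position for each vertex, every deletion from a bucket, insertion into an adjacent bucket, and the per-neighbor promotion in the inner for-loop costs $O(1)$; summing the inner loops over all iterations charges $O(1)$ to each edge endpoint, i.e.\ $O(m)$ in total. The only remaining subtlety, locating the highest non-empty index of $A$ in line~\ref{line:choosenext}, is handled by the standard amortized argument: this index rises by at most one per handled vertex and never drops below $0$, so over an entire run of \textsc{mcs} on $C$ it is moved $O(|V_C|)$ times. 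Since the UCCGs are vertex-disjoint and partition the undirected part of $G$, all these bounds telescope to an overall $O(n+m)$.

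I expect the main obstacle to be purely implementational rather than conceptual: making precise that the bucket-queue data structure supports ``pick a highest-label vertex'', the insert/delete operations, and the neighbor promotions in $O(1)$ amortized time, so that the sum over all components is genuinely linear. The correctness half, by contrast, should reduce almost mechanically to Fact~\ref{fact:counting} (the orientation induced by an MCS ordering of a chordal graph is an AMO) and Fact~\ref{fact:reduction} (independently orienting the UCCGs into AMOs gives a member of $[G]$), so the only care needed there is in stating the bucket-array invariant and checking it is preserved by each iteration.
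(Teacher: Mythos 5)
Your proposal is correct and follows essentially the same route as the paper: orient each UCCG by the ordering of a bucket-array MCS, invoke the fact that MCS orderings of chordal graphs yield AMOs (the reverse direction noted after Fact~\ref{fact:counting}) together with Fact~\ref{fact:reduction} for correctness, and use the Tarjan--Yannakakis constant-time bucket updates plus the amortized max-index argument for the $O(n+m)$ bound over disjoint components. The paper merely states these points more briefly in the surrounding text; your write-up fills in the same details.
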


\section{Missing Proofs and Algorithms} \label{appendix:missingproofs}
\subsection{Section 3}
We begin by giving the proof of Lemma~\ref{lemma:subgraph:over:s}:
\setcounter{lemma}{1}
\begin{lemma}
  Given a connected chordal graph $G = (V, E)$ and a sequence of
  visited vertices $\tau$ produced by an MCS with the
  current highest-label set $S$. Vertices $x,y \in S$ are connected in $G[S]$ iff they are connected in $G[V \setminus \tau]$.
\end{lemma}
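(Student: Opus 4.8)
The plan is to handle the two directions separately. The forward implication is immediate: since $S \subseteq V \setminus \tau$, any path in $G[S]$ is already a path in $G[V \setminus \tau]$. For the converse I want to show that whenever $x,y \in S$ are connected in $G[V \setminus \tau]$, a \emph{shortest} $x$--$y$ path $P$ in $G[V \setminus \tau]$ must stay entirely inside $S$; connectivity in $G[S]$ then follows. The key tool is the classical fact that the MCS run producing $\tau$ can be continued to a complete MCS ordering $\sigma$ of $G$ and, $G$ being chordal, the reverse of $\sigma$ is a perfect elimination ordering --- equivalently, for every vertex $v$ the set of neighbours of $v$ that occur \emph{before} $v$ in $\sigma$ induces a clique. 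Note also that $\tau$ occupies the prefix of $\sigma$, so every vertex of $\tau$ precedes every vertex of $V \setminus \tau$.

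The heart of the argument is the following claim: if $a,b \in S$ and $Q = \langle a = q_0, q_1, \dots, q_r = b \rangle$ with $r \ge 2$ is a shortest path in $G[V \setminus \tau]$, then some internal vertex $q_1, \dots, q_{r-1}$ lies in $S$. To prove it, consider which vertex of $Q$ occurs last in $\sigma$. It cannot be an internal vertex $q_i$: its two path-neighbours $q_{i-1}$ and $q_{i+1}$ would both precede $q_i$ in $\sigma$ and both be adjacent to it, so by the clique property $q_{i-1} \sim q_{i+1}$, contradicting that $Q$ is induced (being shortest). Hence the last vertex of $Q$ in $\sigma$ is an endpoint, say $b$ (the case of $a$ is symmetric after reversing $Q$). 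Then $q_{r-1}$ precedes $b$ in $\sigma$, and so does every vertex of $N(b) \cap \tau$; all of these are neighbours of $b$ occurring before $b$, hence lie in a common clique, so $q_{r-1}$ is adjacent to every vertex of $N(b) \cap \tau$. Since $b \in S$ has $|N(b) \cap \tau| = \ell$, where $\ell$ is the top label, and these vertices lie in $\tau$, we get $|N(q_{r-1}) \cap \tau| \ge \ell$; as $q_{r-1} \in V \setminus \tau$ has label at most $\ell$, its label is exactly $\ell$, i.e., $q_{r-1} \in S$.

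Granting the claim, I finish as follows. Let $P = \langle x = u_0, \dots, u_t = y \rangle$ be a shortest $x$--$y$ path in $G[V \setminus \tau]$; if $t \le 1$ the vertices are already adjacent or equal, so assume $t \ge 2$ and suppose some internal $u_j \notin S$. Let $u_a$ be the last vertex of $S$ on $P$ strictly before $u_j$ and $u_b$ the first vertex of $S$ on $P$ strictly after $u_j$ (both exist because $u_0, u_t \in S$). Then $b - a \ge 2$, every internal vertex of the subpath $\langle u_a, \dots, u_b \rangle$ lies outside $S$, and this subpath is a shortest $u_a$--$u_b$ path in $G[V \setminus \tau]$ (subpaths of shortest paths are shortest). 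This contradicts the claim, so $P$ stays in $S$ and $x,y$ are connected in $G[S]$. The step I expect to be most delicate is the claim: one must carefully justify extending $\tau$ to a full MCS ordering and invoking the reverse-PEO property, and keep the positional bookkeeping straight --- in particular that $\tau$ is a prefix of $\sigma$ while the relative order of the path vertices in $\sigma$ depends on the chosen continuation. The remaining pieces (the induced-path observation, the label count, and the subpath extraction) are routine once that property is in place.
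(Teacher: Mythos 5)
Your proof is correct, and it takes a genuinely different route from the paper's. The paper argues by contradiction directly on the pair $x,y$: it takes a chordless shortest path in $G[V\setminus\tau]$, picks a non-top-label vertex $p_i$ on it, chooses visited neighbours $z$ of $x$ and $z'$ of $y$ that are not adjacent to $p_i$, and then splits into two cases — a chord-counting argument on a cycle when $z$ or $z'$ is a common neighbour of $x$ and $y$, and otherwise a step-by-step propagation of forced adjacencies along the path within a hypothetical AMO extending $\tau + x$, which eventually collides with the non-adjacency at $p_i$. You instead complete the current MCS run to a full ordering $\sigma$, invoke the standard fact that for chordal graphs the earlier neighbourhood of every vertex under an MCS ordering is a clique (equivalently, the reverse of $\sigma$ is a perfect elimination ordering — the same fact underlying the paper's Fact~2), and make a single ``last path vertex in $\sigma$'' argument: that vertex must be an endpoint (else the clique property would create a chord on an induced path), and then its path-neighbour inherits all $\ell$ visited neighbours of that endpoint and hence has top label. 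Combined with the subpath-of-a-shortest-path extraction, this proves the stronger statement that a shortest path in $G[V\setminus\tau]$ between two top-label vertices lies entirely inside $S$, from which the lemma follows. Your version avoids the paper's case distinction and the somewhat delicate cycle/chord counting, at the modest cost of having to note that the partial MCS can always be continued to a full MCS ordering (which is immediate, since MCS never gets stuck); both arguments ultimately rest on the same AMO/PEO property of MCS orderings of chordal graphs.
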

\setcounter{lemma}{5}

\begin{proof}
  Trivially, if $x$ and $y$ are connected in $G[S]$ so they are in
  $G[V\setminus\tau]$. Hence, we only have to prove that if they are not
  connected in $G[S]$, then they are also disconnected in $G[V\setminus\tau]$.
  The special case that the highest label has label 0 is trivial as then we
  have $S = V \setminus \tau$. So let $\tau$ be non-empty and consider
  two arbitrary vertices $x,y\in S$ that are in different connected
  components of $G[S]$. For a contradiction assume that they are
  connected in $G[V\setminus\tau]$ via a shortest path
  $\pi = x-p_1-\dots-p_{\ell}-y$ with $\ell\geq 1$ and
  $p_1,\dots,p_{\ell}\in V\setminus\tau$. Note that $\pi$ is an unchorded
  path (i.\,e., one where only successive vertices are connected by an
  edge) due to being the shortest path between $x$ and $y$. Also denote
  the visited neighbors of vertex $v$ by $P(v) = \tau \cap \Ne(v)$.

  In $\pi$, there has to be a vertex $p_i \not\in S$, else $x$ and
  $y$ are connected in $G[S]$. Hence, there is (i) a vertex $z \in
  P(x)$, which is not in $P(p_i)$, and (ii) a vertex $z' \in P(y)$,
  which is not in $P(p_i)$, because $x$ and $y$ have higher label than
  $p_i$, meaning $|P(x)| > |P(p_i)|$ and $|P(y)| > |P(p_i)|$.

  We first consider the case that $z$ or $z'$ is a common neighbor of
  both $x$ and $y$ (this includes the case $z = z'$). Let this common
  neighbor be w.l.o.g.\ vertex $z$. Then, there is a cycle $x - p_1 -
  \dots - p_\ell - y - z - x$ in $G$. Moreover, $p_i$ is not a
  neighbor of $z$, vertices $x$ and $y$ are nonadjacent (else they
  would be connected in $G[S]$), and due to $\pi$ being the shortest
  path, there are no edges between $x - p_j$ for $j > 1$, $y - p_{j'}$
  for $j' < \ell$ and $p_j - p_{j'}$ for $|j-j'| > 1$. Hence, the only
  chords the cycle might have could be $p_j - z$ edges, with $j \neq
  i$. But there could only be $l-1$ such chords and for a cycle of
  length $l+3$, $l$ chords are needed to make it chordal. A
  contradiction.

  The remaining case is that $z$ and $z'$ are both not a common
  neighbor of $x$ and $y$. We show that there can be no AMO produced
  by the MCS, when it chooses $x$ or $y$ next from $S$, which would be
  a contradiction.
  W.l.o.g., we show the argument for $x$. Let $\alpha$ be a
  topological ordering inducing an AMO of $G$ having prefix $\tau +
  x$. Denote by $\alpha^{-1}(v)$ the position of $v$ in
  $\alpha$. It has to hold that $\alpha^{-1}(x) < \alpha^{-1}(p_1) <
  \alpha^{-1}(p_2) < \dots < \alpha^{-1}(p_\ell) < \alpha^{-1}(y)$ as the
  first vertex $p_j$ with $\alpha^{-1}(p_j) > \alpha^{-1}(p_{j+1})$,
  would induce a v-structure. As we have $\alpha^{-1}(z') < \alpha^{-1}(p_\ell) <
  \alpha^{-1}(y)$, vertices $z'$ and $p_\ell$ have to be adjacent to
  avoid a v-structure. Then, the same applies to $\alpha^{-1}(z') <
  \alpha^{-1}(p_{\ell-1}) < \alpha^{-1}(p_\ell)$, which implies an
  edge between $z'$ and $p_{\ell-1}$. This iteration can be continued only until
  vertex $p_i$, which is nonadjacent to $z'$ by assumption.
\end{proof}

For the sake of completeness, we also explicitly state the EnumMCS algorithm
for a CPDAG $G$. The enumeration task immediately reduces to finding the
AMOs of the chordal components of $G$.

\begin{algorithm}[h!]
  \caption{Linear-time delay enumeration algorithm of Markov
    equivalent DAGs based on EnumMCS.}
  \label{alg:enummcs}
  \DontPrintSemicolon
  \SetKwInOut{Input}{input}\SetKwInOut{Output}{output}
  \SetKwFor{Rep}{repeat}{}{end}
  \Input{A CPDAG $G = (V,E)$.}
  \Output{$[G]$.}
  \SetKwFunction{FMCS}{mcs}
  \SetKwProg{Fn}{function}{}{end}

  \BlankLine
  $U := (V, \{ (u,v) \mid (u,v) \text{ and } (v,u) \in E \})$, i.\,e.,
  the graph with only the undirected edges of $G$ \;
  Execute Algorithm~\ref{alg:mcs} on $U$ and each time add the
  directed edges of $G$ when outputting the DAG (in line~\ref{line:out} of
  Algorithm~\ref{alg:mcs}).
\end{algorithm}

\subsection{Section 4}
\label{subsection:appendixpdag}
We give the proof of Lemma~\ref{lemma:pdagfastreach} omitted in the
main paper:
\setcounter{lemma}{3}
\begin{lemma}
  Given a bucket $B$ and a sequence of visited vertices
  $\tau$ produced by the modified MCS using~$S^+$.
  Vertices $x,y \in S^+$ are connected in $B[V
  \setminus \tau]$ iff they are connected in $B[S^+]$.
\end{lemma}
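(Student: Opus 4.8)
The ``if'' direction is immediate, since $S^+\subseteq V\setminus\tau$ makes every path in $B[S^+]$ a path in $B[V\setminus\tau]$. For ``only if'' the plan is to mirror the proof of Lemma~\ref{lemma:subgraph:over:s}, feeding in the Meek-rule observation used in the proof of Lemma~\ref{lemma:pdagconn}. Suppose $x,y\in S^+$ are connected in $B[V\setminus\tau]$ but, for contradiction, not in $B[S^+]$, and fix a shortest path $\pi=x-p_1-\dots-p_\ell-y$ between them in $B[V\setminus\tau]$; being shortest, $\pi$ is chordless. Since $\pi$ is not contained in $B[S^+]$, some internal vertex $p_i$ fails to lie in $S^+$, and this can happen for only two reasons: either $p_i$ has label strictly below the current maximum, so that $p_i\notin S$; or $p_i\in S$ but has an unvisited parent, that is, a vertex $q\in V\setminus\tau$ with $q\rightarrow p_i$ in $B$.

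If $p_i\notin S$, the argument of Lemma~\ref{lemma:subgraph:over:s} transfers almost word for word: one reads $\mathrm{skel}(B)$ for the chordal graph $G$; uses that $x$ and $y$ are nonadjacent (an edge between them would be undirected, as a directed one would hand $x$ or $y$ an unvisited parent lying on $\pi$, and an undirected edge would put $x$ and $y$ into the same component of $B[S^+]$); and, wherever that proof picks a topological ordering of an AMO of $G$ with prefix $\tau,x$, reads instead such an ordering of a consistent extension of $B$ -- one exists because the modified MCS may visit $x\in S^+$ first and any of its completions orients $B$ into a consistent extension. The chordality sub-case is unchanged, and in the other sub-case a visited neighbour $z'\in(\tau\cap\Ne(y))\setminus\Ne(p_i)$ -- which exists because the label of $y$ exceeds that of $p_i$, that is, $|\tau\cap\Ne(y)|>|\tau\cap\Ne(p_i)|$ -- is forced, one v-structure at a time, to be adjacent to $p_\ell,p_{\ell-1},\dots,p_i$, contradicting $z'\notin\Ne(p_i)$.

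The second case is the genuinely new one, and where I expect the real work to lie. Here I would use that $B$, as a bucket of an MPDAG, has no induced subgraph of the form $a\rightarrow b-c$. First, if some path-edge at $p_i$ were directed, the propagation from the proof of Lemma~\ref{lemma:pdagconn} -- which slides a directed edge along a chordless path by repeated use of the first Meek rule -- would either force an incoming edge at $x$ or at $y$ (impossible, since $x,y\in S^+$) or stall at an internal vertex carrying a v-structure; either way one reduces to the situation where the two path-edges at $p_i$ are undirected and $q$ lies off $\pi$. Then, applying the first-rule observation to $q\rightarrow p_i-p_{i-1}$ and to $q\rightarrow p_i-p_{i+1}$ forces $q$ adjacent to both $p_{i-1}$ and $p_{i+1}$, so $x$ and $y$ are already joined by a path of the same length through $q$. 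From here the plan is to push the forced orientations further along $\pi$ with additional applications of Meek's rules until one reaches a contradiction -- an incoming edge at an endpoint, or a path strictly shorter than $\pi$ -- or else arrives at the benign case in which $q$ itself lies in $S^+$, whereupon rerouting $\pi$ through $q$ strictly decreases the number of internal non-$S^+$ vertices of a shortest $x$--$y$ path, and an induction on that count finishes the proof. Making this propagation-and-termination argument airtight, and checking that it also absorbs the variant in which the blocking vertex carries an on-path v-structure rather than an off-path parent, is the main obstacle; everything else is a direct transcription of the proofs of Lemmas~\ref{lemma:subgraph:over:s} and~\ref{lemma:pdagconn}.
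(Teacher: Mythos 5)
Your ``if'' direction, the case split, and the first move of the hard case are all sound, and up to the point where you force $q$ to be adjacent to $p_{i-1}$ and $p_{i+1}$ (via non-applicability of the first Meek rule in the bucket) you are exactly on the paper's track: this is the paper's vertex $z_1$. But the crux is precisely the case you defer --- the unvisited parent $q$ need not lie in $S^+$, and your proposed way out, ``push the forced orientations further along $\pi$,'' does not work: once the path edges at $p_i$ are undirected and $q$ sits off the path, Meek's first rule forces nothing more along $\pi$ at all, so there is no propagation to stall or to reach an endpoint. The forcing has to go \emph{upward}, not along the path. The paper's proof iterates through the chain of unvisited parents $z_1 \leftarrow z_2 \leftarrow \dots$, which terminates at some $z_j$ with no unvisited parent by acyclicity of $B$; each $z_s$ is shown to lie in $S$ (otherwise a visited neighbor of $p_i$ would create a v-structure in the AMO the modified MCS completes to), to be adjacent to $p_{i-1}$, and, via an invariant about the highest-index path vertex each $z_s$ meets through an undirected edge, $z_j$ ends up joined by an undirected edge to some $p_t$ with $t>i$. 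Rerouting through $z_j$ (which is in $S^+$) then contradicts an extremal choice of $\pi$: shortest, and among shortest paths one maximizing the index of the first non-$S^+$ vertex --- either the rerouted path is strictly shorter, or it is equally long with a later first bad vertex. Your induction on the number of non-$S^+$ internal vertices plays the same role as this extremal choice, but without the parent-chain construction you have no way to decrease that count when $q \notin S^+$, which is exactly the situation the lemma is about.

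Two smaller points. First, your reduction ``or stall at an internal vertex carrying a v-structure; either way one reduces to the situation where the two path-edges at $p_i$ are undirected'' is not a reduction: if the propagation stalled, the path would still carry a directed edge and nothing about $p_i$ would follow. One must either rule out v-structures inside a bucket (which does hold, since every consistent extension orients a bucket as an AMO of its skeleton, but it needs saying) or argue as in Lemma~\ref{lemma:pdagconn} that the directedness runs all the way to $x$ or $y$, giving one of them an unvisited parent on $\pi$. Second, note that the paper works with a shortest path inside $B[S]$ (after first invoking the argument of Lemma~\ref{lemma:subgraph:over:s} to get connectivity in $S$), rather than your single path in $B[V\setminus\tau]$ with a two-way case analysis per vertex; your variant can be made to work, but the $p_i\notin S$ case is then not a standalone contradiction on the same path unless you also track where the $S\setminus S^+$ vertices sit, so the cleaner route is the paper's two-stage reduction.
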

\setcounter{lemma}{5}

\begin{proof}
  Recall that $S$ is the set of \emph{all} highest-label vertices including the
  ones with unvisited parents.
  As shown in the proof of Lemma~\ref{lemma:subgraph:over:s} the
  vertices $x$ and
  $y$ are connected in $S$ by the chordality of $B$.
  For a contradiction assume that $x$
  and~$y$ are connected in $S$ but not in $S^+$.
  Then there is a shortest path $x = p_1 - p_2 -\dots -p_{k-1} - p_k = y$ with some
  $p_i\in S\setminus S^+$. Consider the $p_i$ with
  smallest $i$ and assume the shortest path is chosen such that
  this $i$ is maximized (in case that there are multiple shortest
  paths). This path is completely 
  undirected as $x$ or $y$ would otherwise have an incoming edge by the same
  argument as in the proof of Lemma~\ref{lemma:pdagconn}.

  Observe that $p_i$ has an incoming edge from an unvisited vertex
  $z_1$. By the properties of the MCS
  we have $z_1\in S$ as otherwise $z_1$ would not be connected to a
  previous neighbor~$a$ of~$p_i$ implying the v-structure $a
  \rightarrow p_i \leftarrow c$ and violating the fact that the
  modified MCS returns an AMO. Therefore, $z_1$ needs to be connected to $p_{i-1}$. Furthermore, if $z_1$ has an incoming edge from an
  unvisited vertex~$z_2$ then this vertex has to be connected to
  $p_{i-1}$ as well.

  This process can be iterated further and we will
  consider the first~$z_j$ that has no unvisited parent. Such a vertex
  has to exist as otherwise there would be a directed cycle.
  The edge between~$z_j$ and~$p_{i-1}$ is undirected by the minimality
  of $p_i$.

  Moreover, $z_1$ needs to be adjacent to $p_{i+1}$ by the same
  argument. If this edge is undirected, $z_2$ needs to be adjacent to
  $p_{i+1}$. If it is directed $z_1 \rightarrow p_{i+1}$, then $z_1$
  needs to be adjacent to $p_{i+2}$. Generally, the highest index
  vertex on the path that $z_s$ is adjacent to is connected to it by
  an undirected edge. Meaning $z_{s+1}$ has to be connected to it as
  well. Thus, $z_j$ is connected by an undirected edge to some $p_t$
  on the path, for $t > i$. This yields the desired contradiction as
  either the path $x = p_1 - \dots - p_{i-1} - z_j - p_t - \dots - p_k
  = y$ is shorter than the previously considered one or the first
  vertex with an unvisited parent has a higher index.
\end{proof}

We explicitly state the generalization of EnumMCS to buckets, which
was only sketched in the main paper. The differences to Algorithm~\ref{alg:mcs} are \colorbox{orange!20}{highlighted}.
\begin{algorithm}
  \caption{Linear-time delay enumeration algorithm of consistent
    extensions of a bucket.}
  \label{alg:bucketmcs}
  \DontPrintSemicolon
  \SetKwInOut{Input}{input}\SetKwInOut{Output}{output}
  \SetKwFor{Rep}{repeat}{}{end}
  \Input{A bucket $B = (V,E)$.}
  \Output{All AMOs of $B$.}
  \SetKwFunction{FEnum}{enumerate}
  \SetKwProg{Fn}{function}{}{end}
  \SetKwRepeat{Do}{do}{while}

  \BlankLine
  \colorbox{orange!20}{$G := $ skeleton of $B$} \;
  $A :=$ array of $n$ initially empty sets \; 
  $\tau :=$ empty list \;
  $A[0] := V$ \;
  $\FEnum(G, A, \tau,$ \hspace*{-0.15cm} \colorbox{orange!20}{$B$} \hspace*{-0.15cm} $)$ \;
  \BlankLine
  
  \Fn{\FEnum{$G$, $A$, $\tau$, \hspace*{-0.15cm} \colorbox{orange!20}{$B$} \hspace*{-0.15cm}}}{
    \If{$|\tau| = n$}{Output AMO of $G$ according to ordering $\tau$ \label{line:bucketout}}
    $i := \text{highest index of non-empty set in } A$ \;
    \colorbox{orange!20}{$S^+ := \{ v \mid v \in A[i] \text{ and } \mathrm{Pa}_{B[V\setminus
      \tau]}(v) = \emptyset \}$} \;
    $v := $ \colorbox{orange!20}{$\text{any vertex from } S^+$} \;
    $x := v$ \;

    \Do{$\text{$R$ is non-empty}, x := \text{pop}(R)$}{
      $\text{delete } x \text{ from } A[i]$ \;
      $\text{append } x \text{ to } \tau$ \;
      \ForEach{$w \in (\Ne(x) \setminus \tau)$}{
        $j := \text{index of set in } A \text{ that } w \text{ is
          in}$ \;
        $\text{delete } w \text{ from } A[j]$ \;
        $\text{insert } w \text{ in } A[j+1]$ \;
      }
      \FEnum{$G$, $A$, $\tau$}\;
      \ForEach{$w \in (\Ne(x) \setminus \tau)$}{
        $j := \text{index of set in } A \text{ that } w \text{ is
          in}$ \;
        $\text{delete } w \text{ from } A[j]$ \;
        $\text{insert } w \text{ in } A[j-1]$ \;
      }
      $\text{insert } x \text{ in } A[i]$ \;
      $\text{pop } x \text{ from } \tau$ \;
      \If{$x = v$}{
        $R:= \{ a \mid a \text{ reachable from } v \text{ in }$
        \hspace*{-0.15cm}\colorbox{orange!20}{$G[S^+]$}\hspace*{-0.15cm} $\}$ \;
      }
      
    }
  }
\end{algorithm}

\begin{theorem}
  There is an algorithm that enumerates all
  AMOs of a given bucket with worst-case delay $O(n+m)$.
\end{theorem}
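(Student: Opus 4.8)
The plan is to carry over the correctness argument of Theorem~\ref{theorem:mcsenumcorrectness} and the delay analysis of Theorem~\ref{theorem:wcdelay} from \textsc{mcs-enum} to Algorithm~\ref{alg:bucketmcs}, replacing the highest-label set $S$ by $S^+$ and ``connected in $G[V\setminus\tau]$'' by ``connected in $\text{skel}(B[V\setminus\tau])$'', and then checking that the extra bookkeeping for $S^+$ does not break the linear bound. Throughout we may assume $B$ is extendable, i.e.\ $\text{skel}(B)$ is chordal (Fact~\ref{fact:mpdagext}); otherwise there is nothing to enumerate.

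First I would show that the runs of the modified MCS are exactly the AMOs of $B$. When line~\ref{line:bucketout} is reached, the recorded ordering $\tau$ is an MCS ordering of the chordal graph $\text{skel}(B)$, so orienting $\text{skel}(B)$ along $\tau$ is an acyclic moral orientation by Fact~\ref{fact:counting}; since every vertex is appended only while in $S^+$, each directed edge $u\rightarrow v$ of $B$ has $u$ before $v$ in $\tau$, so this orientation is also consistent with $B$ and hence is an AMO of $B$. Conversely, every AMO of $B$ is induced by a topological ordering $\alpha$, which by Fact~\ref{fact:counting} is an MCS ordering of $\text{skel}(B)$; at each step the chosen vertex is parent-free in $B$ (otherwise orienting along $\alpha$ would reverse a directed edge of $B$), so $\alpha$ is a legal run of the modified MCS. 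That such a run can always be continued, i.e.\ $S^+\neq\emptyset$ at every prefix the algorithm reaches, is part of the correctness of the modified-MCS extension procedure, implicit in Fact~\ref{fact:mpdagext}.

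Next come completeness and uniqueness. Algorithm~\ref{alg:bucketmcs} follows every run of the modified MCS, except that after fixing the first vertex $v\in S^+$ it recurses only on the vertices of $S^+$ reachable from $v$ in $G[S^+]$; by Lemma~\ref{lemma:pdagfastreach} these are exactly the vertices of $S^+$ connected to $v$ in $\text{skel}(B[V\setminus\tau])$, and by item~\ref{lemma:pdagconn:b} of Lemma~\ref{lemma:pdagconn} the omitted vertices yield no AMO that is not already obtained from a reachable one, so every AMO of $B$ is output. For uniqueness, assume two distinct output orderings $\tau_1,\tau_2$ induced the same AMO and let $x$ and $y$ be the vertices at their first differing position, after a common prefix $\tau$. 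Both $x$ and $y$ lie in $S^+$ at that step and both are explored by the do-while loop, hence both are reachable from the loop's initial vertex in $G[S^+]$; thus $x$ and $y$ are connected in $G[S^+]$, and by Lemma~\ref{lemma:pdagfastreach} in $\text{skel}(B[V\setminus\tau])$, so item~\ref{lemma:pdagconn:a} of Lemma~\ref{lemma:pdagconn} makes the AMOs obtained by choosing $x$ next disjoint from those obtained by choosing $y$ next -- a contradiction.

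It remains to bound the delay. The three-phase accounting of Theorem~\ref{theorem:wcdelay} -- the recursion climbing up from an output, the single ``top'' call, and the recursion descending to the next output -- transfers essentially verbatim: each sweep over $\Ne(x)\setminus\tau$ costs $O(\delta(x))$ and, as the vertices handled along any up- or down-chain are pairwise distinct, these costs sum to $O(m)$, while the reachability query over $G[S^+]$ costs $O(n+m)$ but is charged once per interval. The one genuinely new point -- and, I expect, the main obstacle -- is maintaining $S^+$: recomputing it from scratch at each call (as written in the pseudocode for clarity) would cost $\Theta(|A[i]|)$ per level of the recursion and spoil the bound, so $S^+$ must be maintained incrementally. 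For this I would keep, alongside $A$, a second bucket array $A^+$ holding precisely the vertices of $A$ that have no unvisited parent in $B$, together with an integer $\mathrm{indeg}(w)$ counting the unvisited parents of each $w$; when $x$ is appended to $\tau$ one moves its unvisited neighbours up one index in both $A$ and $A^+$ and decrements $\mathrm{indeg}$ along the outgoing $B$-edges of $x$, inserting into $A^+$ any child whose counter reaches $0$ -- all in $O(\delta(x))$ -- and performs the inverse operations on backtracking. Because $S^+\neq\emptyset$ on every prefix the algorithm reaches, the highest-index non-empty set of $A^+$ lies at the same index $i$ as the highest-index non-empty set of $A$ and equals $S^+$, so $S^+$ is available in $O(1)$. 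With this modification every edge is touched a constant number of times per interval and the single reachability query is paid once, for a worst-case delay of $O(n+m)$.
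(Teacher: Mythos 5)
Your proposal is correct and follows essentially the same route as the paper's own proof: the same modified-MCS algorithm on $\text{skel}(B)$, with soundness and completeness via Fact~\ref{fact:counting} and Lemma~\ref{lemma:pdagconn}, uniqueness via item~\ref{lemma:pdagconn:a} together with Lemma~\ref{lemma:pdagfastreach}, and the three-phase delay accounting of Theorem~\ref{theorem:wcdelay}. Your incremental maintenance of $S^+$ via in-degree counters (and a parallel bucket array) is exactly the paper's remark that $S^+$ can be maintained by storing in-degrees, just spelled out in more detail.
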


\begin{proof}
  This algorithm is given in Algorithm~\ref{alg:bucketmcs}.
  We first show that every DAG that is output by the algorithm is
  an AMO. This follows from the fact that the output is produced by an
  MCS. Notice that the
  restriction to vertices with no unvisited parents leads to no
  violation of this fact as the algorithm still \emph{only} chooses
  vertices with the highest-label. That such a vertex always exists
  was shown in~\cite{WienobstExtendability2021}.

  The next thing we prove is that every AMO of the given bucket is output. By
  Fact~\ref{fact:counting} every AMO can be
  represented by an MCS ordering. Which particular AMO is produced by an MCS depends on the
  specific choices of highest-label vertices.
  Our algorithm considers all possible
  choices at each step except (i) the ones where a vertex has unvisited
  parents and (ii)
  the ones where the vertex is unreachable from the first chosen vertex $v$
  in a recursion step.

  Clearly, (i) only excludes AMOs incompatible with the given directed
  edges (background knowledge). For (ii) observe that if a vertex is unreachable
  from $v$, it does not matter whether it is chosen before or after
  $v$ by item~\ref{lemma:pdagconn:b} of Lemma~\ref{lemma:pdagconn}.

  Finally, it remains to show that no AMO is output twice. Clearly,
  every sequence $\tau$ that we obtain is different. So assume for the
  sake of contradiction that the algorithm outputs two sequences~$\tau_1$ and
  $\tau_2$ that represent the
  same AMO. Let $x$ and $y$ be the vertices in $\tau_1$ and $\tau_2$ at
  the first differing position. Then $x$ and $y$ are
  connected in the induced subgraph over the unvisited vertices
  (by line~\ref{line:reach}) in the skeleton of $B$. A contradiction
  since this implies that
  the AMOs represented by
  $\tau_1$ and $\tau_2$ differ. 

  For the complexity analysis we partition the algorithm into three
  phases (i), (ii), and (iii) as in the proof of Theorem~\ref{theorem:wcdelay}.
  The cost of (ii) and (iii) are the
  same as in the analysis of Theorem~\ref{theorem:wcdelay}. For (i)
  just observe that reachability runs in $O(d)$ due to Lemma~\ref{lemma:pdagfastreach}.
%
\end{proof}

We also explicitly state the algorithm for enumerating PDAGs
(Algorithm\ref{alg:pdagextension}). The algorithm also works for
MPDAGs, in which case the \colorbox{orange!20}{highlighted} line~\ref{line:extrapdag} does not need to be executed.
The algorithm computes the
buckets and runs Algorithm~\ref{alg:bucketmcs} on them. 

\begin{algorithm}
  \caption{Linear-time delay enumeration algorithm of the consistent extensions of PDAG $G$ based on \textsc{mcs-enum} for buckets (Algorithm~\ref{alg:bucketmcs}).}
  \label{alg:pdagextension}
  \DontPrintSemicolon
  \SetKwInOut{Input}{input}\SetKwInOut{Output}{output}
  \SetKwFor{Rep}{repeat}{}{end}
  \Input{A PDAG $G = (V,E)$.}
  \Output{All consistent extensions of $G$.}
  \SetKwFunction{FMCS}{mcs}
  \SetKwProg{Fn}{function}{}{end}

  \BlankLine
  \If{$G$ has no consistent extensions}{
    \Return{$\emptyset$} \;
  }
  \colorbox{orange!20}{$G := $ the maximal orientation of $G$}
  \colorbox{orange!20}{(i.\,e., its completion under the Meek rules)} \label{line:extrapdag}
  \;
  $D := $ copy of $G$ \;
  $B := (V, \{ (u,v) \mid (u,v) \in E \allowbreak \text{ and } u,v \text{ are connected by
  undirected edges in } G \})$, i.\,e.,
  the graph containing only the buckets of $G$ \;
  
  Execute Algorithm~\ref{alg:bucketmcs} on $B$ and each time add the
  remaining edges of $G$ when outputting the DAG (in line~\ref{line:bucketout} of
  Algorithm~\ref{alg:bucketmcs}).
\end{algorithm}

\subsection{Section 5} 
\label{subsection:appendixleq3}

We complement the structural results in
Section~\ref{section:anotherapproach} in the main
paper, by explicitly stating the algorithm (called \textsc{shd3-enum}) described textually
in the proof of Theorem~\ref{theorem:dist3seq} and analyzing its
delay. The differences to \textsc{chickering-enum} (Algorithm~\ref{alg:chickering}) are \colorbox{orange!20}{highlighted}.
\begin{algorithm}[h!]
    \caption{Enumeration algorithm with SHD $\leq 3$ of Markov
    equivalent DAGs based on Chickerings'
    characterization~\cite{chickering1995transformational} (\textsc{shd3-enum}).}
  \label{alg:shd3}
  \DontPrintSemicolon
  \SetKwInOut{Input}{input}\SetKwInOut{Output}{output}
  \SetKwFor{Rep}{repeat}{}{end}
  \Input{A CPDAG $G = (V,E)$.}
  \Output{$[G]$.}
  \SetKwFunction{FDFS}{enumerate}
  \SetKwFunction{FExtend}{extend}
  \SetKwProg{Fn}{function}{}{end}

  \BlankLine
  $D := $ any DAG in $[G]$ \;
  $\mathrm{vis} := \{D\}$ \;
  $\FDFS(D, \mathrm{vis}, 0)$ \;
  \BlankLine
  
  \Fn{\FDFS{$D$, $\mathrm{vis}$, $i$}}{
    \If{\colorbox{orange!20}{$i \: \mathrm{mod} \: 2 = 0$}}{
      Output $D$ \;
    }

    \ForEach{DAG $D'$ obtained by reversing a covered edge in $D$}{
      \If{$D' \not\in \mathrm{vis}$}{
        $\mathrm{vis} := \mathrm{vis} \cup \{D'\}$ \;
        \FDFS{$D'$, $\mathrm{vis}$}
      }
    }

    \If{\colorbox{orange!20}{$i \: \mathrm{mod} \: 2 = 1$}}{
      \colorbox{orange!20}{Output $D$} \;
    }
  }
\end{algorithm}

The main paper uses this algorithm to show that it is possible to
enumerate an MEC with successive DAGs  having SHD at most three. Here,
we study the algorithmic properties of the algorithm.

\begin{theorem}\label{theorem:delayshd3}
  For a CPDAG $G$, \textsc{shd3-enum} enumerates the
  DAGs in $[G]$ with delay $O(m^2)$.
\end{theorem}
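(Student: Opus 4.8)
The plan is to mirror the correctness argument of Theorem~\ref{theorem:chickeringdelay} and then read off the delay from the structural analysis already present in the proof of Theorem~\ref{theorem:dist3seq}. First I would observe that \textsc{shd3-enum} performs a depth-first search on the graph whose nodes are the DAGs of $[G]$ and whose edges connect DAGs differing by a single covered-edge reversal; by Fact~\ref{fact:chickering} this graph is connected and by Theorem~\ref{theorem:chickering} it is exactly the graph that is explored. The set $\mathrm{vis}$ ensures that each node is entered at most once, and the pre-/post-order scheme prints each node exactly once --- at discovery if it lies on an even layer of the DFS tree, at completion if it lies on an odd layer. Hence the output is a correct enumeration of $[G]$, and only the delay bound remains to be shown.

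For the delay, the crucial point --- which is exactly the combinatorial content of the ``mixing pre- and post-order by layer'' construction in the proof of Theorem~\ref{theorem:dist3seq} --- is that between two consecutive outputs the traversal moves along at most three edges of the DFS tree: at most two ``upward'' returns from finished frames, followed by at most one ``downward'' recursive call that is printed immediately because it sits on an even layer. Consequently, between two outputs the algorithm (i) enters only $O(1)$ new stack frames and (ii) resumes and runs to completion the \texttt{foreach}-loop over covered-edge neighbours at only $O(1)$ distinct nodes of the recursion.

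It then remains to cost these $O(1)$ pieces of work. Entering a frame for a DAG $D$ requires computing all DAGs obtained by reversing a covered edge of $D$: there are at most $m$ such edges, testing whether $x \rightarrow y$ is covered is the comparison $\Pa(x) \cup \{x\} = \Pa(y)$ in $O(n)$ time, and materialising each neighbouring DAG costs $O(m)$, so this is $O(m^2)$ per frame. Resuming the \texttt{foreach}-loop at a node iterates over at most $m$ neighbouring DAGs and, for each, performs a membership test in $\mathrm{vis}$ costing $O(m)$ (the DAG must be read in full for the hash lookup), again $O(m^2)$ per node. Since only $O(1)$ frames and $O(1)$ loop suffixes are touched between two outputs, and the initial extraction of an arbitrary DAG in $[G]$ takes only $O(n+m)$ time (absorbed into $O(m^2)$ for the time until the first output), the worst-case delay is $O(m^2)$.

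The step I expect to be the main obstacle is justifying (i)--(ii): in \textsc{chickering-enum} the backtracking phase may climb $\Theta(m)$ levels of the recursion, which is precisely what inflates its delay to $O(m^3)$ in Theorem~\ref{theorem:chickeringdelay}, so one has to argue carefully that here the post-order output rule forces a new output after ascending only $O(1)$ levels, and symmetrically that a descent reaches an even layer --- hence an output --- within $O(1)$ steps. Once the distance-at-most-three property of Theorem~\ref{theorem:dist3seq} is invoked at the level of the DFS tree this follows, but I would still verify the boundary cases --- the very first output and the final unwinding of the call stack at the end of the enumeration --- to confirm they add nothing beyond $O(m^2)$.
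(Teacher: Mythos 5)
Your proposal is correct and takes essentially the same route as the paper: correctness is inherited from the \textsc{chickering-enum} analysis, and the delay bound follows because the layer-parity output rule ensures only a constant number of recursion steps between consecutive outputs, each costing $O(m^2)$ (the $O(m)$ covered-edge neighbours times the $O(m)$ membership test in $\mathrm{vis}$). One minor nit: the move pattern between outputs can also be one ascent followed by two descents (or two descents at the start), not only ``two upward returns then one downward call,'' but this does not affect the $O(1)$ bound on frames touched and hence not the $O(m^2)$ delay.
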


\begin{proof}
  The argument is similar to the proof of
  Theorem~\ref{theorem:chickeringdelay}. First, it is clear that all DAGs in
  the MEC are output exactly once.

  The analysis of the delay differs slightly in the sense that this
  algorithm has worst-case delay
  $O(m^2)$ (instead of $O(m^3)$) due to the fact that between two
  outputs only a constant number of recursive calls are handled. This
  follows from the fact that the algorithm outputs successive DAGs
  with SHD $\leq 3$ and, hence, between the output of these DAGs, there are only
  constantly many steps in the traversal of the MEC. The cost per DAG
  can be bounded by $O(m^2)$ as in Theorem~\ref{theorem:chickeringdelay}.
\end{proof}

Both Algorithm~\ref{alg:shd3} as well as Theorem~\ref{theorem:delayshd3} directly apply to PDAGs as well.

\section{MAGs -- Causal Graphs under Latent
  Confounding}\label{section:appendixmags}
MAGs without selection bias are mixed graphs $G = (V,E)$ with two
types of edges: directed
$x \rightarrow y$ and bidirected $x \leftrightarrow y$. The
semantics are different compared to DAGs in that edges encode
ancestral relations, i.\,e., a directed edge means that $x$ is an
ancestor (a cause, but not necessarily a direct cause) of $y$ and a
bidirected edge means that
neither $x$ is a cause of $y$ nor is $y$ a cause of $x$ (as
simplification, think that there is a latent confounder between $x$
and $y$). An excellent introduction to MAGs is given
in~\cite{zhang2008completeness}.

\citet{zhang2005transformational} gave the following transformational
characterization of Markov
equivalent MAGs without selection bias (also called \emph{DMAGs}):
\begin{theorem}[\citet{zhang2005transformational}] \label{theorem:magtrans}
  Two DMAGs $G$ and $G'$ are Markov equivalent iff there exists a
  sequence of single edge mark changes in $G$ such that
  \begin{enumerate}
    \item after each mark change, the resulting graph is also a DMAG
      and is Markov equivalent to $G$,
    \item after all the mark changes, the resulting graph is $G'$.
  \end{enumerate}
\end{theorem}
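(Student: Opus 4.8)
The statement is quoted from \citet{zhang2005transformational}; here I sketch how I would reconstruct a proof. The \emph{if} direction is immediate: if a sequence of single mark changes exists in which every intermediate graph is a DMAG that is Markov equivalent to $G$, then the final graph $G'$ is Markov equivalent to $G$ by transitivity of the equivalence relation. All the content lies in the \emph{only if} direction, so assume $G$ and $G'$ are Markov equivalent DMAGs. The first ingredient I would invoke is the standard characterization of Markov equivalence for ancestral graphs~\citep{richardson2002ancestral}: two MAGs are equivalent iff they have the same skeleton, the same unshielded colliders, and the same colliders occurring on discriminating paths. In particular $G$ and $G'$ share a skeleton (recall MAGs without selection bias carry only $\rightarrow$ and $\bidirected$ edges), so they can differ only in the endpoint marks assigned to the common edge set, and the natural progress measure is the number of endpoint marks on which they disagree.

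The plan is to prove the only-if direction by induction on this number of disagreeing marks. The inductive engine is a single \emph{progress lemma}: whenever $G \neq G'$ are Markov equivalent DMAGs, there is an endpoint mark of $G$ that disagrees with $G'$ and whose flip to its $G'$-value yields a graph that is again a DMAG and is still Markov equivalent to $G$. Granting this lemma, each flip strictly decreases the number of disagreements while staying within the set of DMAGs equivalent to $G$, so iterating reaches $G'$ after finitely many single mark changes, each of which satisfies conditions (1) and (2).

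The heart of the argument, and the step I expect to be the main obstacle, is the progress lemma. One cannot flip an arbitrary disagreeing mark, since a careless flip may either create a directed or almost-directed cycle (destroying ancestrality, hence the DMAG property) or alter the collider structure (destroying equivalence). Instead I would choose the mark to flip by an extremality criterion tied to the ancestral order of $G$: among all disagreeing marks, select one sitting at an innermost endpoint with respect to the ancestral relation (e.g.\ an arrowhead in $G$ that should be a tail in $G'$ at an ancestrally minimal vertex), so that reorienting it cannot introduce any new directed path and thus cannot produce a cycle or almost-directed cycle. Checking that this flip also preserves \emph{all} colliders is the delicate part: unshielded colliders are local and easy to control, but colliders on discriminating paths form a nonlocal condition. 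I would argue that, since $G$ and $G'$ agree on the whole skeleton and on all collider invariants, any discriminating path touching the flipped edge already forces that mark to take its $G'$-value, so no invariant can actually change under the flip.

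Conceptually this mirrors Chickering's covered-edge-reversal proof for DAGs (Theorem~\ref{theorem:chickering}), with single mark changes in the role of covered-edge reversals. The genuinely new difficulty over the DAG case is twofold: the DMAG constraint forbids not only directed cycles but also almost-directed cycles, and equivalence of MAGs is governed by the global discriminating-path condition rather than by v-structures alone, so the extremal choice of which mark to flip must be made carefully enough to respect both constraints simultaneously.
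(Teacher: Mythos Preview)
The paper does not prove Theorem~\ref{theorem:magtrans} at all; it is quoted verbatim as a result of \citet{zhang2005transformational} and then used, without argument, to derive Corollary~\ref{corollary:mags}. There is therefore no ``paper's own proof'' to compare your attempt against.

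As for the substance of your sketch: the overall strategy you outline---induction on the number of disagreeing endpoint marks, driven by a progress lemma that locates a single mark whose flip preserves both the DMAG property and Markov equivalence---is indeed the shape of the original argument in \citet{zhang2005transformational}, and your analogy with Chickering's covered-edge proof is apt. You are also right that the discriminating-path condition is where the real work lies. The one place your sketch is underspecified is the extremality criterion for choosing the mark: ``ancestrally minimal'' is the right flavor, but making it precise enough to simultaneously rule out almost-directed cycles and guarantee that no discriminating-path collider changes requires a careful case analysis that you have not supplied (and that the present paper does not supply either). Your honest flagging of this as ``the main obstacle'' is appropriate; just be aware that the actual verification is lengthy and not obviously reducible to the one-line argument you gesture at in the last sentence of your third paragraph.
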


It follows that the graph over Markov equivalent DMAGs with an edge
between DMAGs with SHD 1 (we define the SHD similar to the DAG case, that
is, the number of differing edges; one could also define it to be the
number of differing edge marks, and the statement still holds)
is connected and hence, there is, by the same argument as in
Theorem~\ref{theorem:dist3seq}, a sequence of DMAGs containing each in the
MEC exactly once with
successive DMAGs having SHD $\leq 3$, which proves
Corollary~\ref{corollary:mags}.

We consider it out of the scope of this work to investigate the
algorithmic aspects of enumerating Markov equivalent MAGs in detail, but deem
this an fascinating topic for further research. In particular, it
would be highly interesting to develop enumeration algorithms based on
the other strategies discussed in this paper. 

Finally, we note that the transformational characterization in Theorem~\ref{theorem:magtrans} does not hold for MAGs \emph{with} selection bias and it follows
from the example given in~\cite{zhang2005transformational} that it is indeed not
possible to find an enumeration sequence with SHD $\leq 3$ in this case.

\section{Further Experimental Results} \label{appendix:experiments}
This section shows the full experimental results of the four algorithms
(\textsc{meek-enum}, \textsc{chickering-enum}, \textsc{mcs-enum},
\textsc{shd3-enum}) with results for CPDAGs and PDAGs, as well as for
slightly denser graphs. Fig.~\ref{fig:plots} depicts the results.
The instances are generated as follows:
\begin{itemize}
\item For the undirected chordal graphs, a
  random tree was generated, to which randomly drawn edges are
  inserted, which do not violate chordality. This is done until the
  graph has $k\cdot n$ edges. In the main paper, we considered
  $k=3$, here we choose slightly denser graphs with $k = \log_2 n$.
\item The CPDAGs are generated by first creating a DAG $D$ randomly, whose
  CPDAG representation is computed afterward. The DAG is, on the one
  hand, sampled analogously to the undirected case, by adding
  directed edges, which do not violate acyclicity until $k \cdot n$
  edges are reached. On the other hand, we generate DAGs based on scale-free
  graphs (using
  the Barabási-Albert model~\citep{BarabasiAlbert2002}), which are
  oriented by imposing a random topological ordering.
\item The PDAGs are generated by first sampling CPDAGs (in the way
  described above) and then orienting further edges (imitating the
  addition of background knowledge). For this, between three and
  seven undirected edges (this number is chosen uniformly at random)
  are oriented in the CPDAG, with the Meek rules being applied after
  each orientation.
\end{itemize}
For CPDAGs and PDAGs, as for the chordal graphs, we choose the
parameters $k = 3$ and $k = \log_2 n$.
The four algorithms were run on each of the instances for a maximum of
one minute and the delay between outputs was averaged. In case of the
smaller instances, the faster algorithms were often able to enumerate
the whole MEC, for larger instances this is not realistically possible
due to their vast size.

For the experiments, we used a single core of the AMD Ryzen
Threadripper 3970X 32-core processor on a 256GB RAM
machine.\footnote{The experiments can also be run on a desktop
  computer without any problems.}
We report the times in milliseconds and excluded averages over 32ms
from the plots to keep the results visually comparable. 

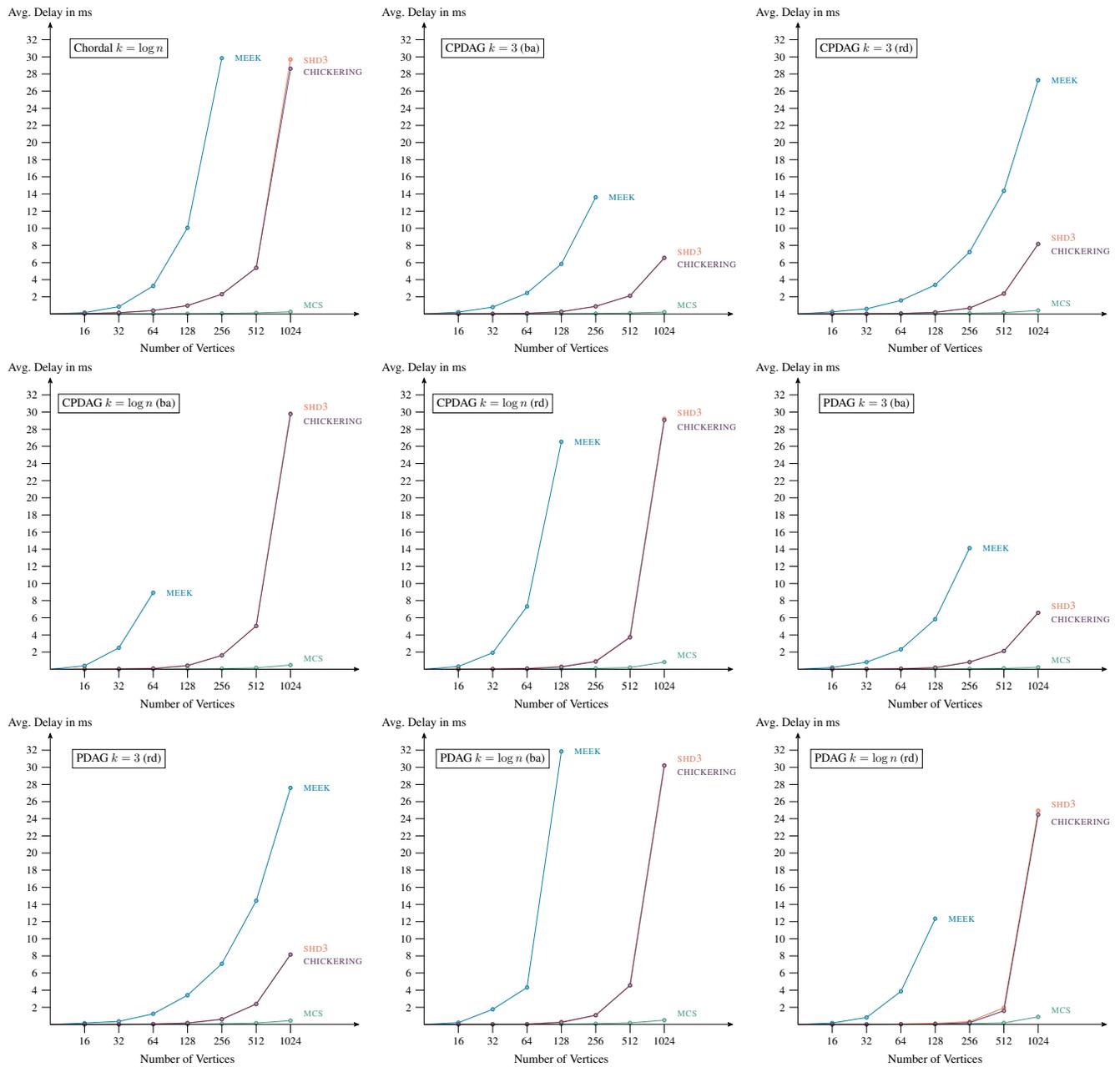
\begin{figure*}
  \begin{minipage}[b]{0.33\textwidth}
    \resizebox{\textwidth}{!}{
      \begin{tikzpicture}

        \begin{scope}[yscale=0.25]
          \plot[0.2cm]{
            0.00639,
            0.00981,
            0.01581,
            0.02762,
            0.05804,
            0.11896,
            0.24992
          }{ba.pine}{\textsc{mcs}}
        
          \plot{
            0.15888,
            0.84746,
            3.26362,
            10.05574,
            29.84997
          }{ba.blue}{\textsc{meek}}
        
          \plot{
            0.04143,
            0.14510,
            0.40074,
            0.97556,
            2.30268,
            5.38441,
            29.70504
          }{ba.orange}{\textsc{shd3}}
        
          \plot[-0.1cm]{
            0.04120,
            0.14653,
            0.39966,
            0.97576,
            2.30175,
            5.37991,
            28.62709
          }{ba.violet}{\textsc{chickering}}
        
          \draw[semithick, ->, >={[round]Stealth}] (0,0) -- (0, 34) node[above] {Avg.\ Delay in ms};
          \foreach \y in {2,4,...,32} {
            \draw (0,\y) -- (-.25,\y) node[left] {\small\y};
          }

          \node[draw] at (2,31) {Chordal $k = \log n$};
          
        \end{scope}
        
        \draw[semithick, ->, >={[round]Stealth}] (0,0) -- (9,0);
        \foreach [count=\x] \label in {16, 32, 64, 128, 256, 512, 1024}{
          \draw (\x,0) -- (\x, -0.25) node[below] {\small\label};
        }
        \node at (4,-1) {Number of Vertices};
        \end{tikzpicture}
      }
  \end{minipage}
  \begin{minipage}[b]{0.33\textwidth}
    \resizebox{\textwidth}{!}{
      \begin{tikzpicture}

        \begin{scope}[yscale=0.25]
          \plot[0.2cm]{
            0.00539,
            0.00741,
            0.01426,
            0.02740,
            0.05258,
            0.10386,
            0.21898
          }{ba.pine}{\textsc{mcs}}
        
          \plot{
            0.21428,
            0.79459,
            2.44158,
            5.83786,
            13.61720
          }{ba.blue}{\textsc{meek}}
        
          \plot[0.2cm]{
            0.01228,
            0.02216,
            0.07509,
            0.26078,
            0.88550,
            2.12513,
            6.55741
          }{ba.orange}{\textsc{shd3}}
        
          \plot[-0.2cm]{
            0.01195,
            0.02116,
            0.07214,
            0.26183,
            0.88428,
            2.12575,
            6.53999
          }{ba.violet}{\textsc{chickering}}
        
          \draw[semithick, ->, >={[round]Stealth}] (0,0) -- (0, 34) node[above] {Avg.\ Delay in ms};
          \foreach \y in {2,4,...,32} {
            \draw (0,\y) -- (-.25,\y) node[left] {\small\y};
          }

          \node[draw] at (2,31) {CPDAG $k = 3$ (ba)};
        \end{scope}
        
        \draw[semithick, ->, >={[round]Stealth}] (0,0) -- (9,0);
        \foreach [count=\x] \label in {16, 32, 64, 128, 256, 512, 1024}{
          \draw (\x,0) -- (\x, -0.25) node[below] {\small\label};
        }
        \node at (4,-1) {Number of Vertices};
      \end{tikzpicture}
    }
  \end{minipage}
  \begin{minipage}[b]{0.33\textwidth}
    \resizebox{\textwidth}{!}{
      \begin{tikzpicture}

        \begin{scope}[yscale=0.25]
          \plot[0.2cm]{
            0.00448,
            0.00749,
            0.01459,
            0.02885,
            0.08048,
            0.15156,
            0.41258
          }{ba.pine}{\textsc{mcs}}
        
          \plot{
            0.24815,
            0.59846,
            1.57548,
            3.39190,
            7.22890,
            14.36171,
            27.27915
          }{ba.blue}{\textsc{meek}}
        
          \plot[0.2cm]{
            0.01039,
            0.01971,
            0.05799,
            0.19221,
            0.68171,
            2.37504,
            8.17005
          }{ba.orange}{\textsc{shd3}}
        
          \plot[-0.2cm]{
            0.00953,
            0.01802,
            0.04857,
            0.18847,
            0.68081,
            2.37082,
            8.16770
          }{ba.violet}{\textsc{chickering}}
        
          \draw[semithick, ->, >={[round]Stealth}] (0,0) -- (0, 34) node[above] {Avg.\ Delay in ms};
          \foreach \y in {2,4,...,32} {
            \draw (0,\y) -- (-.25,\y) node[left] {\small\y};
          }

          \node[draw] at (2,31) {CPDAG $k = 3$ (rd)};
          
        \end{scope}
        
        \draw[semithick, ->, >={[round]Stealth}] (0,0) -- (9,0);
        \foreach [count=\x] \label in {16, 32, 64, 128, 256, 512, 1024}{
          \draw (\x,0) -- (\x, -0.25) node[below] {\small\label};
        }
        \node at (4,-1) {Number of Vertices};
      \end{tikzpicture}
    }
  \end{minipage}

  \begin{minipage}[b]{0.33\textwidth}
    \resizebox{\textwidth}{!}{
      \begin{tikzpicture}

        \begin{scope}[yscale=0.25]
          \plot[0.2cm]{
            0.00456,
            0.00811,
            0.01647,
            0.03214,
            0.06770,
            0.16386,
            0.48648
          }{ba.pine}{\textsc{mcs}}
        
          \plot{
            0.40117,
            2.50031,
            8.92329
          }{ba.blue}{\textsc{meek}}
        
          \plot[0.2cm]{
            0.01303,
            0.03238,
            0.09168,
            0.42454,
            1.61729,
            5.04493,
            29.82250
          }{ba.orange}{\textsc{shd3}}
        
          \plot[-0.2cm]{
            0.01253,
            0.02924,
            0.07426,
            0.41996,
            1.61071,
            5.04514,
            29.76576
          }{ba.violet}{\textsc{chickering}}
        
          \draw[semithick, ->, >={[round]Stealth}] (0,0) -- (0, 34) node[above] {Avg.\ Delay in ms};
          \foreach \y in {2,4,...,32} {
            \draw (0,\y) -- (-.25,\y) node[left] {\small\y};
          }

          \node[draw] at (2,31) {CPDAG $k = \log n$ (ba)};
        
        \end{scope}
        
        \draw[semithick, ->, >={[round]Stealth}] (0,0) -- (9,0);
        \foreach [count=\x] \label in {16, 32, 64, 128, 256, 512, 1024}{
          \draw (\x,0) -- (\x, -0.25) node[below] {\small\label};
        }
        \node at (4,-1) {Number of Vertices};
      \end{tikzpicture}
    }
  \end{minipage}
  \begin{minipage}[b]{0.33\textwidth}
    \resizebox{\textwidth}{!}{
      \begin{tikzpicture}
        
        \begin{scope}[yscale=0.25]
          \plot[0.2cm]{
            0.00442,
            0.00899,
            0.01664,
            0.03602,
            0.10326,
            0.20601,
            0.83299
          }{ba.pine}{\textsc{mcs}}
        
          \plot{
            0.32262,
            1.92468,
            7.31545,
            26.53795
          }{ba.blue}{\textsc{meek}}
        
          \plot[0.2cm]{
            0.01199,
            0.02740,
            0.09595,
            0.31362,
            0.94185,
            3.78916,
            29.22715
          }{ba.orange}{\textsc{shd3}}
        
          \plot[-0.2cm]{
            0.01075,
            0.02427,
            0.06984,
            0.26351,
            0.88946,
            3.71220,
            29.06266
          }{ba.violet}{\textsc{chickering}}
        
          \draw[semithick, ->, >={[round]Stealth}] (0,0) -- (0, 34) node[above] {Avg.\ Delay in ms};
          \foreach \y in {2,4,...,32} {
            \draw (0,\y) -- (-.25,\y) node[left] {\small\y};
          }

          \node[draw] at (2,31) {CPDAG $k = \log n$ (rd)};
          
        \end{scope}
        
        \draw[semithick, ->, >={[round]Stealth}] (0,0) -- (9,0);
        \foreach [count=\x] \label in {16, 32, 64, 128, 256, 512, 1024}{
          \draw (\x,0) -- (\x, -0.25) node[below] {\small\label};
        }
        \node at (4,-1) {Number of Vertices};
      \end{tikzpicture}
    }
  \end{minipage}
  \begin{minipage}[b]{0.33\textwidth}
    \resizebox{\textwidth}{!}{
      \begin{tikzpicture}

        \begin{scope}[yscale=0.25]
          \plot[0.2cm]{
            0.00466,
            0.00788,
            0.01524,
            0.02884,
            0.04541,
            0.10797,
            0.22775
          }{ba.pine}{\textsc{mcs}}
        
          \plot{
            0.20064,
            0.82930,
            2.31525,
            5.83795,
            14.13217
          }{ba.blue}{\textsc{meek}}
        
          \plot[0.2cm]{
            0.00942,
            0.01839,
            0.05837,
            0.18607,
            0.83354,
            2.12790,
            6.59496
          }{ba.orange}{\textsc{shd3}}
        
          \plot[-0.2cm]{
            0.00909,
            0.01612,
            0.04999,
            0.18654,
            0.83323,
            2.12404,
            6.58511
          }{ba.violet}{\textsc{chickering}}
        
          \draw[semithick, ->, >={[round]Stealth}] (0,0) -- (0, 34) node[above] {Avg.\ Delay in ms};
          \foreach \y in {2,4,...,32} {
            \draw (0,\y) -- (-.25,\y) node[left] {\small\y};
          }

          \node[draw] at (2,31) {PDAG $k = 3$ (ba)};
        
        \end{scope}
        
        \draw[semithick, ->, >={[round]Stealth}] (0,0) -- (9,0);
        \foreach [count=\x] \label in {16, 32, 64, 128, 256, 512, 1024}{
          \draw (\x,0) -- (\x, -0.25) node[below] {\small\label};
        }
        \node at (4,-1) {Number of Vertices};
      \end{tikzpicture}
    }
  \end{minipage}

  \begin{minipage}[b]{0.33\textwidth}
    \resizebox{\textwidth}{!}{
      \begin{tikzpicture}

        \begin{scope}[yscale=0.25]
          \plot[0.2cm]{
            0.00396,
            0.00660,
            0.01592,
            0.02432,
            0.07861,
            0.15380,
            0.44911
          }{ba.pine}{\textsc{mcs}}
        
          \plot{
            0.15423,
            0.35640,
            1.23918,
            3.40947,
            7.08301,
            14.42480,
            27.60843
          }{ba.blue}{\textsc{meek}}
        
          \plot[0.2cm]{
            0.00642,
            0.01043,
            0.04778,
            0.16948,
            0.60089,
            2.38132,
            8.15965
          }{ba.orange}{\textsc{shd3}}
        
          \plot[-0.2cm]{
            0.00570,
            0.00867,
            0.03957,
            0.15688,
            0.60151,
            2.38180,
            8.15417
          }{ba.violet}{\textsc{chickering}}
        
          \draw[semithick, ->, >={[round]Stealth}] (0,0) -- (0, 34) node[above] {Avg.\ Delay in ms};
          \foreach \y in {2,4,...,32} {
            \draw (0,\y) -- (-.25,\y) node[left] {\small\y};
          }

          \node[draw] at (2,31) {PDAG $k = 3$ (rd)};
        
        \end{scope}
        
        \draw[semithick, ->, >={[round]Stealth}] (0,0) -- (9,0);
        \foreach [count=\x] \label in {16, 32, 64, 128, 256, 512, 1024}{
          \draw (\x,0) -- (\x, -0.25) node[below] {\small\label};
        }
        \node at (4,-1) {Number of Vertices};
      \end{tikzpicture}
    }
  \end{minipage}
  \begin{minipage}[b]{0.33\textwidth}
    \resizebox{\textwidth}{!}{
      \begin{tikzpicture}

        \begin{scope}[yscale=0.25]
          \plot[0.2cm]{
            0.00394,
            0.00855,
            0.01542,
            0.03572,
            0.08319,
            0.17859,
            0.49506
          }{ba.pine}{\textsc{mcs}}
        
          \plot{
            0.20230,
            1.76054,
            4.31144,
            31.85532
          }{ba.blue}{\textsc{meek}}
        
          \plot[0.2cm]{
            0.00583,
            0.01706,
            0.04469,
            0.29389,
            1.08789,
            4.56173,
            30.23766
          }{ba.orange}{\textsc{shd3}}
        
          \plot[-0.2cm]{
            0.00517,
            0.01524,
            0.01541,
            0.22904,
            1.07006,
            4.55606,
            30.19966
          }{ba.violet}{\textsc{chickering}}
        
          \draw[semithick, ->, >={[round]Stealth}] (0,0) -- (0, 34) node[above] {Avg.\ Delay in ms};
          \foreach \y in {2,4,...,32} {
            \draw (0,\y) -- (-.25,\y) node[left] {\small\y};
          }

          \node[draw] at (2,31) {PDAG $k = \log n$ (ba)};
          
        \end{scope}
        
        \draw[semithick, ->, >={[round]Stealth}] (0,0) -- (9,0);
        \foreach [count=\x] \label in {16, 32, 64, 128, 256, 512, 1024}{
          \draw (\x,0) -- (\x, -0.25) node[below] {\small\label};
        }
        \node at (4,-1) {Number of Vertices};
      \end{tikzpicture}
    }
  \end{minipage}
  \begin{minipage}[b]{0.33\textwidth}
    \resizebox{\textwidth}{!}{
      \begin{tikzpicture}

        \begin{scope}[yscale=0.25]
          \plot[0.2cm]{
            0.00355,
            0.00627,
            0.01590,
            0.03054,
            0.09422,
            0.17599,
            0.88358
          }{ba.pine}{\textsc{mcs}}
        
          \plot{
            0.16140,
            0.81208,
            3.85474,
            12.34416
          }{ba.blue}{\textsc{meek}}
        
          \plot[0.2cm]{
            0.00388,
            0.00661,
            0.05183,
            0.12269,
            0.32213,
            1.93143,
            24.93001
          }{ba.orange}{\textsc{shd3}}
        
          \plot[-0.2cm]{
            0.00386,
            0.00628,
            0.02186,
            0.03918,
            0.20031,
            1.58934,
            24.47453
          }{ba.violet}{\textsc{chickering}}
        
          \draw[semithick, ->, >={[round]Stealth}] (0,0) -- (0, 34) node[above] {Avg.\ Delay in ms};
          \foreach \y in {2,4,...,32} {
            \draw (0,\y) -- (-.25,\y) node[left] {\small\y};
          }

          \node[draw] at (2,31) {PDAG $k = \log n$ (rd)};
          
        \end{scope}
        
        \draw[semithick, ->, >={[round]Stealth}] (0,0) -- (9,0);
        \foreach [count=\x] \label in {16, 32, 64, 128, 256, 512, 1024}{
          \draw (\x,0) -- (\x, -0.25) node[below] {\small\label};
        }
        \node at (4,-1) {Number of Vertices};
      \end{tikzpicture}
    }
  \end{minipage}
  \caption{Comparison of the average delay between two output DAGs for the algorithms
    \textcolor{ba.blue}{\textsc{meek-enum}},
    \textcolor{ba.violet}{\textsc{chickering-enum}},
    \textcolor{ba.pine}{\textsc{mcs-enum}}, and
    \textcolor{ba.orange}{\textsc{shd3-enum}}. The graphs are
    generated as described in the text with density parameter $k$ and
    the generation methods (ba), for scale-free graphs, and (rd), for
    uniformly random edge insertions.}
  \label{fig:plots}
\end{figure*}

All results in Fig.~\ref{fig:plots} show the
same pattern, namely that \textsc{mcs-enum} is superior to the other
algorithms and both \textsc{chickering-enum} and \textsc{shd3-enum} outperform
\textsc{meek-enum}. The superiority of \textsc{mcs-enum} can be easily
explained by the fact that its competitors have cost at least in the size of
the graph $O(n+m)$ after every (re)-orientation of an edge. While this
is the \emph{total cost} between two outputs for
\textsc{mcs-enum}. The algorithms \textsc{chickering-enum} and
\textsc{shd3-enum} have a very similar average delay, only the order
of the output differs (in the way that \textsc{shd3-enum}
guarantees smoothly changing DAGs). \textsc{meek-enum} has by far the
highest cost, due to the large effort arising from the repeated
completion under the Meek rules. 

Finally, we take a look at the distribution of delays, which we also
recorded during the measurements.
In Table~\ref{table:delays}, the percentage of delays $d$ with
$d \leq k \cdot \mathrm{mean}$ is given for each algorithm and different $k$
in every considered scenario.

\begin{table*}
  \centering
  \begin{tabular}{llllllll} \toprule
    Algorithm                & Scenario & $k = 1$ & $k = 2$ & $k = 3$ & $k = 5$ & $k = 7$ & $k = 10$ \\
    \cmidrule(l){1-2} \cmidrule(l){3-8}
    \textsc{meek-enum}       & UCCG     & 69.56   & 91.26   & 96.69   & 98.86   & 99.25   & 99.68 \\
                             & CPDAG    & 63.37   & 90.52   & 97.27   & 99.74   & 99.95   & 99.98 \\
                             & PDAG     & 63.61   & 90.79   & 97.35   & 99.75   & 99.95   & 99.98 \\
                             & All      & 69.35   & 91.24   & 96.71   & 98.89   & 99.27   & 99.69 \\
    \cmidrule(l) {1-2} \cmidrule(l) {3-8}
    \textsc{chickering-enum} & UCCG     & 89.83   & 99.73   & 99.86   & 99.87   & 99.88   & 99.89 \\
                             & CPDAG    & 86.04   & 99.77   & 99.79   & 99.80   & 99.81   & 99.83 \\
                             & PDAG     & 81.58   & 99.73   & 99.79   & 99.81   & 99.83   & 99.85 \\
                             & All      & 89.02   & 99.73   & 99.85   & 99.86   & 99.87   & 99.88 \\ \cmidrule(l) {1-2} \cmidrule(l) {3-8}
    \textsc{mcs-enum}        & UCCG     & 91.98   & 99.68   & 99.77   & 99.80   & 99.81   & 99.81 \\
                             & CPDAG    & 76.87   & 98.93   & 99.60   & 99.66   & 99.66   & 99.66 \\
                             & PDAG     & 74.29   & 98.42   & 98.49   & 99.63   & 99.63   & 99.63 \\
                             & All      & 89.96   & 99.56   & 99.75   & 99.78   & 99.79   & 99.79 \\ \cmidrule(l) {1-2} \cmidrule(l) {3-8}
    \textsc{shd3-enum}       & UCCG     & 77.39   & 99.37   & 99.86   & 99.87   & 99.88   & 99.89 \\
                             & CPDAG    & 67.81   & 98.90   & 99.79   & 99.80   & 99.81   & 99.83 \\
                             & PDAG     & 63.35   & 98.43   & 99.79   & 99.81   & 99.83   & 99.85 \\
                             & All      & 75.73   & 99.27   & 99.85   & 99.86   & 99.87   & 99.88 \\ \bottomrule
  \end{tabular}
  \caption{The proportion of delays (in percent) that are less or equal to multiples
    of the mean delay $\bar{d}$, i.\,e., the percentage of delays $d$ for which
    $d \leq k \cdot \bar{d}$ holds.
    We evaluated the delays for each algorithm in a specific scenario and for all
    scenarios combined.}
  \label{table:delays}
\end{table*}

It becomes evident that more than 90 percent of the measured delays are not
larger than two times the mean.
For the algorithms \textsc{chickering-enum}, \textsc{mcs-enum}, and
\textsc{shd3-enum}, the proportion is even bigger, often between 98 and 99 percent.
\textsc{meek-enum} has the biggest proportion of delays being larger than
twice the mean, which can be explained by the fact that some sub-steps
need significantly more time to apply Meek's rules than others.
Outliers being larger than seven times the mean are nearly as often as outliers
being larger than ten times the mean, showing that there are few (in fact, less
than 0.5 percent) delays that are way above the mean. We conjecture
that these infrequent outliers might be due to garbage
collection during the measurements (which becomes a noticeable factor
in millisecond measurements). 

\end{document}